\def\eqref#1{equation~\ref{#1}}
\def\1{\bm{1}}
\DeclareMathAlphabet{\mathsfit}{\encodingdefault}{\sfdefault}{m}{sl}
\SetMathAlphabet{\mathsfit}{bold}{\encodingdefault}{\sfdefault}{bx}{n}
\newtheorem{proposition}{Proposition} 
\definecolor{light-light-gray}{gray}{0.95} 
\title{Vid2World: Crafting Video Diffusion Models to Interactive World Models}
\newcommand{\bx}{\mathbf{x}}
\newcommand{\bI}{\mathbf{I}}
\newcommand{\bz}{\mathbf{z}}
\newcommand{\by}{\mathbf{y}}
\newcommand{\bepsilon}{\boldsymbol{\epsilon}}
\author{
 Siqiao Huang$^{1}$\thanks{Equal contribution.}, Jialong Wu$^{1}$\footnotemark[1], Qixing Zhou$^{2}$, Shangchen Miao$^{1}$, \textbf{Mingsheng Long}$^{1}$\textsuperscript{\Letter} \\
  $^{1}$Tsinghua University\quad$^{2}$Chongqing University\\
  {\texttt{huang-sq23@mails.tsinghua.edu.cn, wujialong0229@gmail.com}}\\
  \texttt{mingsheng@tsinghua.edu.cn}
}
\newcommand{\rev}[1]{\textcolor{black}{#1}}
\begin{document}

\maketitle

\begin{abstract}
World models, which predict future transitions from past observation and action sequences, have shown great promise for improving data efficiency in sequential decision-making. However, existing world models often require extensive domain-specific training and still produce low-fidelity, coarse predictions, limiting their usefulness in complex environments. In contrast, video diffusion models trained on large-scale internet data have demonstrated impressive capabilities in generating high-quality videos that capture diverse real-world dynamics. In this work, we present \textit{Vid2World}, a general approach for leveraging and transferring pre-trained video diffusion models into interactive world models. To bridge the gap, Vid2World systematically explores \textit{video diffusion causalization}, reshaping both the architecture and training objective of pre-trained models to enable autoregressive generation. Additionally, it incorporates a \textit{causal action guidance} mechanism to enhance action controllability in the resulting interactive world models. Extensive experiments across multiple domains, including robot manipulation, 3D game simulation, and open-world navigation, demonstrate that our method offers a scalable and effective pathway for repurposing highly capable video diffusion models into interactive world models. Code and models are available at \texttt{\textcolor{magenta}{https://knightnemo.github.io/vid2world/}.}
\end{abstract}

\section{Introduction}

World models \citep{ha2018recurrent, dawid2023introduction} have emerged as pivotal components for sequential decision-making, enabling agents to predict future states and plan actions by simulating environment dynamics. Despite their success in numerous domains, including game simulation 
\citep{dreamer, alonso2024diffusion}, autonomous driving \citep{wang2024driving}, and robotics \citep{ yang2023learning}, these models conventionally rely solely on \textbf{in-domain action-labeled data}, 
necessitating meticulous and labor-intensive data collection, yet still often yielding relatively coarse predictions with limited physical realism, constraining their applicability in complex environments.

To mitigate this data-hungry nature, recent works \citep{bar2025navigation, wu2024ivideogpt} have drawn inspiration from the success of foundation models \citep{bommasani2021opportunities}, exploring pre-training on broader, \textbf{cross-domain action-labeled data}. While this strategy improves data efficiency and generation quality to some extent, it does not solve the fundamental problem. The high cost of acquiring any form of action-labeled data persists, and the resulting models still struggle to generate visuals with high fidelity and realism. This indicates that merely expanding the scope of action-labeled data is insufficient. A more fundamental paradigm shift is thus imperative to truly unlock the full capabilities of world models.

We argue that the requisite paradigm shift is to leverage the largest yet most overlooked data source: \textbf{internet-scale action-free video data}. Abundant, easy to collect, diverse with rich world priors, these data constitute the most prominent part of the \textit{data pyramid for world models} (shown in Figure \ref{fig: overview}). While prior work \citep{gaoadaworld} has explored co-training with such data, we highlight a more direct and cost-efficient path: transferring the physical priors and generative capabilities learned by video diffusion models \citep{ho2020denoising, sora, veo} into interactive world models. This transition from data-level exploitation to model-level transfer not only avoids the prohibitive cost of training on massive video corpora but also extracts physical priors more smoothly, as non-causal generative modeling could be inherently easier than its causal counterpart.

\begin{figure}[t]
    \centering
    \includegraphics[width=\linewidth]{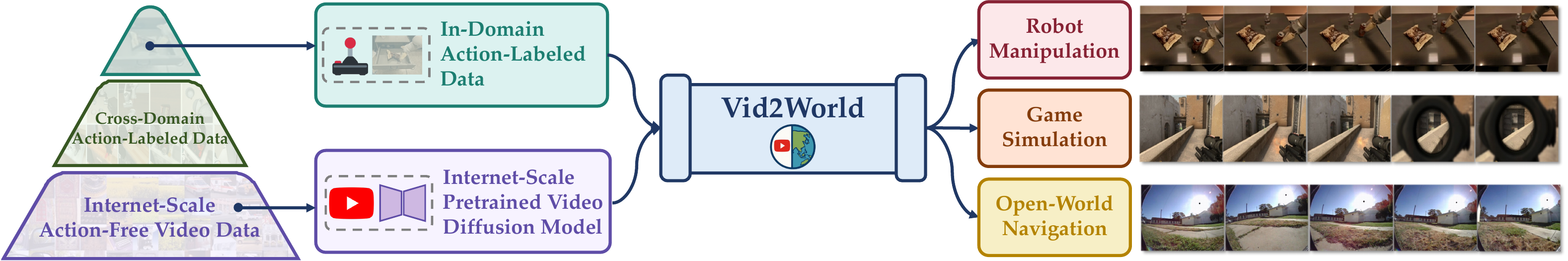}
    \caption{\textbf{Vid2World repurposes video diffusion models for interactive world modeling}. From the perspective of the \textit{data pyramid for world models}, it leverages vast pre-trained knowledge from internet-scale, action-free video data to achieve high-fidelity, action-conditioned generation across diverse downstream domains with limited interaction data.}
  \label{fig: overview}
\end{figure}

Despite profound potential, two significant challenges arise in bridging the gap between passive video diffusion models and interactive world models, as shown in Figure \ref{fig: pipeline}. The first key challenge lies in enabling \textit{causal generation}. Standard video diffusion models, designed for full-sequence denoising with bidirectional context, inherently introduce non-causal temporal dependencies. This makes them unsuitable for causal rollouts, where future predictions must strictly depend on past information. The second challenge, equally critical, is enforcing \textit{action conditioning}. While causalization enables autoregressive rollout, these models still lack the ability for counterfactual reasoning—predicting how different actions influence future states. This necessitates injecting fine-grained, frame-level action signals into the generation process. Especially in diffusion models, despite that classifier-free guidance \citep{ho2021classifier} offers the freedom of balancing sample diversity and fidelity, extending it to action guidance still requires careful algorithmic and architectural designs.

In this paper, we present \textit{Vid2World}, a general approach to effectively transform internet-scale pre-trained video diffusion models into interactive world models capable of autoregressive, action-conditioned generation. To causalize video diffusion models, we systematically explore and discover better weight transfer schemes that adapt temporal attention and convolution layers into their causal counterparts, enabling fine-tuning under a causal training objective \citep{chen2024diffusion}. For action conditioning, we inject action signals into model inputs at corresponding frames and design an extended training objective that supports action guidance during diffusion sampling at each frame in principle. We evaluate Vid2World by transferring an extensively pre-trained, 1.4B-parameter video diffusion model \citep{xing2024dynamicrafter} to diverse domains, including robot manipulation \citep{brohan2023rt}, 3D game simulation \citep{pearce2022counter}, and open-world navigation \citep{shah2022rapid}. Experimental results demonstrate significant improvements over existing transfer approaches as well as state-of-the-art world models.

To summarize, our contributions are: (1) To the best of our knowledge, we are the \textit{first} to systematically explore the problem of transferring full-sequence, non-causal, passive video diffusion models into autoregressive, interactive, action-conditioned world models. (2) We propose Vid2World, a general and effective approach for this problem, featuring novel techniques for the causalization and action conditioning of video diffusion models. (3) State-of-the-art performance of Vid2World across domains establishes new benchmarks for this critical problem and facilitates future research. 

\section{Related Works}

\paragraph{Diffusion for World Modeling.} Due to the high fidelity offered by diffusion models in image and video generation, utilizing diffusion for world modeling has garnered growing interest. Prior works fall primarily into two categories. The first treats world modeling as a conditional image generation problem, where history observation and action sequences serve entirely as conditions. While these approaches follow an autoregressive framework and have shown promise in domains such as game simulation \citep{alonso2024diffusion, oasis2024} and navigation \citep{bar2025navigation}, they typically rely on a fixed-length context window, limiting their applicability in environments that demand long-term temporal reasoning. The second category formulates the problem as a full-sequence video generation task \citep{yang2023learning, yu2025gamefactory0, zhou2024robodreamer}, often achieving better temporal coherence between frames. Yet, these models operate on full video segments, precluding autoregressive rollout, and thus hindering their use in interactive environments.

\paragraph{Leveraging Foundation Models to World Models.} Foundation models \citep{bommasani2021opportunities}, trained on large-scale and diverse data, have shown revolutionary potential across modalities such as text \citep{openai2023gpt04, guo_deepseek-r1_2025}, image \citep{rombach2022high, betker2023improving}, and video \citep{sora, veo}. In the text domain, large language models are prompted to act as world models for spatio-temporal reasoning in agentic tasks \citep{hao2023reasoning, gkountouras2024language, hu2025text2world}. In the video domain, adapting pre-trained generative models into world models typically involves architectural modifications. For instance, \citet{he2025pre0trained} integrate an action-conditioning module into the generative backbone, while \citet{rigteravid} introduce an action-aware adapter to modulate the output of a frozen video model. However, these approaches often overlook the critical need for \textit{interactivity} and \textit{temporal causality}, limiting their applicability in sequential decision-making and interactive environments.

\begin{figure}[t]
    \centering
    \includegraphics[width=\linewidth]{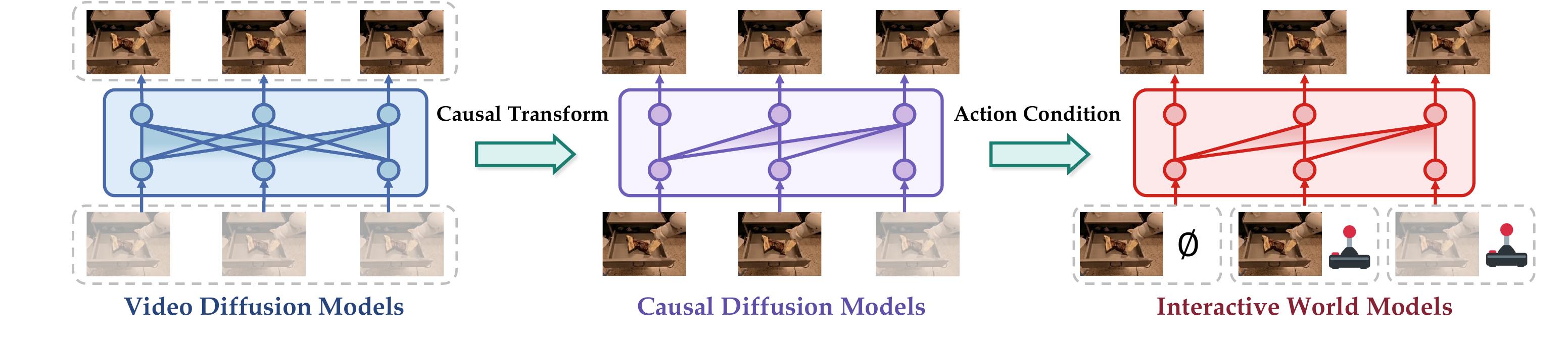}
    \caption{ \textbf{Transforming video diffusion models into interactive world models involves two key challenges}: (1) Causal generation: converting full-sequence diffusion models into causal diffusion models; (2) Action conditioning: adapting causal diffusion models into interactive world models. }
  \label{fig: pipeline}
\end{figure}

\section{Preliminaries}
\label{prelim}
 \paragraph{World Models.} A world model is an internal model learned by an agent to model the dynamics of its environment. This environment is typically formalized as a (discrete-time) Partially Observable Markov Decision Process (POMDP) \citep{pomdp}, defined over a tuple $(\mathcal{S}, \mathcal{O}, \phi, \mathcal{A}, p, r, \gamma)$. At each time step $t$, the agent receives an observation $o_t = \phi(s_t)$, where $s_t \in \mathcal{S}$ is the underlying state that satisfies the Markov property. Upon taking an action $a_t \in \mathcal{A}$, the next state is sampled from the transition distribution $p: \mathcal{S} \times \mathcal{A} \rightarrow \Delta(\mathcal{S})$, i.e. $ s_{t+1} \sim p(\cdot \mid s_t, a_t)$. In the context of world models, the agent learns to estimate this transition function through history observation and action sequence: $ p_\theta (o_{t+1}| o_{\leq t}, a_{\leq t})$. While world models can be applied to a wide range of observation modalities, including proprioceptive signals \citep{yin2025trajectory}, text \citep{wang2024can, wu2025rlvr}, 3D meshes \citep{DBLP:conf/iclr/ZhangX000U24}, and pixel-based inputs \citep{wu2024ivideogpt, zhu2025uwm}, here we focus on learning in the pixel space, where observations are defined over $\mathcal{O}= {\mathbb{R}}^{H \times W\times 3}$. 

\paragraph{Diffusion Models.} Diffusion models \citep{ho2020denoising, songdenoising} are highly expressive generative models that learn to approximate a target data distribution $q(\mathbf{x})$, where $ \bx \in \mathbb{R}^d$, by progressively denoising a Gaussian noise. At its core, the model makes use of two Markov Chains: a forward process and a backward process, to transport between the noise distribution $\bx^K \sim \mathcal{N}(0,\mathbf{I})$ and the distribution of interest $\bx^0 \sim q(\bx)$. The forward (noising) process is defined as: \[q(\bx^k|\bx^{k-1})=\mathcal{N}(\bx^k ;\sqrt{1-\beta_k}\  \bx^{k-1}, \beta_k  \ \mathbf{I}),\]where $\{\beta_k\}_{k=0}^K$ is a pre-defined noise schedule. Starting from pure noise $\bx^K \sim \mathcal{N}(0,\mathbf{I})$, the learned reverse (denoising) process aims to recreate $\bx^0 \sim q(\bx)$ using the following factorization: \[p_{\theta}(\bx^{k-1}|\bx^k)=\mathcal{N}(\bx^{k-1}; \boldsymbol{\mu}_\theta (\bx^k, k), \gamma_k\ \mathbf{I}).\] In practice, it is common to reparameterize the objective in terms of noise prediction, i.e., learning  to predict $\bepsilon^k = (\sqrt{1-\overline{\alpha}_k})^{-1} \bx^{k}-
\sqrt{\overline{\alpha}_k}\ \boldsymbol\mu$, where $\alpha_k \triangleq 1-\beta_k$ and $\overline{\alpha}_k \triangleq \prod_{i=1}^k \alpha_i$. This simplifies to minimizing the mean square loss:\[
\mathcal{L}(\theta)=\mathbb{E}_{k,\bepsilon, \bx^0}[||\bepsilon -\bepsilon_\theta(\bx^k,k)||^2],
\]
where $\bx_k=\sqrt{\overline{\alpha}_k}\ \bx^0 + \sqrt{1-\overline{\alpha}_k}\ \bepsilon$ and $\bepsilon \sim \mathcal{N}(0,\bI)$. Sampling is performed via iterative denoising through Langevin dynamics: $\bx^{k-1}\leftarrow \frac{1}{\sqrt{\alpha_k}}(\bx^k -  \frac{1-\alpha_k}{\sqrt{1-\overline{\alpha}_k}}\epsilon_\theta(\bx^k,k)+\sigma_k \mathbf{w})$.

\paragraph{Video Diffusion Models.} In video diffusion models \citep{ho2022video}, the sample $\bx$ is represented as a sequence of frames $(\bx_t^{k_t})_{1\leq t\leq T}$, where $t$ denotes the frame index and $k_t$ indicates the noise level at that frame. Conventional approaches \citep{svd} apply one uniformly sampled noise level across all frames, treating each frame identically within the denoising process. To relax this constraint, \citet{chen2024diffusion} propose to sample noise levels independently for each frame, i.e., $k_t \sim \mathcal{U}([0,K])$ during training. Intuitively, this formulation captures a more diverse set of noise level combinations across frames during training, opening up new capabilities. At inference time, the model follows a denoising schedule $\mathcal{K} \in \mathbb{R}^{M \times T}$, where $M$ is the number of denoising steps and each row $\mathcal{K}_m \in \mathbb{R}^T$ specifies the per-frame noise levels at step $M$. By setting $k_t=0$ for history frames, $k_t=K$ for masked future frames, and progressively denoising the current frame $k_\tau \in \{K,...,0\}$, the model is capable of autoregressive generation.

\section{Methods}
\label{methods}
While video diffusion models excel at generating high-fidelity, physically plausible sequences, their default formulation is fundamentally incompatible with interactive world modeling. Concretely, two key transformation barriers stand out:

\begin{enumerate}[leftmargin=2em, itemsep=0em]
    \item \textbf{Inability of causal generation}: Typical video diffusion models generate frames using \textit{bidirectional temporal context}, allowing future frames to influence the past;
    \item \textbf{Lack of action conditioning}: These models are typically conditioned on coarse, video-level inputs (e.g., text prompts) and lack mechanisms for fine-grained, frame-level action conditioning.
\end{enumerate}

To overcome these transfer barriers, we propose \textit{Vid2World} with two key modifications, \rev{contributing to a systematic, multi-level framework for converting video diffusion models into interactive world models}, as shown in Figure~\ref{fig: pipeline}. In Section \ref{cct}, we present the strategy of \textit{video diffusion causalization}, which converts non-causal architectures into temporally causal variants compatible with the post-training objective \citep{chen2023videocrafter1}, by exploring weight transfer mechanisms to maximally preserve the representations learned during pre-training. In Section~\ref{iad}, we extend the training objective to enable \textit{causal action guidance} during inference for step-wise, interactive rollouts.

\subsection{Video Diffusion Causalization}
\label{cct}

To causalize video diffusion models, modifications are required on both \textit{architectures} and \textit{training objectives}. From an architectural standpoint, while bidirectional temporal modules in standard video diffusion models, which allow information flow across all timesteps, are effective for full-sequence generation, they are fundamentally incompatible with autoregressive world modeling, where the current observation must not depend on future observations or actions. This necessitates architectural surgery to enforce temporal causality, specifically in the computation and parameters of temporal attention \citep{hacohen2024ltx0video0} or non-causal convolutions \citep{svd, guo2024animatediff}.

\paragraph{Temporal Attention Layers.} 

Non-causal temporal attention layers can be converted into their causal counterparts by straightforwardly applying causal masks. Since attention operates through dot products between queries and keys, it is inherently adaptive to variable sequence lengths; therefore, restricting the receptive field to exclude future frames does not alter the underlying computation of inter-token relationships. Consequently, this does not mandate parametric modifications.

\paragraph{Temporal Convolution Layers.} 

In contrast, causalizing temporal convolution layers is more challenging. These layers employ symmetric kernels that aggregate features from both past and future frames, and simple adaptations may lead to suboptimal utilization of pre-trained kernel weights. To achieve this, we systematically investigate three different strategies, as detailed below.

A naive approach, which we term \textbf{Shift Weight Transfer}, directly reuses the full pre-trained kernel $\{w_t\}_{t=-m}^{m}$ by shifting it $m$ steps into the past, resulting in a new causal kernel $\{w'_t\}_{t=-2m}^{0}$. While this preserves all kernel weights, it introduces \textit{temporal misalignment}: the kernel’s $i$-th position now aggregates information at timestep $i-m$, giving no guarantees of producing similar representations. 

An alternative strategy, \textbf{Masked Weight Transfer}, truncates the kernel by retaining only the weights corresponding to past and present timesteps $\{w_t\}_{t=-m}^0$ while setting the rest to zero $\{w'_t\}_{t=-2m}^{-m-1}\equiv 0$. This resembles applying a hard causal mask to the kernel at initialization. Although causality is enforced, it discards potentially useful information encoded in the future-facing weights.
\begin{figure}[t]
    \centering    \includegraphics[width=\linewidth]{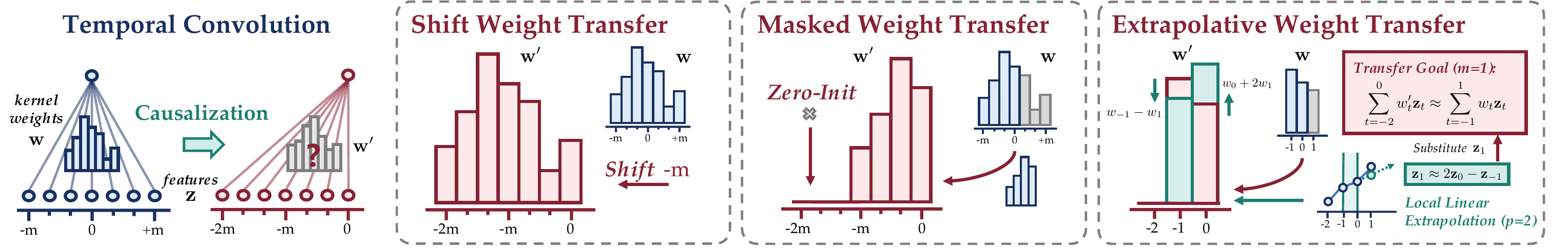}
    \caption{\textbf{Illustration of weight transfer mechanisms for temporal convolution layers}: (1) \textit{Shift}: shifts all weights into the past. (2) \textit{Masked}: retains only past weights. (3) \textit{Extrapolative}: leverages local linear feature relationships more in principle(example shown with $m=1, p=2$).
    }
  \label{fig:weight_transfer}
\end{figure}

Finally, we propose a more principled and robust mechanism, \textbf{Extrapolative Weight Transfer}, based on local linear extrapolation of features along the temporal dimension. Formally, we posit that the feature at a future timestep $\bz_{t+k}$ can be linearly approximated over a window of $p$ past timesteps:
\[
\bz_{t+k}\approx \sum\nolimits_{j=0}^{p-1} \gamma_{k,j}\ \bz_{t-j} + \bm{\beta}_k,\]
where $\bm{\gamma}_{k, \cdot}, \bm{\beta}_k$ are determined by a linear extrapolation from the past $p$ features. Our core principle is to maximally preserve the output representation of the original convolution, such that the new causal computation produces a similar result to the original non-causal one: 
\[
\sum\nolimits_{i=-m}^m w_i \bz_{t+i} + \mathbf{b}= \sum\nolimits_{j=-2m}^0 w'_{j} \bz_{t+j} + \mathbf{b}'.
\]
This is achieved by re-distributing the weights $\{w_i\}_{i>0}$ that originally acted on future frames, back onto the past part of the kernel,  according to the linear feature relationships:
\begin{align*}
w'_j &=\mathbf{1}_{\left[j\geq -m\right]} \cdot  {w}_j + \mathbf{1}_{\left[-p+1\leq j \leq 0\right]} \cdot \sum\nolimits_{i=1}^m \gamma_{i,-j} w_i, \quad \mathbf{b}'= \mathbf{b}+ \sum\nolimits_{i=1}^m w_i \bm{\beta}_i.
\end{align*}
These architectural adaptation strategies are illustrated in Figure \ref{fig:weight_transfer}, with a didactic example of Extrapolative Weight Transfer for $m=1, p=2$. A detailed mathematical derivation and analysis of the error bounds are provided in Appendix \ref{app:mwt_derivation}.

\paragraph{Training Objectives for Causal Generation.} Architectural changes alone are insufficient to enable causal generation. In this setting, future frames are predicted step by step, conditioned on previously fully denoised frames, i.e., under noise levels $(k_t)_{t=1}^T = (0,0,\dots,0,\,k), \ k \in \{0,\dots,K\}$. Hence, the model must be trained to handle these inference-time noise-level distributions. In conventional video diffusion models, the training procedure follows a \textit{homogeneous noise schedule}, where all frames share the same noise level. This limited subset of noise-level combinations makes them naturally incompetent for noise levels at autoregressive inference. Therefore, it becomes vital to train the model with different noise levels across frames. Here, we adopt \textit{Diffusion Forcing} \citep{chen2024diffusion}, where we sample noise levels to be independent and uniform in different frames, i.e., $k_t \sim \mathcal{U}(0,K), \forall t$. This training scheme exposes the model to the full space of noise-level combinations in the history frames, thereby enabling flexible and robust causal rollouts.

\subsection{Causal Action Guidance \protect\footnote[1]{\rev{While throughout the paper, we primarily use ``causal'' to denote \textit{temporal causality}, in this section, our mechanism implicitly involves \textit{interventional causality} via action guidance. See Appendix \ref{app:causal_discussion} for details.}}}
\label{iad}

Causal video diffusion models  alone are not yet interactive world models, as they still fall short of action-conditioned generation. Prior works \citep{alonso2024diffusion, bar2025navigation} \rev{primarily} explore integrating action condition at the \textit{video level}, where the entire action sequence is encoded to a single embedding, analogous to text embeddings. However, this approach has two major drawbacks: (a) Lacking the ability to perform fine-grained, frame-level action-conditioned predictions; (b) Incompatibility with interactive settings, where actions arrive sequentially in an online fashion during inference. \rev{In this section, we introduce both conceptual and methodological advances that extend action conditioning into \textit{causal action guidance}, enabling principled steering of the generative process toward action-aligned predictions in interactive environments.}
\begin{figure}[t]
  \centering   \includegraphics[width=\linewidth]{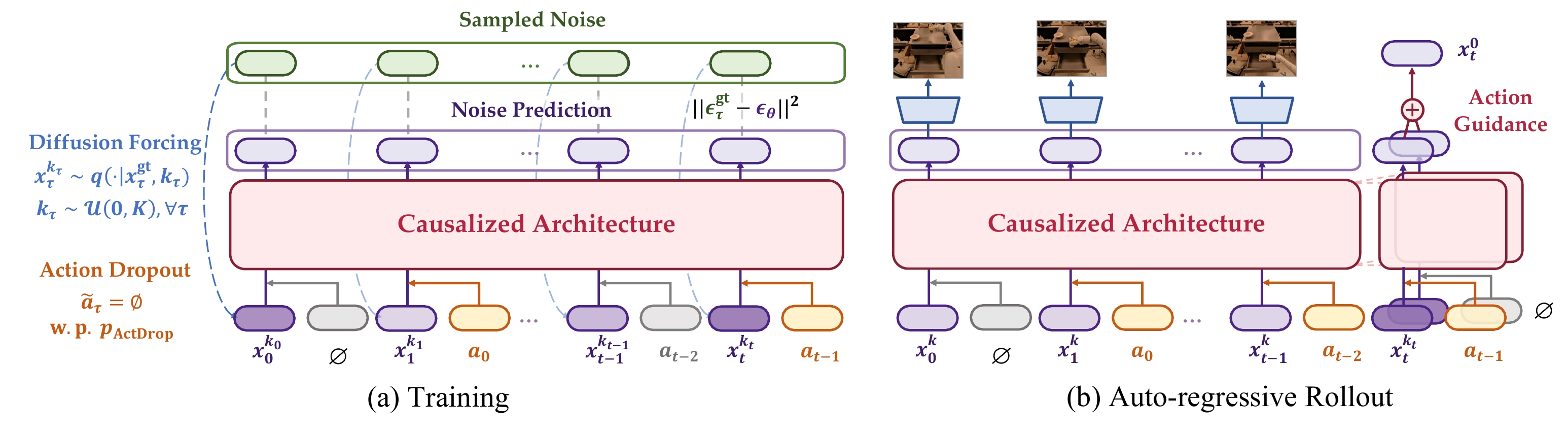}
   \caption{\textbf{Training and sampling of Vid2World}, initialized by architecture causalization. (a) During training, we add independently sampled noise levels to each frame, as well as randomly drop out each action with a fixed probability. (b) For autoregressive rollout, we denoise the latest frame while setting history clean. Action guidance is added for the current action. See Appendix \ref{app:model_detail} for details.}
   \label{fig:overview}
\end{figure}
\paragraph{Causal Action Injection.} To address these limitations, we equip the model with \textit{frame-level} conditions through architectural modifications. When predicting $o_t$, the embedding of $a_{t-1}$ is added to the model’s latent representation at temporal position $t$. This allows each frame to be conditioned directly on its preceding action in a temporally aligned manner, opening up the potential for precise, fine-grained control in interactive settings. Specifically, this is implemented by feeding the action inputs into the denoising network using a lightweight multi-layer perceptron \citep{haykin1994neural}. 
\paragraph{Training and Sampling with Guidance.} For more targeted control over the generated dynamics, we adopt classifier-free guidance \citep{ho2021classifier} in our autoregressive action-conditioned setting, realizing \textit{Causal Action Guidance}. Classifier-free guidance trains a model to jointly learn a conditional and an unconditional score function, allowing for amplified guidance at inference time by steering the output toward the conditional distribution. In our setup, the score function $\epsilon_\theta([\bx^{k_\tau}_\tau], [\mathbf{a}_\tau], [k_\tau])$ takes in a tuple of noised observations $[\bx^k_\tau]$, actions $[\mathbf{a}_\tau]$, noise levels $[k_\tau]$ as input, and the conditioned variable is the most recent action. Therefore, the model should be capable of capturing both the conditional score function: $\bepsilon_{\text{cond}}=\bepsilon_{\theta}([\bx_\tau^{k_{\tau}}]_{\tau\leq t}, [\mathbf{a}_\tau]_{\tau<t}, [k_\tau]_{\tau\leq t})$, as well as its unconditional counterpart, where the most recent action is masked: $\bepsilon_{\text{ucond}}=\bepsilon_{\theta}([\bx_\tau^{k_{\tau}}]_{\tau\leq t}, [\mathbf{a}_{\tau<t-1}, \varnothing ], [k_\tau]_{\tau\leq t})$.

To achieve this, we extend our training objective by incorporating an \textit{action dropout} mechanism, where $\tilde{a}_t$ for each timestep $t$ is independently dropped with a fixed probability $p$:
\[
\mathcal{L}(\theta)=\mathbb{E}_{[k_\tau],\bepsilon, [\bx^0_\tau], [\tilde{\mathbf{a}}_\tau]}\left[\sum\nolimits_{t=0}^T||\bepsilon_t - \bepsilon_\theta ([\bx_\tau^{k_{\tau}}]_{\leq t}, [\tilde{\mathbf{a}}_{\tau}]_{<t}, [k_\tau]_{\leq t})||^2\right],\ 
\tilde{\mathbf{a}}_t=\begin{cases}
    \varnothing, \quad \text{w.p.}\ \ p, \\
    \mathbf{a}_t,\quad \text{otherwise.}
\end{cases}
\]
At its core, this mechanism compels the model to learn score functions conditioned on all subsets of the action sequences, including the effect of the immediate action on the predicted transition. This, in turn, enables classifier-free guidance for sampling via:
$\bepsilon_{\text{guided}} = (1+\lambda) \cdot \bepsilon_{\text{cond}} - \lambda \cdot \bepsilon_{\text{ucond}},$
where $\lambda \in \mathbb{R}^+$ is the guidance scale. \rev{Intuitively, since the score function represents the gradient of the log-probability, this linear composition in the score space corresponds to a multiplicative steering mechanism in the probability space. We formalize this equivalence in the following theorem.}
\rev{\begin{restatable}[\textbf{Causal Action Guidance as Probability Steering}]{theorem}{CausalSteering}
\label{theorem:steering}
Let $\mathcal{H}_t := ([\mathbf{x}_\tau]_{\tau<t}, [\mathbf{a}_\tau]_{\tau < t-1})$ denote the history context excluding the current action. Under standard score-based formulation,, the proposed score composition is mathematically equivalent to sampling from the following steered posterior distribution, where $\omega \propto (1+\lambda)$ is a fixed constant:
\begin{equation*}
    \tilde{p}(\mathbf{x}_t \mid \mathbf{a}_{t-1}, \mathcal{H}_t) \propto \underbrace{p(\mathbf{x}_t \mid \mathcal{H}_t)}_{\text{History-Consistent Prior}} \cdot \underbrace{\left( \frac{p(\mathbf{x}_t \mid \mathbf{a}_{t-1}, \mathcal{H}_t)}{p(\mathbf{x}_t \mid \mathcal{H}_t)} \right)^{\omega}}_{\text{Action Alignment}} \propto p(\mathbf{x}_t \mid \mathcal{H}_t) \cdot p(\mathbf{a}_{t-1} \mid \mathbf{x}_t, \mathcal{H}_t)^{\omega}.
\end{equation*}\end{restatable}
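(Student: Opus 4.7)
The plan is to follow the standard classifier-free guidance derivation, translating the linear composition of score estimates into the gradient of a single log-density, and then re-expressing that density via Bayes' rule to obtain the second equivalence. Three ingredients suffice: (i) the score-function interpretation of the noise-prediction head, (ii) the fact that the action-dropout objective of Section~\ref{iad} recovers the correct marginal over $\mathbf{a}_{t-1}$ at its optimum, and (iii) Bayes' rule on $(\bx_t, \mathbf{a}_{t-1})$ conditioned on $\mathcal{H}_t$.

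First, I would invoke the score interpretation: under the variance-preserving parameterization from Section~\ref{prelim}, the optimal denoiser satisfies $\bepsilon_\theta(\bx^k,\cdot) = -\sqrt{1-\overline{\alpha}_k}\,\nabla_{\bx^k}\log p(\bx^k \mid \cdot)$, so that (with the \emph{same} noise-schedule constant in front)
$$\bepsilon_{\text{cond}} \propto -\nabla_{\bx_t}\log p(\bx_t \mid \mathbf{a}_{t-1}, \mathcal{H}_t), \qquad \bepsilon_{\text{ucond}} \propto -\nabla_{\bx_t}\log p(\bx_t \mid \mathcal{H}_t).$$
Substituting into $\bepsilon_{\text{guided}} = (1+\lambda)\bepsilon_{\text{cond}} - \lambda\,\bepsilon_{\text{ucond}}$ and collecting terms yields, with $\omega = 1+\lambda$,
$$\bepsilon_{\text{guided}} \propto -\nabla_{\bx_t}\Bigl[\log p(\bx_t \mid \mathcal{H}_t) + \omega \log \tfrac{p(\bx_t \mid \mathbf{a}_{t-1}, \mathcal{H}_t)}{p(\bx_t \mid \mathcal{H}_t)}\Bigr],$$
so the Langevin sampler driven by $\bepsilon_{\text{guided}}$ targets the first displayed density up to an $\bx_t$-independent normalizer. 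The second equivalence then follows from Bayes' rule $p(\bx_t \mid \mathbf{a}_{t-1}, \mathcal{H}_t)/p(\bx_t \mid \mathcal{H}_t) = p(\mathbf{a}_{t-1} \mid \bx_t, \mathcal{H}_t)/p(\mathbf{a}_{t-1} \mid \mathcal{H}_t)$, since $p(\mathbf{a}_{t-1}\mid\mathcal{H}_t)$ does not depend on $\bx_t$ and is absorbed into the proportionality constant.

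The main obstacle is the rigorous identification of $\bepsilon_{\text{ucond}}$ with $\nabla\log p(\bx_t \mid \mathcal{H}_t)$: this requires arguing that, at the optimum of the action-dropout score-matching loss defined in Section~\ref{iad}, the masked branch recovers the correct marginalization over $\mathbf{a}_{t-1}$ rather than some spurious object. This hinges on the dropout mask being sampled independently of the target frame and on both branches sharing parameters, so that the $\bx_t$-independent scale factors in the two score expressions coincide and cancel in the linear combination. A secondary subtlety is verifying that the steered field remains a normalizable gradient at every noise level, so that the Langevin chain admits $\tilde{p}$ as its stationary law; this holds under mild regularity of $p$ and $\omega \ge 0$, but is worth flagging.
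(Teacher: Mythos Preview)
Your proposal is correct and follows essentially the same approach as the paper's proof: both invoke the score interpretation of the noise predictor, identify $\bepsilon_{\text{cond}}$ and $\bepsilon_{\text{ucond}}$ with the conditional and marginal scores, combine them under the guidance rule, and apply Bayes' rule to obtain the classifier form. The only cosmetic difference is ordering---the paper applies Bayes first to rewrite $\bepsilon_{\text{cond}} = \bepsilon_{\text{ucond}} - \gamma\nabla_{\bx_t}\log p(\mathbf{a}_{t-1}\mid\bx_t,\mathcal{H}_t)$ and then substitutes, arriving at the second displayed form directly before integrating and exponentiating, whereas you substitute first and reach the ratio form before invoking Bayes---and your additional remarks on the dropout optimum and normalizability are rigor considerations the paper does not address.
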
}
\rev{In other words, the guidance term acts as an implicit classifier, steering the generation towards regions aligned with the user's most recent action, while preserving the high-fidelity generation capabilities in sequential modeling. As a result,} through varying $\lambda$, the model is equipped with the test-time flexibility of controlling responsiveness towards fine-grained action variations. Ultimately, this transformation better aligns the model with its core objective of world modeling—not merely to capture average behavioral trends, but to reason about an agent's immediate actions. \rev{We provide the detailed proof for Theorem \ref{theorem:steering} in Appendix \ref{app:steering}.}

\paragraph{Summary.} Vid2World transfers full-sequence, passive video diffusion models into autoregressive, interactive world models. Through \textit{video diffusion causalization}, we open up the model's capability to perform causal generation, and through \textit{causal action guidance}, we incorporate and strengthen action signals for interactive settings. Pseudocode of our approach is provided in Algorithms \ref{alg:training} and \ref{alg:ar_sampling}.

\section{Experiments}
\label{sec:exp}

We leverage \textit{DynamiCrafter} \citep{xing2024dynamicrafter}, a state-of-the-art 1.1B U-Net-based video diffusion model pre-trained on internet-scale videos, as our base model. We evaluate Vid2World across multiple domains, spanning real-robot manipulation, 3D game simulation, and open-world navigation. 
Results show that the transferred models can not only achieve high-fidelity video predictions but also support downstream tasks in decision-making, showcased by real-to-sim policy evaluation. 

\subsection{Vid2World for Robot Manipulation}
\label{subsec:rt-1}

Robot manipulation is an ideal test for world models, demanding action-conditioned predictions that are both visually realistic and causally faithful under real-world physical constraints.

\paragraph{Setup.} We utilize the RT-1 dataset \citep{brohan2023rt}, a collection of real-world robotic experiences spanning diverse manipulation tasks such as picking, placing, and drawer operation. Our base model under extrapolative weight transfer is post-trained for 100k gradient steps ($\sim$ 7 days on 4$\times$ A100 GPUs), with two inference variants: (1) \textit{Vid2World-NAR}, which follows conventional video diffusion models and baseline methods by denoising all frames simultaneously in a non-autoregressive manner, under homogeneous noise levels; and (2) \textit{Vid2World}, which denoises frames autoregressively with proposed action guidance. Evaluation uses standard video generation metrics, including FVD \citep{unterthiner2018towards}, FID \citep{heusel2017gans}, SSIM, PSNR, LPIPS \citep{zhang2018unreasonable}, and DreamSim \citep{fu2023dreamsim}.
Implementation details can be found in Appendix~\ref{sec:rt1-setup}.

\paragraph{Baselines.} We compare against a variety of baselines introduced by \cite{rigteravid} that build upon the same base model but utilize different transfer approaches, including Action-Conditioned Fine-tuning, Language-Conditioned Fine-tuning, ControlNet \citep{zhang2023adding}, Classifier Guidance, and AVID \citep{rigteravid}. 
Details are shown in Appendix \ref{app:rt-1baselines}.

\paragraph{Results.} As shown in Table \ref{tab:combined_results}, Vid2World demonstrates strong quantitative performance across both non-autoregressive and autoregressive settings, outperforming or matching other transfer methods. In the non-autoregressive setting, it delivers superior or comparable results compared to all prior methods. Even in the autoregressive generation setup, where other baselines are not capable of doing so, Vid2World still attains superior performance in FVD and FID, as well as on par performance to previous best methods in other metrics, showcasing its strong capabilities in world modeling.

\paragraph{Application: Real2Sim Policy Evaluation.}

We further conduct Real2Sim Policy Evaluation to demonstrate our method's potential to aid downstream decision-making.
Following SIMPLER \citep{li2025evaluating}, our goal is to estimate the performance of a given policy by interacting with the world model, rather than the real world. This requires the model to perform autoregressive rollouts and faithfully predict the diverse outcomes induced from different policies. The procedure is summarized in Algorithm \ref{alg:ope}. We evaluate on the task of closing drawers and consider three policy checkpoints from RT-1 \citep{brohan2023rt}: \textit{RT-1 (Begin)}, \textit{RT-1 (15\%)}, and \textit{RT-1 (Converged)}, representing different stages of training. Human evaluation is used as the verifier to annotate trajectory success. As shown in Figure \ref{fig:policy_eval}, Vid2World reliably reflects the performance gap among policies, closely tracking their real-world success trends. Further details can be found in Appendix \ref{app:simpler}.

\begin{table}[t]
\rowcolors{1}{white}{light-light-gray} 
\setlength\tabcolsep{5pt} 
\renewcommand{\arraystretch}{1.2}
\small
\centering
\caption{\textbf{World modeling performance across various domains.} Best performances are in \textbf{bold}, second best are \underline{underlined}. Dash (-) indicates the metric was not originally evaluated for that dataset. $^*$Autoregressive prediction. $^\dag$Non-autoregressive prediction. $^\ddag$One-step prediction.}
\label{tab:combined_results}
\begin{tabular}{lcccccc}
\toprule
\textbf{Model} & \textbf{FVD $\downarrow$} & \textbf{FID\footnotemark $\downarrow$} & \textbf{SSIM $\uparrow$} & \textbf{LPIPS $\downarrow$} & \textbf{PSNR $\uparrow$} & \textbf{DreamSim $\downarrow$} \\
\midrule
\multicolumn{7}{c}{\faRobot \hspace{0.5em} \textit{Robot Manipulation: RT-1}} \\
\midrule
Pre-trained Base Model$^\dag$ & 237.6 & \textcolor{gray}{5.432} & 0.712 & 0.228 & 20.6 & - \\
Classifier Guidance$^\dag$ & 213.1 & \textcolor{gray}{6.005} & 0.683 & 0.250 & 19.8 & - \\
ControlNet$^\dag$ & 27.1 & \textcolor{gray}{3.248} & 0.836 & 0.148 & 24.5 & - \\
Action-Conditioned$^\dag$ & 24.2 & \textcolor{gray}{2.965} & \underline{0.852} & \textbf{0.134} & \underline{25.6} & - \\
Language-Conditioned$^\dag$ & 33.7 & \textcolor{gray}{3.511} & 0.812 & 0.177 & 22.1 & - \\
AVID$^\dag$ & 39.3 & \textcolor{gray}{3.436} & 0.842 & 0.142 & 25.3 & - \\
Vid2World-NAR$^\dag$ & \underline{18.7} & \underline{5.871} & \textbf{0.856} & \underline{0.140} & \textbf{25.8} & \textbf{0.048} \\
Vid2World$^*$ & \textbf{18.5} & \textbf{5.806} & 0.842 & 0.152 & 24.6 & \underline{0.054} \\
\midrule
\multicolumn{7}{c}{\faGamepad \hspace{0.5em} \textit{3D Game Simulation: CS:GO}} \\
\midrule
DIAMOND-Fast$^*$ & 577.1 & 115.6 & \underline{0.449} & 0.547 & 18.2 & 0.2817 \\
DIAMOND-HQ$^*$ & \underline{368.5} & \underline{87.2} & 0.447 & \underline{0.510} & \underline{18.3} & \underline{0.2416} \\
Vid2World$^*$ & \textbf{106.6} & \textbf{17.5} & \textbf{0.481} & \textbf{0.404} & \textbf{18.7} & \textbf{0.135} \\
\midrule
\multicolumn{7}{c}{\faCompass \hspace{0.5em} \textit{Open-World Navigation: RECON}} \\
\midrule
NWM (1B)$^\ddag$ & \rev{\textbf{31.2}} & \rev{\textbf{34.1}} & \rev{\underline{0.389}} & \textbf{0.295} $\pm$ 0.002 & \underline{15.343} $\pm$ 0.060 & \textbf{0.091} $\pm$ 0.001 \\
NWM + Ego4D (1B)$^\ddag$ & \rev{\underline{41.0}} & \rev{\underline{34.9}} & \rev{0.361} & 0.368 $\pm$ 0.003 & 14.072 $\pm$ 0.075 & 0.138 $\pm$ 0.002 \\
Vid2World$^*$ & \text{59.4} & \text{42.9} & \textbf{0.481} & \underline{0.3236} & \textbf{16.10} & \underline{0.108} \\
\bottomrule
\end{tabular}
\end{table}
\footnotetext{In the \href{https://github.com/microsoft/causica/blob/main/research_experiments/avid/libs/avid_utils/avid_utils/metrics.py}{publicly released code of AVID} \citep{rigteravid}, the FID scores are computed without setting the Inception model to evaluation mode, making it artificially lower. These results are shown in gray accordingly.}
\subsection{Vid2World for 3D Game Simulation}
\label{subsec:csgo}

Game simulation is a key application for world models and \textit{neural game engines} \citep{bamford2020neural, bruce2024genie, oasis2024} have attracted growing attention. It is particularly challenging due to complex temporal dynamics and strong action dependence, involving rapid viewpoint shifts, contact-rich interactions, and fine-grained motion patterns that demand reasoning over causally entangled visual-temporal cues.

\paragraph{Setup.} We evaluate \textit{Vid2World} on the celebrated 3D video game \textit{Counter-Strike: Global Offensive} (CS:GO) using the online gameplay dataset from \citet{pearce2022counter} (5.5M frames, 95 hours), with the exact 0.5M-frame holdout set from DIAMOND \citep{alonso2024diffusion} for testing. DIAMOND \citep{alonso2024diffusion}, a state-of-the-art autoregressive world model, generates the next frame conditioned on a fixed number of previous observations and actions. Following its setup with 4 conditioning frames, we initialize with four history frames, and autoregressively generate frames until a sequence length of 16. Evaluation metrics are the same as Section \ref{subsec:rt-1}, computed on predicted frames excluding conditioning frames. More details are listed in Appendix \ref{app:csgo}.

\paragraph{Results.} As shown in Table \ref{tab:combined_results}, Vid2World outperforms both configurations of DIAMOND across all evaluation metrics with a significant margin, including a 79.9\% relative performance improvement in FID and a 71.1\% performance gain in FVD compared to the best baseline configuration. These results demonstrate the superior visual fidelity and semantic consistency of our method, showcasing potential for leveraging video diffusion models to interactive neural game engines.

\begin{figure}[t]
    \centering
    \begin{minipage}[t]{0.48\textwidth}
        \centering
        \includegraphics[width=\textwidth]{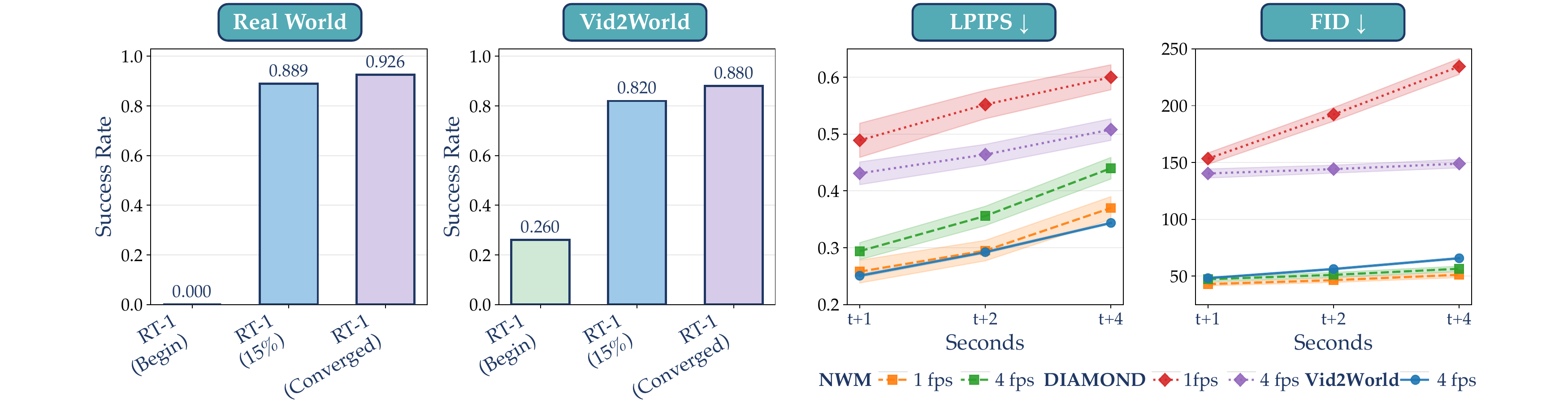}
        \caption{Vid2World for real2sim policy evaluation, validated by real-world evaluation.}
        \label{fig:policy_eval}
    \end{minipage}
    \hspace{0.02\textwidth} 
    \begin{minipage}[t]{0.48\textwidth}
        \centering
        \includegraphics[width=\textwidth]{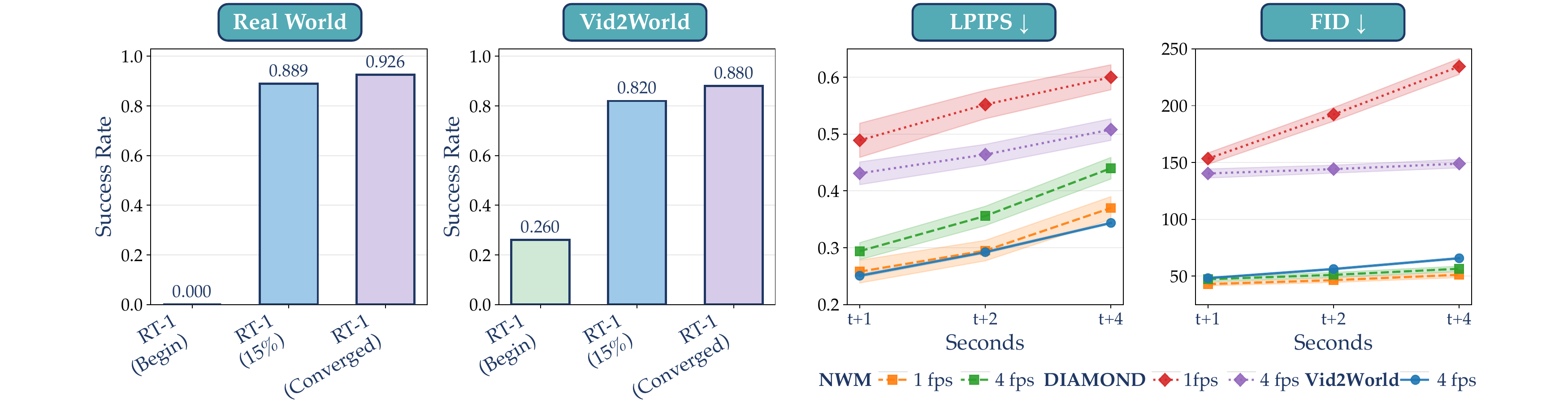}
        \caption{Comparison of autoregressive rollout for open-world navigation.}
        \label{fig:nwm_ar}
    \end{minipage}
\end{figure}

\begin{table}[t]
\caption{\textbf{Ablation study on two components of our proposed method}: the choice of Weight Transfer (WT) mechanisms and the use of Action Guidance (AG).}
\centering

\begin{tabular}{lccccccc}
\toprule
\small
\centering{\textbf{{Model}}} & \textbf{WT} & \textbf{AG} &
  \textbf{{FVD $\downarrow$}} &
  \textbf{{FID $\downarrow$}} &
  \textbf{{SSIM $\uparrow$}} &
  \textbf{{LPIPS $\downarrow$}} &
  \textbf{{PSNR $\uparrow$}} 
  \\ \midrule
Vid2World & Shift & &
29.9 &
7.85 &
0.799 & 
0.185 
 &
21.5 \\
Vid2World & Masked & &
29.4 &
7.07 &
0.824 & 
0.169
 &
22.9 \\
Vid2World & Extrapolative & &
28.6 &
7.52 &
0.832 & 
0.162
 &
23.4 \\
Vid2World &Masked & \checkmark &
25.8 &
6.84 &
\textbf{0.840} & 
\textbf{0.159} 
 &
\textbf{23.9} \\
Vid2World & Extrapolative & \checkmark &
\textbf{22.4} &
\textbf{6.16} &
0.839 & 
\textbf{0.159}
 &
 \textbf{23.9} \\\bottomrule
\end{tabular}
\label{tab:ablation_results}
\end{table}
\begin{figure}[t]
  \centering
   \includegraphics[width=\linewidth]{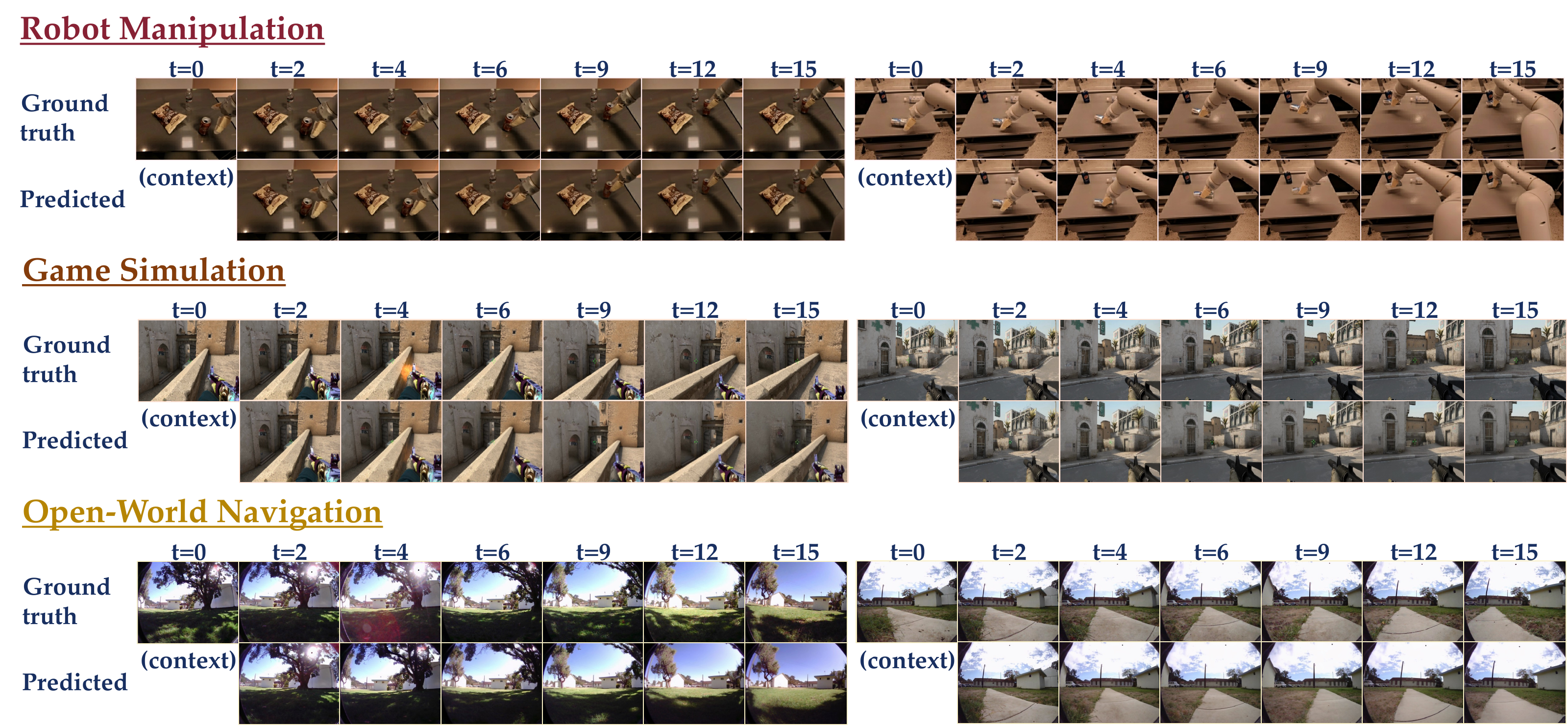}
   \caption{\textbf{Qualitative evaluation of Vid2World across various domains.} Zoom in for details. Extended examples can be found in Appendix \ref{app:c}.}
   \label{fig:onecol}
\end{figure}
\begin{figure}[t]
    \centering    \includegraphics[width=\linewidth, height=0.2\linewidth]{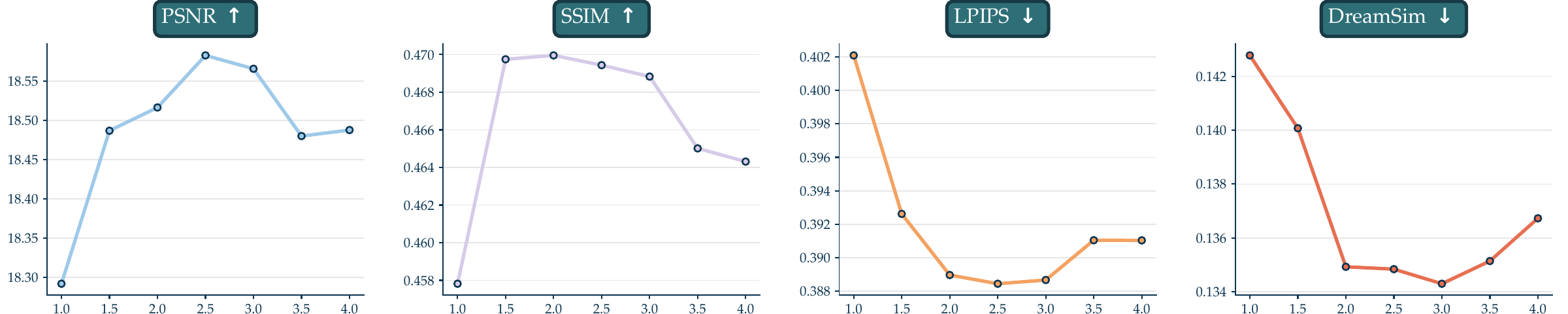}
    \caption{\rev{\textbf{Video Prediction Metrics as a function of Causal Action Guidance Scale ($\lambda$) in the CS:GO environment}. While increasing $\lambda$ initially improves performance by enforcing action alignment, excessive guidance leads to degradation due to over-sharpening artifacts.}}
    \label{fig:cfg_search}
\end{figure}
\vspace{-5pt}

\subsection{Vid2World for Open-World Navigation}

Open-world navigation is a fundamental capability for autonomous agents, with broad applications to autonomous driving \citep{yuan2025opennav0} and robotics \citep{shah2022rapid}. 

\paragraph{Setup.} We evaluate on the RECON dataset \citep{shah2022rapid}, which uses a 3D ($x$, $y$, $\text{yaw}$) action space. Comparisons are made against two leading baselines, Navigation World Model (NWM) \citep{bar2025navigation} and DIAMOND \citep{alonso2024diffusion}. We also include NWM (+Ego4D), a variant of NWM co-trained with action-free videos, aiming for out-of-domain generalization. Unlike our model, which is restricted to sequential, autoregressive generation, NWM explicitly conditions on the prediction timestep $t$, allowing single-step prediction of a distant future frame. Accordingly, we evaluate against both baseline setups: single-step prediction and autoregressive rollout, at the dataset's native 4 fps rate. \rev{Especially in the autoregressive setup, by comparing our model to well-established baselines such as NWM and Diamond at different timesteps, this offers clear insight into our model's robustness against error accumulation.} More details are listed in Appendix \ref{app:open_world_navigation}.

\paragraph{Results.} In the single-step prediction setting (Table \ref{tab:combined_results}), Vid2World achieves performance on par with NWM and surpasses NWM (+Ego4D) \rev{for 4 out of 6 evaluated metrics}, even under error accumulation from autoregressive rollouts. \rev{This is a particularly surprising result since NWM is a state of the art model, and trained with significantly more computation and bypassing error accumulation via single-step prediction in this setup. It's also worth noting that} our model's prediction horizon of 16 frames combined with a history length of 4 results in a total context length of 20—exceeding the training horizon of 16—demonstrating strong temporal generalization. In the autoregressive rollout setup (Figure \ref{fig:nwm_ar}), our model consistently produces predictions that are superior or comparable to NWM, while significantly outperforming DIAMOND baselines. Taken together, these results highlight the effectiveness of Vid2World in leveraging rich priors from action-free video data, obviating the prohibitive data requirements and training costs associated with pre-training on cross-domain action-labeled datasets.

\subsection{Ablation Study}
\label{subsec:ablation}
To verify the effectiveness of components in our method, we focus on two questions: \textit{(1) How critical is action guidance? (2) Which weight transfer mechanisms do best transfer?} Due to limited computational budgets, all models are trained for 30k gradient steps. Regarding the first question, as shown in Table \ref{tab:ablation_results}, we observe that for both Extrapolative Weight Transfer and Masked Weight Transfer, enforcing action guidance yields better performance compared to their counterpart, which have never dropped out action in training and inference. We further validate the impact of guidance scale $\lambda$ on generation quality in Figure \ref{fig:cfg_search}, detailed in Appendix \ref{subsec:cfg_search}. Regarding the second question, both Masked and Extrapolative Weight Transfer yield better performance than Shift Weight Transfer, and utilizing Extrapolative Weight Transfer yields slightly better outcomes compared to Masked Weight Transfer. Hence, both techniques play a dominant role in the superior performance of Vid2World.

\section{Conclusion}
\label{conc}

In this work, we transform passive video diffusion models into interactive world models. We propose Vid2World, introducing two key mechanisms—video diffusion causalization and causal action guidance—to support autoregressive, action-conditioned generation. Extensive experiments demonstrate that Vid2World achieves state-of-the-art performance in world modeling tasks and also effectively supports downstream decision-making. While this work marks a successful first attempt, it leaves plentiful space for further exploration. First, due to computational resource constraints, we are limited to employing a relatively lightweight video diffusion model as the base model. We envision that exploring larger-scale models \citep{nvidia2025cosmos, peng2025open0sora} may lead to better performance. Second, the training process remains relatively time-consuming. We look forward to future methods that can achieve comparable or even superior performance with fewer training steps.

\subsubsection*{Ethics Statement} 

This work complies with the ICLR Code of Ethics. Human evaluations were conducted with informed consent from volunteers. All datasets and pretrained models are publicly available and used under their respective licenses. We have carefully designed our experiments to minimize risks of bias or discrimination. We uphold the principles of transparency and integrity throughout the work.

\subsubsection*{Reproducibility Statement}

We have taken concrete steps to ensure that our results are reproducible. All datasets and models used in this work are publicly available and do not require special access. We have included detailed descriptions of our experimental setup, including detailed algorithmic descriptions (Appendix \ref{app:pseudocode}), training and inference procedures (Appendix \ref{app:model_detail}), hyperparameter configurations (Table \ref{tab:hyperparams}), and evaluation protocols (Appendix \ref{app:exp_details}). We have released the full source code, configuration files, and trained model checkpoints at \texttt{\textcolor{magenta}{https://knightnemo.github.io/vid2world/}} to ensure full reproducibility and future extensions.

\subsubsection*{Acknowledgments}

The authors would like to thank Chaoyi Deng, Ningya Feng, Xingzhuo Guo and Yuchen Zhang for helpful discussions and instructive suggestions. This work was supported by the Natural Science Foundation of China (U2342217), the Fundamental and Interdisciplinary Disciplines Breakthrough Plan of the Ministry of Education of China (JYB2025XDXM803), the Beijing Scholar Program, and the National Engineering Research Center for Big Data Software.

\newpage
\appendix

\section{Theoretical Justifications \rev{ and Extended Discussion}}
\label{app:mwt_derivation}

\subsection{Construction of counter-example for Shift Weight Transfer}


We start by showing that even when the input sequence 
$\bz_t \triangleq f(t)$ is $L$-smooth, Shift Weight Transfer (SWT) can still 
yield outputs with arbitrarily large error. Denote
\[
\by^{\text{SWT}}_t = \sum\nolimits_{i=-m}^m w_i \bz_{t+i-m} 
+ \mathbf b.
\]

\paragraph{Counter-example.}
Consider the one-dimensional input $f(t) = \alpha t$, where $\alpha > 0$ is a scaling parameter. Clearly $f$ is $L$-smooth for any finite $L$ (since $f''(t)=0$). The original convolution output at time $t$ should be
\[
\by_t = \sum\nolimits_{i=-m}^m w_i \bz_{t+i} 
+ \mathbf b.
\]
The error term now becomes:
\begin{align*}
||\by^{\text{SWT}}_t - \by_t || &=  ||\sum\nolimits_{i=-m}^m w_i (\bz_{t+i-m}-\bz_{t+i})|| \\
&=\alpha m \cdot||\sum\nolimits_{i=-m}^m w_i|| .
\end{align*}

\paragraph{Implication.}
Even though $f$ is perfectly smooth (indeed linear), the approximation error of SWT grows endlessly as $\alpha \to \infty$. Hence, the error is \emph{not controlled} by the smoothness constant $L$ alone. This shows that Shift Weight Transfer may catastrophically fail, motivating the more principled extrapolative construction.

\subsection{Detailed Derivation for Extrapolative Weight Transfer}
\label{app:detail_deriv}

From first principles, we posit that $\bz_{t+k}$ can be linearly approximated over a window of $p$ past timesteps. Specifically, we perform linear regression on $\{(\bz_\tau, \tau)\}_{\tau=t-p+1}^t$ and predict $\bz_{t+k}$ based on the regression result $(\hat{\bz}_{t+k}, t+k )$.

\[
\bz_{t+k}\approx \sum\nolimits_{j=0}^{p-1} \gamma_{k,j}\ \bz_{t-j} + \bm{\beta}_k,\]
where $\bm{\gamma}_{k, \cdot}, \bm{\beta}_k$ are determined by a linear extrapolation (OLS) from the past $p$ features. Concretely, let $\tau_j \triangleq t-j$ and define the empirical mean and variance of the timestamps:
\[
\mu = \frac{1}{p}\sum_{j=0}^{p-1} \tau_j = t - \frac{p-1}{2}, \qquad 
S = \sum_{j=0}^{p-1} (\tau_j - \mu)^2.
\]
Then the regression prediction admits a closed form:
\[
\hat{\bz}_{t+k} = \sum_{j=0}^{p-1} \left(\frac{1}{p} + \frac{(t+k - \mu)(\tau_j - \mu)}{S}\right) \bz_{t-j}.
\]
Thus, the coefficients are explicitly
\[
\gamma_{k,j} = \frac{1}{p} + \frac{(t+k - \mu)(\tau_j - \mu)}{S}, 
\qquad \bm{\beta}_k = \mathbf{0}.
\]
Note that $\sum_j \gamma_{k,j}=1$, hence the intercept $\bm{\beta}_k$ vanishes automatically, and the extrapolation is expressed as a weighted combination of past features $\bz_{t-j}\in\mathbb{R}^d$.

Keeping in mind the design principle of maximally preserving the output representation of the original convolution, such that the new causal computation produces a similar result to the original non-causal one: 
\[
\sum\nolimits_{i=-m}^m w_i \bz_{t+i} + \mathbf{b}= \sum\nolimits_{j=-2m}^0 w'_{j} \bz_{t+j} + \mathbf{b}'.
\]
We rewrite the left-hand side as 
\begin{align*}
    \sum\nolimits_{i=-m}^m w_i \bz_{t+i} + \mathbf{b} &= \sum\nolimits_{i=-m}^0 w_i \bz_{t+i} + \sum\nolimits_{i=1}^m w_i \bz_{t+i} + \mathbf{b} \\
    &\approx\sum\nolimits_{i=-m}^0 w_i \bz_{t+i} + \sum\nolimits_{i=1}^m w_i \left( \sum\nolimits_{j=0}^{p-1} \gamma_{i,j}\ \bz_{t-j} + \bm{\beta}_i\right) + \mathbf{b} \\
    &=    \sum\nolimits_{i=-m}^0 w_i \bz_{t+i} + \sum\nolimits_{j=0}^{p-1} \left( \sum\nolimits_{i=1}^m \gamma_{i,j} w_i \bz_{t-j}\right) + \sum\nolimits_{i=1}^m w_i \bm{\beta}_i + \mathbf{b}.
\end{align*}
Rearranging the terms with respect to $\bz$ gives us:
\begin{align*}
w'_j &=\mathbf{1}_{\left[j\geq -m\right]} \cdot  {w}_j + \mathbf{1}_{\left[-p+1\leq j \leq 0\right]} \cdot \sum\nolimits_{i=1}^m \gamma_{i,-j} w_i, \quad \mathbf{b}'= \mathbf{b}+ \sum\nolimits_{i=1}^m w_i \bm{\beta}_i.
\end{align*}

In the specialized case of $m=1, p=2$, $\bz_{t+k}$ satisfies:
\[\bz_{t+k} \approx (k+1) \bz_t - k \bz_{t-1}\]
Since $m=1$, we can explicitly write out the three terms:
\begin{align*}
w'_j &=\mathbf{1}_{\left[j\geq -m\right]} \cdot  {w}_j + \mathbf{1}_{\left[-p+1\leq j \leq 0\right]} \cdot \sum\nolimits_{i=1}^m \gamma_{i,-j} w_i\\
&= \begin{cases}
    w_0 + 2 w_1, \quad &j=0 \\
    w_{-1} - w_1, \quad &j=-1 \\
    0, & j=-2
\end{cases}.
\end{align*}
Also, $\mathbf{b}'= \mathbf{b}$. Since all temporal convolution layers in DynamiCrafter \citep{xing2024dynamicrafter} have a kernel size of 3, we are restricted to using the formulation for $m=1, p=2$. However, we anticipate that extrapolating with higher terms may lead to better performance as well as more complicated technical designs, which we leave for future work.

\subsection{Extrapolative Weight Transfer Error Bound}

\begin{proposition}
Assuming the input sequence $\bz_t \triangleq f(t)$ is generated by a twice-differentiable L-smooth function $f(t)$, the approximation error of the Extrapolative Weight Transfer (EWT) can be bounded by:
\[
\big\|\by^{\mathrm{orig}}-\by^{\mathrm{EWT}}\big\|_2
\le \frac{L}{2}\sum_{i=1}^m |w_i| \left( i^2 + \frac{6p^2}{p+1}i + \frac{(p-1)(p-2)}{6} \right).
\]
\end{proposition}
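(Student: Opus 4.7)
The plan is to reduce the layer-level error to a per-frame discrepancy, exploit the affine-reproducing property of OLS to cancel the first-order part of $f$, and then bound the remaining second-order remainder via $L$-smoothness. First, by the definition of Extrapolative Weight Transfer,
\[
\by^{\mathrm{orig}}_t - \by^{\mathrm{EWT}}_t \;=\; \sum_{i=1}^{m} w_i\bigl(\bz_{t+i} - \hat{\bz}_{t+i}\bigr), \qquad \hat{\bz}_{t+i} \;=\; \sum_{j=0}^{p-1}\gamma_{i,j}\,\bz_{t-j},
\]
so a single triangle inequality reduces the task to bounding each $\|\bz_{t+i}-\hat{\bz}_{t+i}\|_2$ for $i\in\{1,\ldots,m\}$ and then aggregating with weights $|w_i|$.

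Next, I would decompose $f(s) = f(t) + f'(t)(s-t) + \phi(s)$, where $\phi$ is the second-order Taylor remainder at $t$. Ordinary least squares fits affine data exactly, so $\sum_j \gamma_{i,j}\bigl[f(t)+f'(t)(\tau_j-t)\bigr] = f(t)+f'(t)\cdot i$, which is precisely the linear part evaluated at $t+i$; the first-order contribution therefore cancels and
\[
\bz_{t+i} - \hat{\bz}_{t+i} \;=\; \phi(t+i) - \sum_{j=0}^{p-1}\gamma_{i,j}\,\phi(t-j).
\]
The hypothesis $\|f''\|\le L$, combined with Taylor's theorem with integral remainder, gives the pointwise bound $\|\phi(s)\|_2 \le \tfrac{L}{2}(s-t)^2$, so one more triangle inequality yields
\[
\|\bz_{t+i}-\hat{\bz}_{t+i}\|_2 \;\le\; \frac{L}{2}\Bigl(i^2 + \sum_{j=0}^{p-1}|\gamma_{i,j}|\,j^2\Bigr).
\]

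The remaining technical step is to bound $\sum_j |\gamma_{i,j}|\,j^2$ using the closed form $\gamma_{i,j} = \tfrac{1}{p} - \tfrac{(i+(p-1)/2)(j-(p-1)/2)}{S}$ with $S=\tfrac{p(p^2-1)}{12}$ derived in Appendix~\ref{app:detail_deriv}. Splitting $|\gamma_{i,j}|\le \tfrac{1}{p} + \tfrac{(i+(p-1)/2)\,|j-(p-1)/2|}{S}$, applying the uniform estimates $|j-(p-1)/2|\le (p-1)/2$ and $j\le p$, and using the identity $\sum_{j=0}^{p-1} j^2 = \tfrac{(p-1)p(2p-1)}{6}$, one can collect terms so the linear-in-$i$ contribution yields $\tfrac{6p^2}{p+1}i$ and the $i$-independent residual collapses to $\tfrac{(p-1)(p-2)}{6}$. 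Summing across $i$ with weights $|w_i|$ then closes the proof. The main obstacle lies in this last arithmetic collapse: the constant $\tfrac{(p-1)(p-2)}{6}$ matches the tight curvature-induced OLS bias at $i=0$ (because $\sum_j\gamma_{0,j}\,j^2 = -\tfrac{(p-1)(p-2)}{6}$), whereas $\tfrac{6p^2}{p+1}$ arises from a deliberately loose but algebraically convenient majorant of the $i$-dependent sensitivity of $|\gamma_{i,j}|$; selecting the crude termwise bounds so that the final expression collapses to exactly this pair of constants, rather than a tighter but structurally messier alternative, is where the calculation demands the most care.
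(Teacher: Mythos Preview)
Your reduction via affine reproduction is correct and is in fact the same identity the paper uses, just packaged differently. The paper inserts the tangent $l^*(x)=f(t)+f'(t)(x-t)$ and splits $f(t+i)-l(t+i)$ into a Taylor piece $f(t+i)-l^*(t+i)$, an intercept piece $f(t)-l(t)$, and a slope piece $i\bigl(f'(t)-\hat s\bigr)$; expanding $\gamma_{i,j}=\gamma_{0,j}+i\,(\tau_j-\mu)/S$ inside your expression $\phi(t+i)-\sum_j\gamma_{i,j}\phi(t-j)$ recovers exactly these three terms, and the $i^2$, $\tfrac{6p^2}{p+1}i$, and constant contributions in the proposition correspond one-to-one to the paper's parts (A), (C), (B).

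The gap is in your final arithmetic collapse, and it is structural rather than a matter of choosing bounds carefully. Once you have applied the triangle inequality termwise to obtain $\sum_j|\gamma_{i,j}|\,j^2$, the $i$-free part you can extract is at best $\sum_j|\gamma_{0,j}|\,j^2$ (via $|\gamma_{i,j}|\le|\gamma_{0,j}|+i\,|\tau_j-\mu|/S$), and with your stated split $|\gamma_{i,j}|\le\tfrac1p+\cdots$ it is even larger, namely $\tfrac{(p-1)(2p-1)}{6}+\tfrac{3p^2(p-1)}{p+1}$. Both strictly exceed the target $\tfrac{(p-1)(p-2)}{6}=\bigl|\sum_j\gamma_{0,j}\,j^2\bigr|$ whenever $\gamma_{0,j}$ changes sign: for $p=3$ one has $(\gamma_{0,0},\gamma_{0,1},\gamma_{0,2})=(\tfrac56,\tfrac13,-\tfrac16)$, so $\sum_j|\gamma_{0,j}|j^2=1$ versus the target $\tfrac13$. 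No selection of crude majorants can undo this, because the absolute value has already been moved inside the sum. The paper obtains $\tfrac{(p-1)(p-2)}{6}$ by isolating the intercept error $f(t)-l(t)=-\tfrac12\sum_j j^2\gamma_{0,j}f''(\xi_j)$ as a single vector \emph{before} bounding and then evaluating the signed sum $\sum_j j^2\gamma_{0,j}$ in closed form; to match the stated constants you must likewise separate the $\gamma_{0,j}$- and $i\,(\tau_j-\mu)/S$-contributions \emph{prior} to the termwise triangle inequality, not after.
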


\begin{proof}
The total error is the weighted sum of the per-term extrapolation errors:
\begin{equation}
\big\| \by^\text{orig}-\by^\text{EWT} \big\|_2 \le \sum_{i=1}^m |w_i|\cdot \big\| \bz_{t+i} - \tilde{\bz}_{t+i} \big\|_2. \label{eq:full_error_sum}
\end{equation}
We derive a complete bound for the per-term error $\| \bz_{t+i} - \tilde{\bz}_{t+i} \|_2$. Let $l^*(x) = f(t) + (x-t)f'(t)$ be the true tangent line at $t$, and $l(x)$ be the OLS fitted line. The error is $\|f(t+i) - l(t+i)\|_2$. We use the triangle inequality to decompose this error into three distinct sources:
\begin{align*}
\| f(t+i) - l(t+i) \|_2 &\le \| f(t+i) - l^*(t+i) \|_2 + \| l^*(t+i) - l(t+i) \|_2 \\
&\le \underbrace{\| f(t+i) - l^*(t+i) \|_2}_{\text{(A) Taylor Error}} + \underbrace{\| f(t) - l(t) \|_2}_{\text{(B) Intercept Error at }t} + \underbrace{\| i(f'(t) - \hat{\mathbf{s}}) \|_2}_{\text{(C) Propagated Slope Error}}.
\end{align*}
We now bound each of these three terms.

\paragraph{(A) Bounding the Taylor Error}
By Taylor's theorem, there exists some $\xi \in (t, t+i)$ such that $f(t+i) - l^*(t+i) = \frac{i^2}{2}f''(\xi)$. Given $\|f''(\cdot)\|_2 \le L$, this term is bounded by:
\[
\| f(t+i) - l^*(t+i) \|_2 \le \frac{L}{2}i^2.
\]
\paragraph{(B) Bounding the Intercept Error}
This term, $\| f(t) - l(t) \|_2$, represents the error of the OLS prediction at time $t$. The prediction is $l(t) = \tilde{\bz}_t = \sum_{j=0}^{p-1} \gamma_{0,j} \bz_{t-j}$. We analyze the error component-wise using a Taylor expansion of $f(t-j)$ around $t$:
\[
l(t) = \sum_{j=0}^{p-1} \gamma_{0,j} f(t-j) = \sum_{j=0}^{p-1} \gamma_{0,j} \left[ f(t) - j f'(t) + \frac{j^2}{2}f''(\xi_j) \right].
\]
For the  coefficient sums for $\gamma_{0,j} = \frac{1}{p} + \frac{(t-\mu)(\tau_j-\mu)}{S}$, we have:
\begin{align}
    \sum_{j=0}^{p-1} \gamma_{0,j} &= 1, \\
    \sum_{j=0}^{p-1} j \gamma_{0,j} &= \frac{1}{p}\sum j + \frac{t-\mu}{S}\sum j(\tau_j-\mu) = \frac{p-1}{2} + \frac{(p-1)/2}{S}(-S) = 0.
\label{eq:useful_lemmas}
\end{align}
Thus, the error simplifies to:
\[
f(t) - l(t) = f(t) - \left( f(t) - 0 \cdot f'(t) + \sum_{j=0}^{p-1} \gamma_{0,j} \frac{j^2}{2} f''(\xi_j) \right) = -\frac{1}{2}\sum_{j=0}^{p-1} j^2 \gamma_{0,j} f''(\xi_j).
\]
Taking the norm and the bound $\|f''(\cdot)\|_2 \le L$:
\[
\|f(t) - l(t)\|_2 \le \frac{L}{2} \left| \sum_{j=0}^{p-1} j^2 \gamma_{0,j} \right|.
\]
The sum can be calculated in closed form:
\begin{align*}
\sum_{j=0}^{p-1} j^2 \gamma_{0,j} &= \frac{1}{p}\sum j^2 + \frac{t-\mu}{S} \sum j^2(\tau_j - \mu) \\
&= \frac{(p-1)(2p-1)}{6} + \frac{(p-1)/2}{S} \left( -\frac{p(p-1)^2(p+1)}{12} \right) \\
&= \frac{(p-1)(2p-1)}{6} - \frac{(p-1)/2}{S} S(p-1) = -\frac{(p-1)(p-2)}{6}.
\end{align*}
Therefore, the intercept error is bounded by:
\[
\|f(t) - l(t)\|_2 \le \frac{L}{2} \frac{(p-1)(p-2)}{6}.
\]

\paragraph{(C) Bounding the Propagated Slope Error}
This can be achieved by bounding the slope-estimation error: \[\Delta_i \triangleq ||f'(t) - \hat{\mathbf{s}}||,\]for the OLS fit on uniformly spaced timestamps $\tau_j=t-j$. The OLS slope estimator is given by
\[
\hat{\mathbf{s}} = \frac{\sum_{j=0}^{p-1} (\tau_j-\mu)\,f(\tau_j)}{\sum_{j=0}^{p-1} (\tau_j-\mu)^2}=\frac{1}{S}\sum_{j=0}^{p-1} (\tau_j-\mu)\,f(\tau_j).
\]
Using the Taylor expansion $f(\tau_j)=f(t)-jf'(t) + \tfrac{j^2}{2} f''(\xi_{j})$, we can distribute the sums into:
\begin{align*}
    \hat{\mathbf{s}} &= \frac{1}{S} \left( f(t) \sum (\tau_j - \mu) - f'(t) \sum j (\tau_j - \mu) + \frac{1}{2} \sum(\tau_j -\mu)j^2 f''(\xi_j) \right).
\end{align*}
Using the properties in Equation \ref{eq:useful_lemmas} gives us
\begin{align*}
    \hat{\mathbf{s}} &= \frac{1}{S} \left( \mathbf{0} - f'(t) (-S) + \frac{1}{2} \sum(\tau_j -\mu)j^2 f''(\xi_j) \right)=f'(t)+\frac{1}{2S}\sum(\tau_j -\mu)j^2 f''(\xi_j).
\end{align*}
Taking vector norm and using $\|f''(\cdot)\|_2\le L$ we get the uniform bound
\[
\Delta_i = \| \hat{\mathbf{s}} - f'(t)\|_2
\le \frac{1}{2S} \sum_{j=0}^{p-1} \big|(\tau_j-\mu) j^2\big| \cdot ||f''(\xi_j)||^2\leq \frac{L}{2S} \sum_{j=0}^{p-1} \big|(\tau_j-\mu) j^2\big| .
\]
As 
\[
\sum_{j=0}^{p-1} \left| (\tau_j - \mu) j^2 \right| \leq p^2 \sum_{j=0}^{p-1} \left| (\tau_j - \mu) \right|\leq p^3 \cdot\frac{p-1}{2},
\]
therefore
\[
\Delta_i \le \frac{L}{2S} \cdot \frac{1}{2} (p-1) p^3
= \frac{L}{2}\cdot\frac{6p^2}{p+1}.
\]

\paragraph{Combining All Terms}
Combining the three terms, we get the complete per-term error bound:
\[
\| \bz_{t+i} - \tilde{\bz}_{t+i} \|_2 \le \frac{L}{2}i^2 +\frac{L}{2}\cdot\frac{6p^2}{p+1} i+ \frac{L}{2}\frac{(p-1)(p-2)}{6}.
\]
Substituting this back into Equation \ref{eq:full_error_sum}:
\begin{align*}
\big\|\by^{\mathrm{orig}}-\by^{\mathrm{EWT}}\big\|_2
&\le \sum_{i=1}^m |w_i| \left( \frac{L}{2}i^2 +\frac{L}{2}\cdot\frac{6p^2}{p+1} i + \frac{L}{2}\frac{(p-1)(p-2)}{6} \right) \\
&= \frac{L}{2}\sum_{i=1}^m |w_i| \left( i^2 + \frac{6p^2}{p+1}i + \frac{(p-1)(p-2)}{6} \right).
\end{align*}
This completes the full proof.
\end{proof}
\color{black}
\subsection{\rev{Causal Action Guidance as Probability Steering}}
\label{app:steering}
\rev{In this section, we provide the proof for Theorem \ref{theorem:steering}, formally demonstrating that our Causal Action Guidance mechanism is mathematically equivalent to sampling from a sharpened posterior distribution, effectively steering the generation process.}
\CausalSteering*
\begin{proof}
We begin by establishing the relationship between the score functions and the probability densities. In the context of diffusion models \citep{ho2020denoising, song2021scorebased}, the trained noise prediction network $\bepsilon_\theta(\mathbf{x}_t, \cdot)$ estimates the score of the data distribution $-\nabla_{\mathbf{x}_t} \log p(\mathbf{x}_t \mid \cdot)$ (up to a scaling constant). 

Let $\mathcal{H}_t := ([\mathbf{x}_\tau]_{\tau<t}, [\mathbf{a}_\tau]_{\tau < t-1})$ denote the history context excluding the current action. We identify the two score components used in our method as:
\begin{align*}
    \bepsilon_{\text{cond}} &\propto -\nabla_{\mathbf{x}_t} \log p(\mathbf{x}_t \mid \mathbf{a}_{t-1}, \mathcal{H}_t), \\ \bepsilon_{\text{ucond}} &\propto -\nabla_{\mathbf{x}_t} \log p(\mathbf{x}_t \mid \mathcal{H}_t).
\end{align*}

Applying Bayes' rule to the conditional density $p(\mathbf{x}_t \mid \mathbf{a}_{t-1}, \mathcal{H}_t)$:
\begin{equation*}
    \log p(\mathbf{x}_t \mid \mathbf{a}_{t-1}, \mathcal{H}_t) = \log p(\mathbf{x}_t \mid \mathcal{H}_t) + \log p(\mathbf{a}_{t-1} \mid \mathbf{x}_t, \mathcal{H}_t) - \log p(\mathbf{a}_{t-1} \mid \mathcal{H}_t).
\end{equation*}
Since the term $\log p(\mathbf{a}_{t-1} \mid \mathcal{H}_t)$ is independent of $\bx_t$, taking the gradient with respect to $\mathbf{x}_t$ yields the following decomposition of the conditional score:
\begin{align*}
    \nabla_{\mathbf{x}_t} \log p(\mathbf{x}_t \mid \mathbf{a}_{t-1}, \mathcal{H}_t) &= \nabla_{\mathbf{x}_t} \log p(\mathbf{x}_t \mid \mathcal{H}_t) + \nabla_{\mathbf{x}_t} \log p(\mathbf{a}_{t-1} \mid \mathbf{x}_t, \mathcal{H}_t) ,\\
    \Leftrightarrow \bepsilon_{\text{cond}} &= \bepsilon_{\text{ucond}} - \gamma \nabla_{\mathbf{x}_t} \log p(\mathbf{a}_{t-1} \mid \mathbf{x}_t, \mathcal{H}_t).
\end{align*}
where $\gamma$ is a proportionality constant absorbed into the guidance scale. During sampling, the  update rule for Causal Action Guidance can now be rewritten as :
\begin{align*}
     \bepsilon_{\text{guided}} &= (1+\lambda)\bepsilon_{\text{cond}} - \lambda\bepsilon_{\text{ucond}} \\
     &= (1+\lambda) \left( \bepsilon_{\text{ucond}} - \gamma \cdot \nabla_{\mathbf{x}_t} \log p(\mathbf{a}_{t-1} \mid \mathbf{x}_t, \mathcal{H}_t) \right) - \lambda\bepsilon_{\text{ucond}} \\
    &= \bepsilon_{\text{ucond}} -\gamma \cdot (1+\lambda)\nabla_{\mathbf{x}_t} \log p(\mathbf{a}_{t-1} \mid \mathbf{x}_t, \mathcal{H}_t).
\end{align*}
Recalling that $\bepsilon_{\text{guided}} \propto -\nabla_{\mathbf{x}_t} \log \tilde{p}(\mathbf{x}_t\mid \mathbf{a}_{t-1}, \mathcal{H}_t)$, we equate the gradients:
\begin{equation*}
    \nabla_{\mathbf{x}_t} \log \tilde{p}(\mathbf{x}_t\mid \mathbf{a}_{t-1}, \mathcal{H}_t) = \nabla_{\mathbf{x}_t} \log p(\mathbf{x}_t \mid \mathcal{H}_t) + \gamma \cdot (1+\lambda) \nabla_{\mathbf{x}_t} \log p(\mathbf{a}_{t-1} \mid \mathbf{x}_t, \mathcal{H}_t).
\end{equation*}
Denoting $\omega := \gamma \cdot(1+\lambda)$, integrating both sides with respect to $\mathbf{x}_t$ recovers the log-density, up to constant $C$:
\begin{equation*}
    \log \tilde{p}(\mathbf{x}_t\mid \mathbf{a}_{t-1}, \mathcal{H}_t) = \log p(\mathbf{x}_t \mid \mathcal{H}_t) + \log p(\mathbf{a}_{t-1} \mid \mathbf{x}_t, \mathcal{H}_t)^{\omega} + C.
\end{equation*}
Exponentiation on both sides yields the steered posterior distribution:
\begin{equation*}
    \tilde{p}(\mathbf{x}_t \mid \mathbf{a}_{t-1}, \mathcal{H}_t) \propto p(\mathbf{x}_t \mid \mathcal{H}_t) \cdot p(\mathbf{a}_{t-1} \mid \mathbf{x}_t, \mathcal{H}_t)^{\omega}.
\end{equation*}
This completes the proof.
\end{proof}
\subsection{\rev{Clarification on Causal Terminology and Connections to Interventional World Models}}
\label{app:causal_discussion}

In this work, we bridge concepts from video generation and world modeling, domains where the term ``causal'' carries distinct nuances. Throughout the paper, our usage of ``causal'' primarily refers to \textit{Temporal Causality}, focusing on the sequential evolution of environment dynamics. However, specifically within our proposed \textit{Causal Action Guidance}, the mechanism also implicitly embodies \textit{Interventional Causality}, i.e. $p(\mathbf{s}_{t+1} \mid \mathbf{s}_t, \mathrm{do}(\mathbf{a}_t))$. In this section, we clarify these two interpretations and discuss their connections to the broader causal inference literature.

\paragraph{Temporal Causality .} 
In the deep learning community, particularly in autoregressive sequence modeling, ``causal'' typically refers to the \textit{temporal structure} or non-anticipative dependency. A model is temporally causal if the prediction at time $t$, $\mathbf{x}_t$, depends only on the past context $\mathbf{x}_{<t}$ and is independent of future information $\mathbf{x}_{>t}$. 
Our \textit{Video Diffusion Causalization} (Section \ref{cct}) strictly follows this definition. Standard video diffusion models utilize bidirectional attention, violating the arrow of time required for online interaction. Our method repurposes these architectures to strictly enforce temporal causality, establishing the structural foundation for autoregressive rollout.

\paragraph{Interventional Causality.} 
While our framework is built on temporal causality, our \textit{Causal Action Guidance} (Section \ref{iad}) naturally extends into the realm of \textit{interventional causality} \citep{pearl2009causality}. In causal inference, this refers to reasoning about the effect of an intervention, i.e., estimating $p(\mathbf{s}_{t+1} \mid \mathbf{s}_t, \mathrm{do}(\mathbf{a}_t))$. 
By injecting the action signal $\mathbf{a}_t$ and utilizing classifier-free guidance to amplify its likelihood, we are effectively performing an intervention $\mathrm{do}(\mathbf{a}_t)$. This forces the model to generate the specific counterfactual future resulting from that action, rather than a generic future based solely on observational correlations.

In Vid2World, we establish \textit{temporal causality} as the necessary architectural  prerequisite to enable online rollout, while leveraging \textit{action guidance} to implicitly enforce \textit{interventional causality}. Together, these components enable controllable and action-conditioned world simulation.
\subsection{Extended Discussion on Data Distribution Mismatch during Video Generation and World Modeling}
\label{app:mismatch}

While Vid2World is motivated by leveraging the broad visual experience and rich physical priors encoded in the pretrained video diffusion models, it is natural that the distribution of internet-scale pretraining video data differs from the agent-centric interaction data used in world modeling. To this end, we highlight two key dimensions of this data distribution gap:
\begin{enumerate}
\item \textbf{Scene Composition and Object Distribution}.
Internet videos span a wide variety of environments, objects, and scene layouts, whereas world-model training datasets generally contain more controlled, task-centric settings with a narrower range of objects and interactions. Although this introduces a distribution mismatch, the underlying compositional regularities (e.g. object permanence, contact dynamics, spatial relations), are shared across all sources and serve as transferable priors.
\item \textbf{Motion Scale and Interaction Granularity}.
Pretraining videos typically capture large-scale, coarse global motions, while world models focus on fine-grained, local interactions between agents and objects. Despite this difference in motion granularity, low-level physical regularities, such as temporal continuity, acceleration smoothness and occlusion dynamics, remain consistent, supporting transfer of motion priors into action-conditioned rollouts.
\end{enumerate}

Even with these mismatches, we argue that by large-scale pretraining on diverse visual experience, the video diffusion model captures strong and broadly applicable visual and physical priors. This is especially true in our Vid2World approach. Building upon DynamiCrafter \citep{xing2024dynamicrafter}, an image animation model fine-tuned from the general-purpose video generation model VideoCrafter \citep{chen2023videocrafter1}, Vid2World inherits both the broad visual experiences from internet-scale video pretraining and the structured motion priors essential for producing physically coherent dynamics, properties that are directly beneficial for action-conditioned generation.

\subsection{Extended Discussion on Utilizing Vid2World for Downstream Tasks}

While Vid2World demonstrates a successful first step at transferring video diffusion models to world models, due to the large parameter size of the pretrained model and the iterative process in diffusion, the model does not enjoy fast inference speed, compared to counterparts trained using teacher forcing, see Appendix \ref{app:compute} for further details. However, we fully acknowledge that the world model's performance in downstream tasks is a critical factor for evaluating the world model's applicability, especially for applications in domains such as embodied artificial intelligence.

We conduct the downstream task of Real2Sim Policy Evaluation in the RT-1 environment (Section \ref{subsec:rt-1}), a challenging task validating the model's effectiveness in discriminating the performance of different policies as well as serving as a reference of the policy's real world success rate. Due to limited computation resources, we did not explore training agents via reinforcement learning with our world models, as reinforcement learning is notoriously known as being sample inefficient.

As the model scale grows increasingly larger, the computation cost grows higher accordingly. Even at industry-level world models, such as Genie 3 \citep{genie3}, V-JEPA 2 \citep{vjepa2} and PAN \citep{pan}, downstream tasks especially reinforcement learning are not conducted. We believe this is a collective effort where multiple domain may require breakthroughs:
\begin{enumerate}
    \item \textbf{Novel world modeling methods with faster inference speed}: Current high-fidelity world models are dominated by diffusion models. With recent advances in one-step and few-step generative models showing feasibility in domains such as text to image generation \citep{song2023consistency, boffi2025build, geng2025mean}, we anticipate world modeling methods that generate high-fidelity future predictions with improved inference latency.
    \item \textbf{Sample-Efficient Methods for Model-Based Reinforcement Learning}: Reinforcement Learning is notoriously known for being sample inefficient. We strongly believe that designing model-based reinforcement learning algorithms with improved sample efficiency (both for offline and online reinforcement learning) may be crucial to fully unlock the potential of training policy model inside world models with reinforcement learning.
    \item \textbf{Hardware Acceleration Methods for Inference Speedup}: Aside from algorithmic innovations, we strongly believe hardware speedup methods are necessary to unleash the full capabilities of world models in downstream tasks, especially planning, control and reinforcement learning. Such improvement may include novel methods for implementing KV Cache in the context of world modeling \citep{Yin2024FromSB, Huang2025SelfFB, yang2025longlive}, better utilization of GPU locality, and developing hardware-friendly software packages that are easier to use.
\end{enumerate}

\color{black}
\section{Vid2World Implementation Details}
\label{app:model_detail}
\subsection{Algorithm Pseudo-code}
\label{app:pseudocode}
In this subsection, we provide the pseudo-code for training and autoregressive inference of Vid2World.
\begin{figure}[h]
  \begin{minipage}[h]{0.5\linewidth}
    \centering
    \footnotesize
    \input{algo/training}
  \end{minipage}
  \hfill
  \begin{minipage}[h]{0.5\linewidth}
    \centering
    \footnotesize
    \input{algo/sampling}
 \end{minipage}
 \label{fig:pseudocode}
\end{figure}
\subsection{Model Details}
\paragraph{Base Model Details.} The pre-trained model DynamiCrafter \citep{xing2024dynamicrafter} is a state-of-the-art latent video diffusion model conditioned on text and image, with its full-sized version ranking high on the VBench leaderboard \citep{huang2024vbench}. It builds on the Stable Diffusion variational autoencoder \citep{rombach2022high} and trains a 3D U-Net for video generation using web-scale video data. Specifically, starting from the pre-trained VideoCrafter T2V model \citep{chen2023videocrafter1}, DynamiCrafter introduces a dual-stream conditional image injection paradigm: in one stream, CLIP \citep{radford2021learning} image encoder embeddings are fed into the U-Net via cross-attention; in the other, images are encoded into VAE latents, which are then replicated along the channel dimension for the full video length and concatenated with the initial noise latents. This mechanism simultaneously injects text-aligned semantic representations and fine-grained visual details, improving video quality. For the noise level $k$, the model injects such information into the diffusion network by firstly using sinusoidal embedding to transform it into a vector, which is subsequently fed into a two-layer MLP, obtaining a learned embedding. The embedding is then added to the convolutional features to provide the noise level condition. Since the base model only contains temporal convolution layers with kernel size 3, our Extrapolative Weight Transfer Method is applied using hyperparameters $m=1, p=2$.

\paragraph{Image Preprocessing.} For all of our experiments, we use the publicly released DynamiCrafter model at $320 \times 512$ resolution, which has 1.1B trainable parameters. During data preprocessing, we resize the shorter side to 320 px while preserving the aspect ratio. After resizing, if the longer side remains below 512 px, we pad with black borders up to 512 px; otherwise, we take the other approach: resizing the longer edge to 512 px, and pad with black borders on the height dimension. This setup is used in both training and inference. For evaluation metrics calculation, we resize the model output to the baseline method's resolution. For instance, in CS:GO, we calculate the metrics by firstly cropping out the black paddings in the model output, followed by resizing to $150 \times 280$ resolution.

\paragraph{Noise-level Conditioning.} The structure of noise level embedding layers naturally supports the transformation to different noise scales at different frames. Specifically, instead of broadcasting the identical noise level sinusoidal embedding along the temporal axis, we use the independently sampled noise level at each frame, stacking it in the temporal dimension.

\paragraph{Action Conditioning.} For action conditioning, we inject frame-level action conditions into the base model, similar to the injection of noise levels. For cases where actions are discrete, we alter the first layer of the noise conditioning network into a learned embedding layer. For cases where the action space is continuous, we simply switch the first layer to a linear projection. The embedding obtained through action conditioning is later integrated with the noise conditioning through element-wise addition.

\subsection{Training Details}
We use the $320\times 512$ version of DynamiCrafter \citep{xing2024dynamicrafter} as the base model for all experiments. For robot manipulation,  game simulation as well as open-world navigation tasks, we train for 100k gradient steps; for ablation studies, all models are trained for 30k steps. The training is conducted using 4 $\times$ 40GB NVIDIA A100 GPUs.

\subsection{Inference Details}
During autoregressive rollout, we denoise the current frame by fixing noise levels at the history frames to be zero, whereas denoising the current frame using DDIM \citep{songdenoising}. In practice, following diffusion forcing \citep{chen2024diffusion}, we add a small noise $k_{\text{small}}$ uniformly to history frames. Under all settings in this paper, concerning action guidance, we apply a guidance scale of 2.5 for our experiments, as well as a guidance rescale factor \citep{lin2024common} of 0.7. We believe that the optimal values of these hyperparameters are related to domains, and an extensive hyperparameter search can lead to even better performance. A detailed list of hyperparameters regarding the model architecture, training, and inference process is shown in Table \ref{tab:hyperparams}.
\newpage

\begin{table}[H]
    \caption{Hyperparameters for Vid2World}
  \centering    
  \small        
  \begin{tabular}{ll}  
    \toprule    
     Hyperparameter   &  Value   \\
    \midrule    
    \multicolumn{2}{c}{\bfseries Architecture} \\
    \cmidrule{1-2}
    \multicolumn{2}{l}{\textit{Base Model:}} \\
    Resolution                & $320 \times 512$                \\
    Latent Diffusion                  & True            \\
    Downsample Ratio $f$ & 8 \\
    $z$-shape                    & $32 \times 32 \times 4$                  \\
    U-Net Chaneels & 320 \\
    \cmidrule{1-2}
    \multicolumn{2}{l}{\textit{Noise level Conditioning:}} \\
    Embedding dimension & 1024 \\
    \cmidrule{1-2}
    \multicolumn{2}{l}{\textit{Action Conditioning:}} \\
    Embedding dimension & 1024 \\
    \cmidrule{1-2}
    \multicolumn{2}{l}{\textit{Other Conditioning:}} \\
    Language condition &  Empty Sequence\\
    FPS condition & 3 \\
    Image condition & First frame \\
    \addlinespace   
    \cmidrule{1-2}
    \multicolumn{2}{c}{\bfseries Training} \\
    \cmidrule{1-2}
    Learning rate               & $1.0\times10^{-5}$   \\
    \# training steps              & 100k                 \\
    Batch size per GPU          & 2                    \\
    \# GPUs                     & 4                    \\   
    Accumulate gradient batches                   & 2                    \\
    GPU-type & A100-40GB \\
    \cmidrule{1-2}
    \multicolumn{2}{l}{\textit{Diffusion Setup:}} \\
    Diffusion steps $K$ & 1000 \\
    Noise schedule & Linear \\
    $\beta_0$ & 0.00085 \\
    $\beta_K$ & 0.0120 \\
    Noise level along Temporal Axis & iid. samples \\
    \cmidrule{1-2}
    \multicolumn{2}{l}{\textit{Data Processing:}} \\
    Input video length             & 16                   \\
    Normalize & [-1,1] \\
    Input resize & Resize, Center-Crop  \\
    Brightness & [0.9,1.1] \\
    Contrast & [0.9,1.1] \\
    Saturation & [0.9,1.1] \\
    Hue & [-0.05,0.05] \\
    \cmidrule{1-2}
    \multicolumn{2}{l}{\textit{Causalization:}} \\
    Mixed weight transfer & True \\
    Causal Mask for Temporal attention & True \\
    
    \cmidrule{1-2}
    \multicolumn{2}{l}{\textit{Action Conditioning:}} \\
    Dropout rate $p$ & 0.2 \\
    Sampling along Temporal Axis & iid. samples \\
    \addlinespace
    \cmidrule{1-2}
    \multicolumn{2}{c}{\bfseries Sampling} \\
    \cmidrule{1-2}
    Sampler                     & DDIM                 \\
    Steps                       & 50                   \\
    Timestep spacing & Uniform trailing \\
    Action Guidance scale              & 2.5           \\
    Guidance rescale            & 0.7 \\
    $k_{\text{small}}$ & 20 \\
    \bottomrule
  \end{tabular}
  \label{tab:hyperparams}
\end{table}
\newpage

\section{Experimental Details}
\label{app:exp_details}
\subsection{Dataset Details}
\paragraph{RT-1 Details.} RT-1 \citep{brohan2023rt} is a widely used dataset consisting of real-world robot experiences, spanning multiple robot manipulation tasks, including opening drawers, closing drawers, picking and placing. Each episode is sampled at an fps of 3, with the embodiment, a robot arm, performing certain tasks. In addition to video frames, it also records action sequences as well as annotated language prompts. In our setup, we use the observations obtained by RGB cameras, as well as the action sequence. 

\paragraph{CS:GO Details.} We use the publicly released dataset collected by \cite{pearce2022counter}. It contains different subsets of human players interacting with the CS:GO maps, spanning from expert-level to novice players. Here, we use the largest subset in their dataset, \texttt{dataset\_dm\_scraped\_dust2}, which contains 5.5M frames (95 hours) of online human gameplay from the map \textit{Dust II}. The dataset is created by scraping user behaviors on online servers, offering a diverse set of interactions from policies of all sorts. For each timestep, the actions are represented as an array of discrete values.

\paragraph{RECON Details.} RECON \citep{shah2022rapid} is a well-known open-world navigation dataset. It consists of 40 hours across 9 open-world environments, collected using a Clearpath Jackal UGV platform. The dataset is collected at a fps of 4, and The action space is defined as a 3D vector 
$a_t = (x, y, \text{yaw})$, 
where $(x, y) \in \mathbb{R}^2$ denotes translation along the forward/backward and left/right axes, and $\text{yaw} \in \mathbb{R}$ denotes the change in rotation angle. 
Formally, each action is given by the proprioceptive state difference between timesteps, i.e.,$
a_t = s_{t'} - s_t$,
where $s_t$ is the agent’s proprioceptive state and $t'$ denotes either the next timestep or a  future timestep of interest (as in NWM).

\subsection{Metrics for Video Prediction}
\label{app:metrics}
For Robot Manipulation, Game Simulation and Open-World Navigation tasks, we adopt commonly used video prediction metrics for image or video generation tasks. These metrics measure either the pixel-level or the semantic-level similarity between the generated videos and the ground truth videos. For metrics calculated on each image, the values are obtained by extracting all frames and treating them as independent images for feature extraction and statistical estimation. 

Next, we provide a description for each metric:

\paragraph{FID.}
We compute the Fréchet Inception Distance (FID) introduced by \cite{heusel2017gans}. FID measures the Fréchet distance between two multivariate Gaussians fitted to Inception-v3 activations of real and generated frames. Specifically, let \(\mu_r,\Sigma_r\) and \(\mu_g,\Sigma_g\) denote the empirical means and covariances of these activations for real and generated frames, respectively. FID is defined as:
\begin{equation*}
\mathrm{FID}(P_r,P_g) = \|\mu_r - \mu_g\|_2^2 + \mathrm{Tr}\left(\Sigma_r + \Sigma_g - 2\left(\Sigma_r \Sigma_g\right)^{1/2}\right).
\end{equation*}
\paragraph{FVD.}
\textit{Fréchet Video Distance} (FVD), introduced by \cite{unterthiner2018towards}, generalizes FID by embedding entire video clips via a pre-trained Inflated 3D ConvNet (I3D) and computing the Fréchet distance between the resulting feature distributions of real and generated videos. Concretely, let \(P_r\) and \(P_g\) be the distributions of I3D activations for real and generated videos, respectively, with empirical means \(\mu_r,\mu_g\) and covariances \(\Sigma_r,\Sigma_g\). FVD is then defined as:
\begin{equation*}
\mathrm{FVD}(P_r,P_g) = \|\mu_r - \mu_g\|_2^2 + \mathrm{Tr}\left(\Sigma_r + \Sigma_g - 2\left(\Sigma_r \Sigma_g\right)^{1/2}\right).
\end{equation*}

\paragraph{SSIM.}
Structural Similarity Index Measure (SSIM) \citep{wang2004image} quantifies perceptual similarity by jointly comparing the luminance, contrast, and structural information between two image patches. Given a pair of patches \(x\) and \(y\), let \(\mu_x, \mu_y\) be their mean intensities, \(\sigma_x^2, \sigma_y^2\) their variances, and \(\sigma_{xy}\) their covariance. The SSIM index is calculated using:
\begin{equation*}
\mathrm{SSIM}(x,y) = \frac{(2\mu_x\mu_y + C_1)\,(2\sigma_{xy} + C_2)}{(\mu_x^2 + \mu_y^2 + C_1)\,(\sigma_x^2 + \sigma_y^2 + C_2)},
\end{equation*}
where \(C_1 = (K_1L)^2\) and \(C_2 = (K_2L)^2\) are stability constants with \(L\) the pixel dynamic range. For our purpose, we compute SSIM over an \(11\times11\) Gaussian-weighted sliding window and average the local SSIM values to obtain a mean SSIM (MSSIM) per frame; the final video‐level SSIM score is the average MSSIM across all sampled frames.
\paragraph{LPIPS.}
Learned Perceptual Image Patch Similarity (LPIPS) \citep{zhang2018unreasonable} measures perceptual similarity by comparing deep feature activations of real and generated frames across multiple layers of a pre-trained network. Specifically, let \(\hat{f}^l(x)\) and \(\hat{f}^l(y)\) be the unit–normalized activations at layer \(l\) for inputs \(x\) and \(y\), and \(w_l\) the learned channel-wise weights. LPIPS is computed via:
\begin{equation}
\mathrm{LPIPS}(x,y) = \sum_{l}\frac{1}{H_l W_l}\sum_{h=1}^{H_l}\sum_{w=1}^{W_l}
\left\| w_l \odot \left(\hat{f}^l_{h,w}(x) - \hat{f}^l_{h,w}(y)\right) \right\|_2^2.
\end{equation}
It is worth noting that for evaluation in RT-1 and CS:GO, we use VGG \citep{vgg} as the feature extraction network, whereas in RECON, we use AlexNet \citep{alexnet} as the network, following baselines.
\paragraph{PSNR.}
Peak Signal-to-Noise Ratio (PSNR) \citep{hore2010image} quantifies pixel‐level fidelity by comparing the maximum possible pixel intensity to the mean squared error (MSE) between two frames. PSNR is defined as:
\begin{equation*}
\mathrm{PSNR}(x,y) = 10\log_{10}\frac{L^2}{\mathrm{MSE}(x,y)},
\end{equation*}
where \(L\) is the maximum pixel value (e.g.\ 255 for 8-bit images). 

\paragraph{DreamSim.} DreamSim \citep{fu2023dreamsim} is a relatively new metric for measuring perceptual image similarity, which aims to evaluate perceptual similarity. This is accomplished by comparing deep features from a neural network. The resulting metric is better aligned with human perception.
\subsection{Details of Vid2World for Robot Manipulation.}

\subsubsection{Implementation}
\label{sec:rt1-setup}

We make use of the RT-1 \citep{brohan2023rt} dataset. To align with the baseline evaluation methods, we randomly split 4361 episodes as the holdout set, using the remaining 82851 episodes as the training set. In this case, since the action space is continuous, we use a linear layer as the first layer to add the action condition, as described in Appendix \ref{app:model_detail}.

Following baseline \citep{rigteravid}, we train the model for up to 100k gradient steps on 4 $\times$ A100, which takes less time (6.4 days) than the seven days reported for training baseline methods. During training, the model inputs are video and action sequence segments of length 16. At test time, we randomly sample 1024 episodes from the evaluation set, and sample a segment of 16 frames for each episode. The model is provided with the first frame of the segment as well as the action sequence, and the metric is calculated on all 16 frames, the same as baseline methods.

\subsubsection{Baselines}
\label{app:rt-1baselines}
We compare Vid2World with several baselines, all utilizing the same base model (Dynamicrafter \citep{xing2024dynamicrafter}, resolution $320 \times 512$), while differing in their transfer methods. It is worth noting that for all baseline methods in this setting, the model is transferred without enforcing causality, neglecting the need for interactiveness; i.e., the models are still trained and sampled with homogeneous noise levels in all frames and the model is still architecturally non-causal. Therefore, the transferred models are unable to perform autoregressive rollout. During testing, the models generate videos in a non-autoregressive manner. Next, we provide a brief introduction to each baseline method:

\paragraph{Action-Conditioned Fine-tuning.} In this approach, all parameters of the pre-trained model are fine-tuned on the action-conditioned dataset. For each timestep $t$ of the noisy video $\bx$, the corresponding action $\mathbf{a}^t$ is embedded to compute the action embedding $\mathbf{e}_{\mathbf{a}}^t$ using an embedding table for discrete actions or a linear layer for continuous actions. For RT1, action embeddings are both concatenated with and added to the corresponding timestep embeddings.

\paragraph{Language-Conditioned Fine-tuning.} Language‐Conditioned Fine-tuning fine‐tunes the pre-trained model using a textual description of each video. Each description is embedded via CLIP \citep{radford2021learning} and incorporated through cross‐attention following the approach of the original model.

\paragraph{ControlNet \citep{zhang2023adding}.} ControlNet freezes the parameters of the pre-trained model and creates a trainable copy of its UNet encoder. The trainable branch is conditioned on the action signal and connected to the original decoder via zero‐initialized convolutions. In this work, ControlNet is employed with the aim of incorporating action‐conditioning into the diffusion process.

\paragraph{Classifier Guidance \citep{dhariwal2021diffusion}} A classifier $f_\phi(a \mid x_i)$ is trained on noisy images $x_i$ to predict actions. With weight $w$, this classifier steers the diffusion sampling process toward samples that are consistent with the specified actions. The resulting noise prediction is
\[
\overline{\epsilon}_{\text{final}}(x_i, a, i, x_0) = \epsilon_{\text{pre}}(x_i, i, x_0) - \sqrt{1 - \overline{\alpha}_t}w \nabla_{x_i}\log f_\phi(a | x_i).
\]

\subsection{Details of Real2Sim Policy Evaluation}
\label{app:simpler}
Real2Sim Policy Evaluation \citep{li2025evaluating} aims to evaluate policies using simulation as a surrogate for the real world, serving as an indicator of the performance of different policies. This interaction between the policy and the simulation environment requires world models to generate images in an interactive manner. A well-performing model should be capable of distinguishing successful trajectories from failure cases by autoregressively simulating the outcomes of different policy actions.

We employ \textit{Vid2World} as the world model to evaluate three policies: RT-1 (Begin), RT-1 (15\%), RT-1 (Converged), taken for different stages of RT-1 \citep{brohan2023rt} training. Specifically, we sample $N$ trajectories from the RT-1 dataset for the given task and extract their initial frames. These frames are provided to each RT-1 policy to generate actions, which are then fed into the world model to simulate the next frame. The policy continues to act on these imagined frames in an iterative manner.

For the first $L$ frames, new frames are generated autoregressively based on all previously observed frames. Beyond this point, each subsequent frame is generated based on a sliding window of the most recent $L$ frames. This process continues until a sequence of length $H$ is produced. We then employ a verifier to determine whether each trajectory is successful, and compute the overall success rate accordingly. In our experiments, we sample trajectories from the "close drawer" task in the RT-1 dataset. For each policy, we use sample number $N=50$, sliding window length $L=10$, and rollout horizon $H=40$. For simplicity, we use human evaluation as the verifier $\psi$. 

The complete procedure is described in Algorithm~\ref{alg:ope}.

\begin{minipage}[H]{1.0\linewidth}
    \centering
    \footnotesize
    \begin{algorithm}[H]
\footnotesize

\caption{Real2Sim Policy Evaluation}
\label{alg:ope}
\begin{algorithmic}[1]
\REQUIRE World model $P(\mathbf{o}_{t+1}|\mathbf{o}_{\leq t},\mathbf{a}_{\leq t})$,\space policy\space $\pi(\mathbf{a}_{t}|\mathbf{o}_{t})$, \space task $\kappa$, initial frame set $\mathcal{D}_\kappa$,\space trajectory success verifier\space$\psi (\mathbf{o}_{0:H})\rightarrow \{0, 1\}$.
\STATE \textbf{Init} $\mathtt{success\_count} \leftarrow 0$
\FOR{$n = 0,...,N$}
    \STATE \textbf{Sample} initial frame $\mathbf{o}_0$ from $\mathcal{D}_\kappa$
    \FOR{$t = 0, ..., H$}
        \STATE \textbf{Sample} $\mathbf{a}_t \sim \pi(\cdot \mid \mathbf{o}_t)$
        \IF{$t < L$}
            \STATE$\mathbf{o}_{t+1}\sim P(\mathbf{o}_{t+1}|\mathbf{o}_{\leq t},\mathbf{o}_{\leq t})$
        \ELSE
            \STATE $\mathbf{o}_{t+1}\sim P(\mathbf{o}_{t+1}|\mathbf{o}_{t-L:t},\mathbf{a}_{t-L:t})$
        \ENDIF
    \ENDFOR
    \STATE$\mathtt{success\_count} \leftarrow \mathtt{success\_count} + \psi(\mathbf{o}_{0:H})$
\ENDFOR
\STATE \textbf{Return} $\mathtt{success\_rate}=\frac{1}{N} \cdot \mathtt{success\_count}$
\end{algorithmic}
\end{algorithm}
\end{minipage}

We provide the instructions for human verification below:
\tcbset{colback=gray!5,
        colframe=purple!60!black,
        fonttitle=\bfseries,
        left=1.2mm,right=1.2mm,top=0.8mm,bottom=0.8mm,
        boxrule=0.5pt,
        arc=1.5mm,
        breakable           
}

\newtcolorbox{promptbox}[1][]{title=Instruction for Human Verification, #1}
\label{prompt}
\begin{promptbox}
Watch each clip of the robot attempting to close a drawer and decide if the attempt succeeds or fails: label Success when, by the final frame, the drawer face sits flush with the cabinet frame (no visible gap and no rebound); label Failure when any gap remains, the drawer re-opens after contact, the robot jams or stops short, or the view prevents you from confirming full closure.
\end{promptbox}
\subsection{Details of Vid2World for Game Simulation}
\label{app:csgo}
\subsubsection{Implementation}

We utilize the largest subset in the CS:GO dataset \citep{pearce2022counter}. Following DIAMOND \citep{alonso2024diffusion}, we use exactly the same holdout set of 0.5M frames (corresponding to 500 episodes, or 8 hours) for testing. As actions are discrete values in this domain, the first layer in the action is injected via a learned embedding layer. For training and evaluation purposes, we use segments of 16 frames. For evaluation, since DIAMOND \citep{alonso2024diffusion} requires 4 frames as history context, we autoregressively generate frames from four consecutive history frames, until a sequence length of 16 is reached. In this experiment, the metrics are calculated only on the predicted frames, excluding frames used for conditioning. Since the output of the baseline method, DIAMOND, is in a resolution of $150 \times 280$, we downsampled our generated image to match this corresponding resolution. For our model, we train for 100k steps.

\subsubsection{Baselines}
We use DIAMOND \citep{alonso2024diffusion}, a state-of-the-art autoregressive world model as the baseline. It treats the world modeling task as an image generation problem, which learns an image diffusion model based on the previous four observations and actions. In practice, the input image diffusion model is downsampled, and a separate upsampler is learned to upsample the diffusion model's output to higher resolutions. Here, we use the publicly released checkpoints of DIAMOND, which contain both the diffusion model and the upsampler. We evaluated both sampling configurations provided by the authors, namely:
\begin{enumerate}
    \item DIAMOND-Fast: Under this configuration, the model generates images with lower fidelity in exchange for faster inference speed, necessary for interactive gaming.
    \item DIAMOND-HQ: This is the configuration where the generated images have higher fidelity, coming at the cost of slower inference speed.
\end{enumerate}
We test our model's performance with baseline performances using exactly the same test set. Additional generation results can be viewed in Appendix \ref{app:csgo_gen}.
\subsection{Details of Vid2World for Open-World Navigation}
\label{app:open_world_navigation}
\subsubsection{Implementation}

We utilize the RECON dataset \citep{shah2022rapid}, a well-celebrated dataset for open-world navigation. Following baseline implementations, we split the data into two parts: 9,468 videos for training and 2367 videos for evaluation, using exactly the same data split. Since the action space is continuous, we use linear projection as the first layer for injecting actions. During training, we preprocess the image into 320 $\times$ 512 resolutions by padding 320 $\times$ 320 with black borders. During evaluation, we cut out the black borders and downsample the image to 224 $\times$ 224, making it comparable with baselines. During training, we use a context length of 16, with no downsampling in the temporal fps. Since the dataset is collected at 4 fps, for our evaluations into 4s into the future, the model is provided with a history of 4 frames (following baselines) and predicts a sequence of 16 frames, creating a total context length of 20 frames, which is longer than the training horizon. In our experiments, we are focused on two setups: Single-Step Prediction and autoregressive Prediction.

\paragraph{Single-Step Prediction.} The Single-Step Prediction set contains 500 video segments. Since NWM is capable of single-step prediction of a future timestep within its training horizon, the model is evaluated given 4 frames of history context and asked to single-shotedly predict the observations at 4s into the future. Our model, however, must generate predictions in a sequential manner; hence, our evaluations are done using autoregressive inference. It is worth noting that this makes the problem significantly harder, as the model's prediction will degrade with respect to the rollout horizon due to error accumulation. The results are shown in Table \ref{tab:combined_results}. For image generation metrics (i.e., all metrics except FVD), we report the results at the predicted frame, different from RT-1 and CS:GO evaluations, where we report means across all predicted frames. For FVD, we report the metric acquired by evaluating the video sequence of all 16 predicted frames. \rev{For LPIPS, PSNR and Dreamsim, we take the results from of NWM directly from the reported numbers in their paper, whereas for FVD, FID and SSIM, we report our evaluated numbers, which are obtained by building on the official implementation of NWM.}

\paragraph{autoregressive Predictions.} In this setup, the evaluation set consists of 150 video segments. Here, baseline methods as well as Vid2World conduct inference via autoregressive rollout. For baseline methods, except for the normal version of predicting future frames at 4 fps, there is also a downsampled version, which predicts future frames at 1 fps. This results in fewer autoregressive rollout steps, potentially leading to less error accumulation. For results in this domain, following baselines, we evaluate our model for 5 parallel runs using different random seeds, and report the mean and std.

\subsubsection{Baselines}

Here we consider two state-of-the-art baselines: Navigation World Models (NWM) \citep{bar2025navigation} and DIAMOND \citep{alonso2024diffusion}.

\paragraph{NWM.} Navigation World Model \citep{bar2025navigation} is a state-of-the-art model, built on a novel architecture CDiT. At its core, the model takes in action as well as the predicted timestep as conditions, and the backbone, follows the architecture of DiT \citep{peebles2023scalable} used in image generation. This equips the model with the ability to single-step predict a timestep in the future. Here we use CDiT-XL, a 1B model trained on various action-labeled cross-domain data, leveraging 4 history frames as context. The original autoregressive setup is 4 fps, and 1 fps denotes the autoregressive rollout by predicting the future 1 second from the current time. We also consider a model variant: NWM+Ego4D, which was co-trained with action-free video data to improve out-of-distribution generalization. \rev{It is worth noting that the NWM model is trained on 8 nodes, each with 8 Nvidia H100 GPUs, trained for 100k (NWM) / 200k (NWM+Ego4D) gradient steps using a batch size of 1024. This is significantly more computationally expensive than Vid2World's setup, with is trained on 4 Nvidia A100 40GB GPUs for 100k gradient steps using a batch size of 32.}

\paragraph{DIAMOND.} DIAMOND \citep{alonso2024diffusion} is also the baseline we used for CS:GO. In this setup, following NWM, the model is trained from various cross-domain data, and inference is done using autoregressive generation. Additionally, we include DIAMOND (1fps), which is a model trained using observations and actions at intervals of 1 second. 

We use exactly the same training and test split, and the same evaluation samples; showcases of generation results are included in Appendix \ref{app:recon_gen_results}.

\newpage
\section{Additional \rev{Experiments and }Visualization Results}
In this section, we provide additional visualization results for our proposed Vid2World model. Generated results from our model are obtained by autoregressive rollout. In Section \ref{app:rt-1vis}, we include visual results for the RT-1 dataset in the video prediction task. In Section \ref{app:csgo_gen}, we provide generated results in the CS:GO environment under the video prediction task. In Section \ref{app:recon_gen_results}, we include showcases of Vid2World generation in RECON environment. In Section \ref{app:policyeval}, we provide some generated examples for the Real2Sim Policy Evaluation experiments.
\label{app:c}
\subsection{Generation Results of RT-1}
\label{app:rt-1vis}
We provide additional visualization results for Vid2World on the RT-1 Dataset in Figure \ref{fig:rt1_vis}. As shown in the figure, our model makes video predictions that accurately represent the environment dynamics. Our world model generates physically plausible frame sequences with high fidelity, offering great potential in video prediction tasks.

However, limitations still exist. We provide two examples of such limitations in Figure \ref{fig:rt1_failure}. These fall into two categories:
\begin{enumerate}
    \item \textbf{Failing to predict fine-grained control}: In the upper case of Figure \ref{fig:rt1_failure}, the model predicts the moving directions of the robot arm successfully, but fails to capture the gripper's control over the green bag.
    \item \textbf{Regressing to more familiar scenes}: In the lower case of Figure \ref{fig:rt1_failure}, although the robot movement is mostly correct, the grasped object changes to a more often seen object.
\end{enumerate}

\begin{figure}[h]
    \centering
    \includegraphics[width=\linewidth]{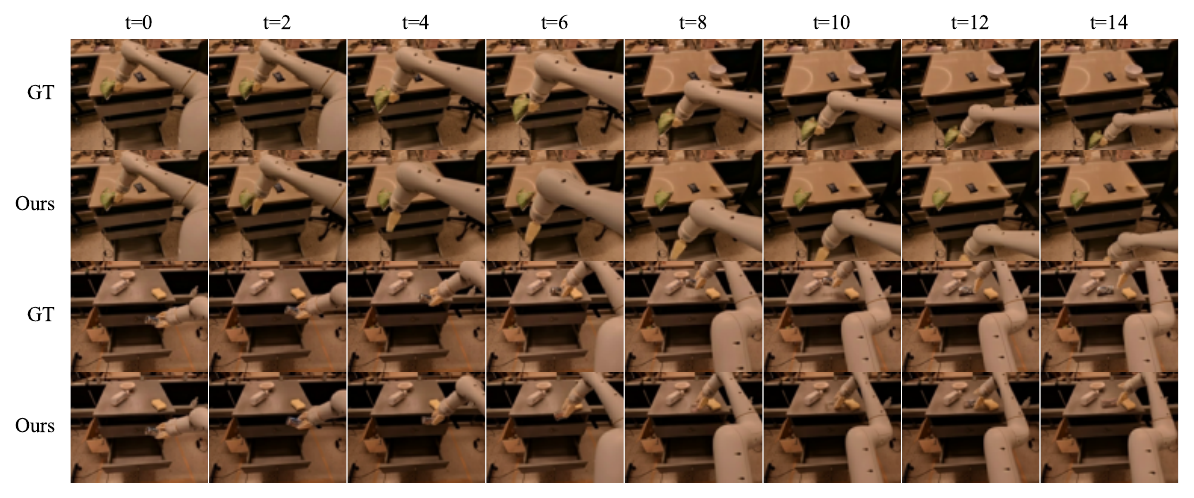}
    \caption{Failure cases for RT-1 dataset}
    \label{fig:rt1_failure}
\end{figure}
\newpage
\begin{figure}[H]
    \centering
    \includegraphics[width=\linewidth]{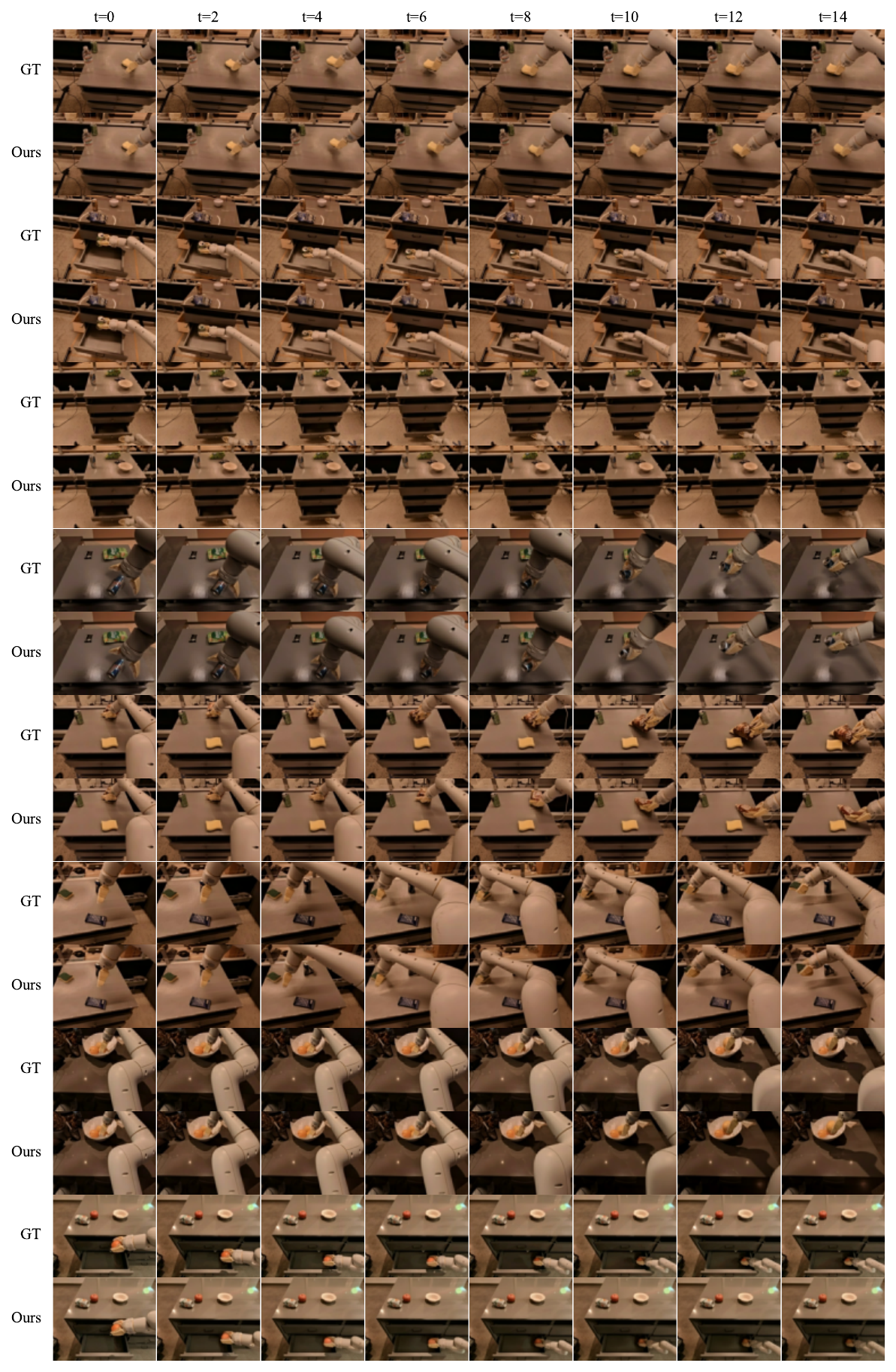}
    \caption{Comparison between ground truth and generated videos by Vid2World in RT-1 Environment. The first frame is provided as context.}
    \label{fig:rt1_vis}
\end{figure}
\subsection{Generation Results of CS:GO}
\label{app:csgo_gen}
We provide generation results of Vid2World compared to baseline methods (DIAMOND \citep{alonso2024diffusion}) in the CS:GO environment. We observe several interesting phenomenon, demonstrating the characteristics, both in strength and in limitations, of our model. We provide the discussion below.

\paragraph{Error Accumulation.} A common challenge for autoregressive models (for example, the baseline model DIAMOND) in multi-step prediction is performance degradation due to error accumulation, which is especially pronounced when consecutive frames exhibit large variation. In Figure \ref{fig:csgo-dist-shift}, we compare the qualitative predictions of Vid2World and DIAMOND under rapid viewpoint changes. By contrast, DIAMOND's frames become progressively blurred; Vid2World maintains sharpness and closely follows the ground truth trajectory. 
\begin{figure}[h]
    \centering
    \includegraphics[width=\linewidth]{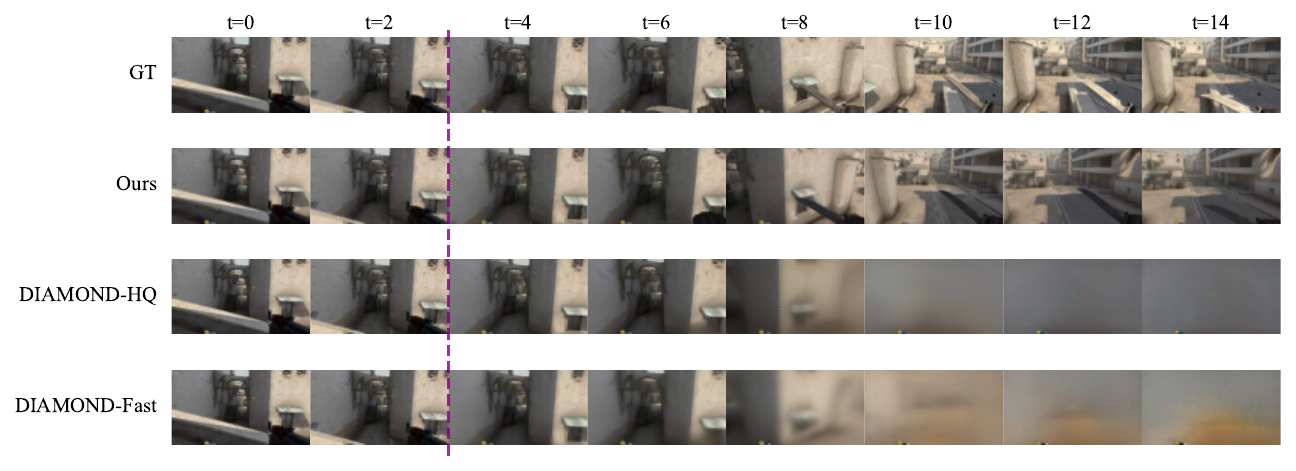}
    \caption{Error Accumulation in CS:GO. While DIAMOND's fidelity degrades significantly during rollout, Vid2World maintains high-quality generation with strong physical accuracy.}
    \label{fig:csgo-dist-shift}
\end{figure}

\textbf{Action Alignment.} The reliability of a world model, to a large extent, depends on how well its predictions align with the input actions. As shown in Figure \ref{fig:csgo-action-alignment}, Vid2World accurately reflects the \textit{aim-down-sights} action in its predicted video, whereas DIAMOND fails to manifest this action.
\begin{figure}[h]
    \centering
    \includegraphics[width=\linewidth]{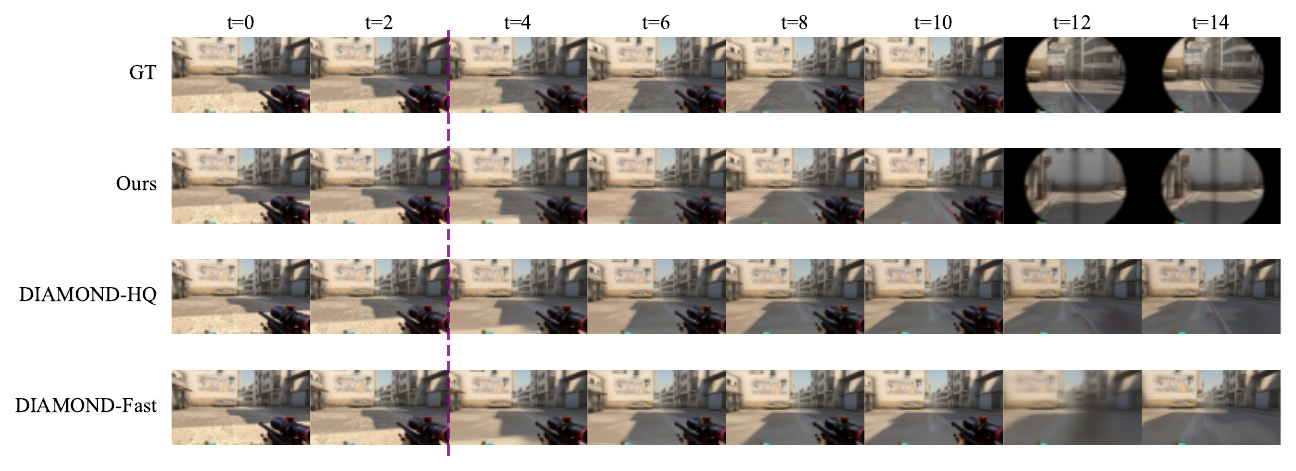}
    \caption{Action alignment in CS:GO. Vid2World truthfully reflects the \textit{aim-down-sights} action in its predicted video, while DIAMOND fails to follow the action.}
    \label{fig:csgo-action-alignment}
\end{figure}

\textbf{Failure Cases.} Despite substantially reducing the accumulated error and preserving action alignment, Vid2World still encounters failure cases, as demonstrated in Figure \ref{fig:csgo-failure-case}. In this figure, neither Vid2World nor DIAMOND matches the ground truth. Although the model's capability is one important factor leading to failure, the environment's randomness, in this case, the place for the player's respawn, also adds to the difficulty. 


\begin{figure}[H]
    \centering
    \includegraphics[width=\linewidth]{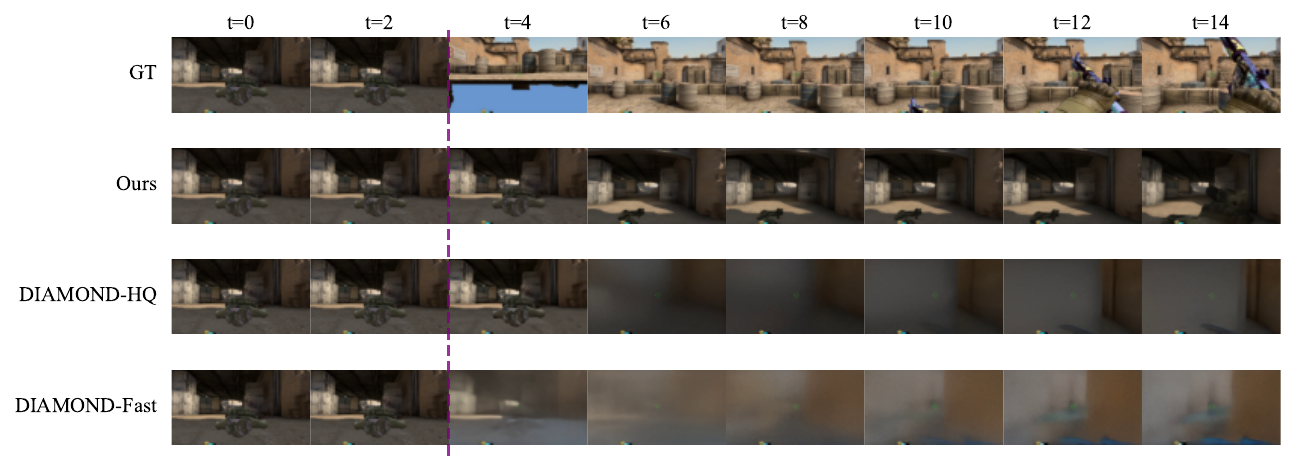}
    \caption{Failure Cases in CS:GO environments. }
    \label{fig:csgo-failure-case}
\end{figure}
\paragraph{Action influence on generated sequence.} For world models, it is important to do so-called \textit{counterfactual reasoning} with the current action, instead of predicting trends based solely on past observations. In Figure \ref{fig:csgo-fixed-actions}, we showcase the capability of our model to perform generation based on action sequences. All trajectories start from the same observation, but lead to completely different generated frame sequences due to different action sequences.
\begin{figure}[H]
    \centering
    \includegraphics[width=\linewidth]{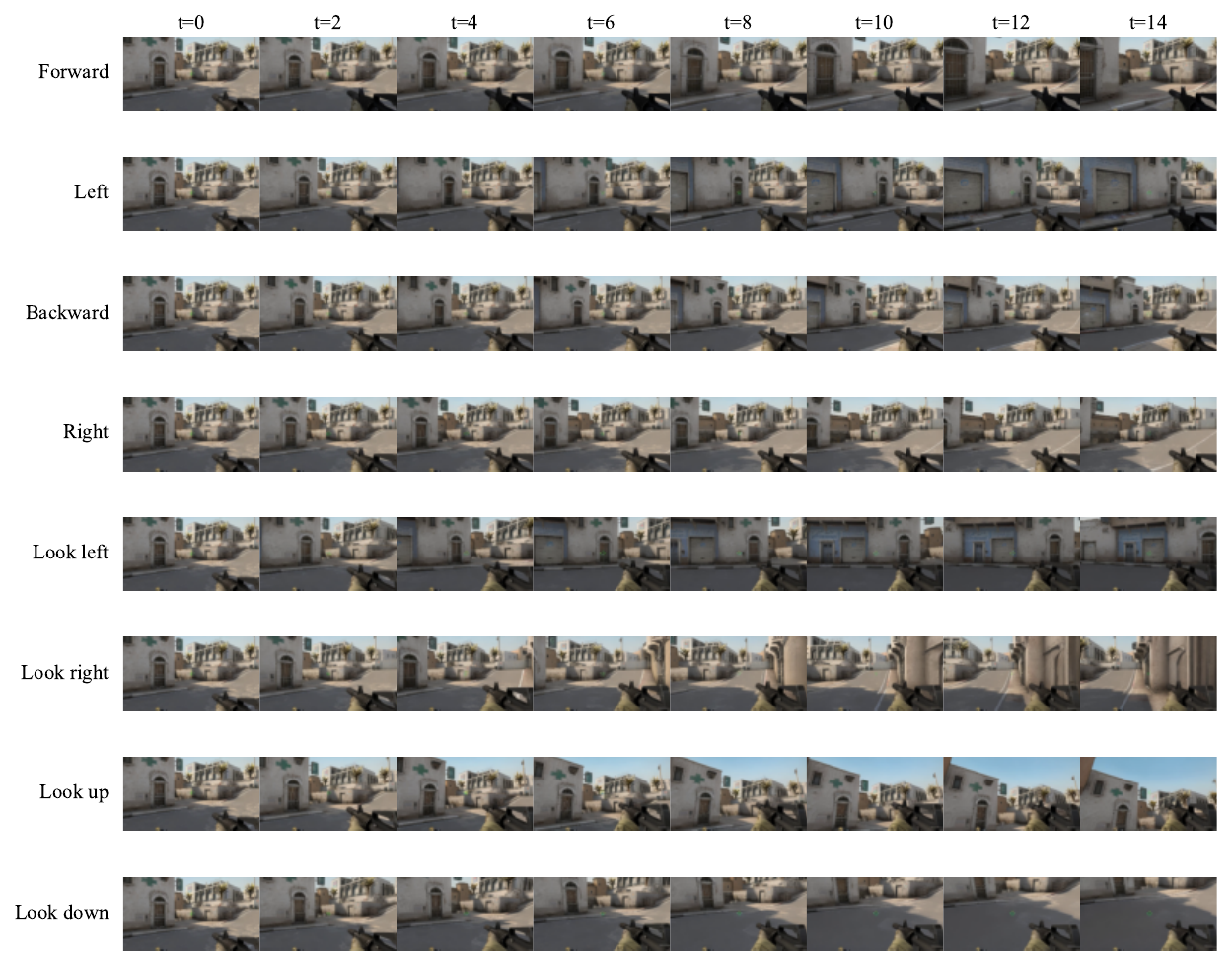}
    \caption{Effect of different actions on generated videos in CS:GO for Vid2World. Trajectories start with the same initial observation, diverging drastically as a result of different action sequences.}
    \label{fig:csgo-fixed-actions}
\end{figure}
\subsection{Generation Results for Open-World Navigation}
\label{app:recon_gen_results}
We provide generation results of Vid2World in the open-world navigation video prediction task, as shown in Figure \ref{fig:nwm_vis}.
\begin{figure}[H]
    \centering
    \includegraphics[width=\linewidth]{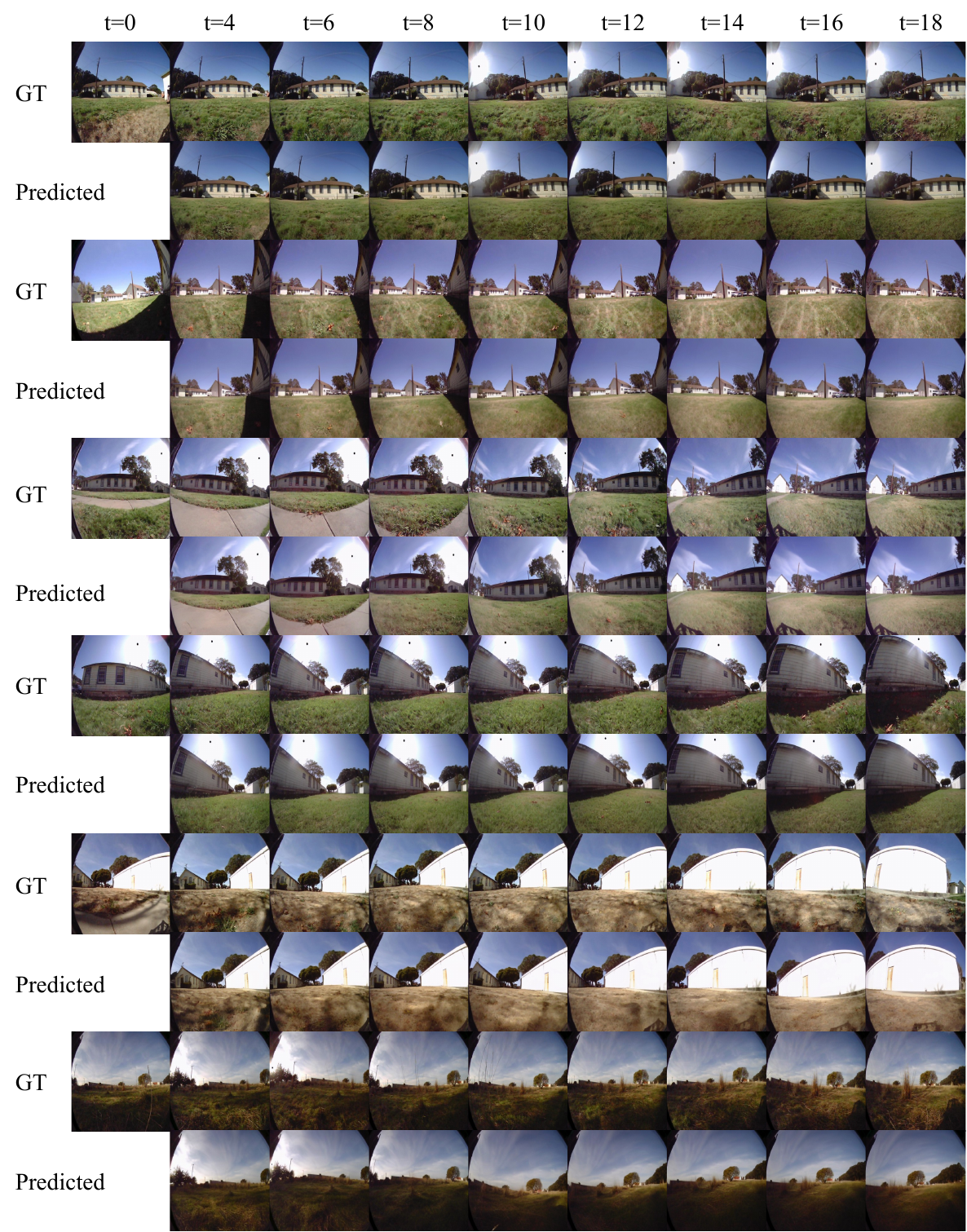}
    \caption{Comparison between ground truth and generated videos by Vid2World in RECON Environment. The first four frames are provided as context.}
    \label{fig:nwm_vis}
\end{figure}
\newpage
\subsection{Generation Results of Real2Sim Policy Evaluation}
\label{app:policyeval}
We provide generation results of Vid2World in the Real2Sim Policy Evaluation task, as shown in Figure \ref{fig:policyeval}. 
\begin{figure}[H]
    \centering    \includegraphics[width=\linewidth]{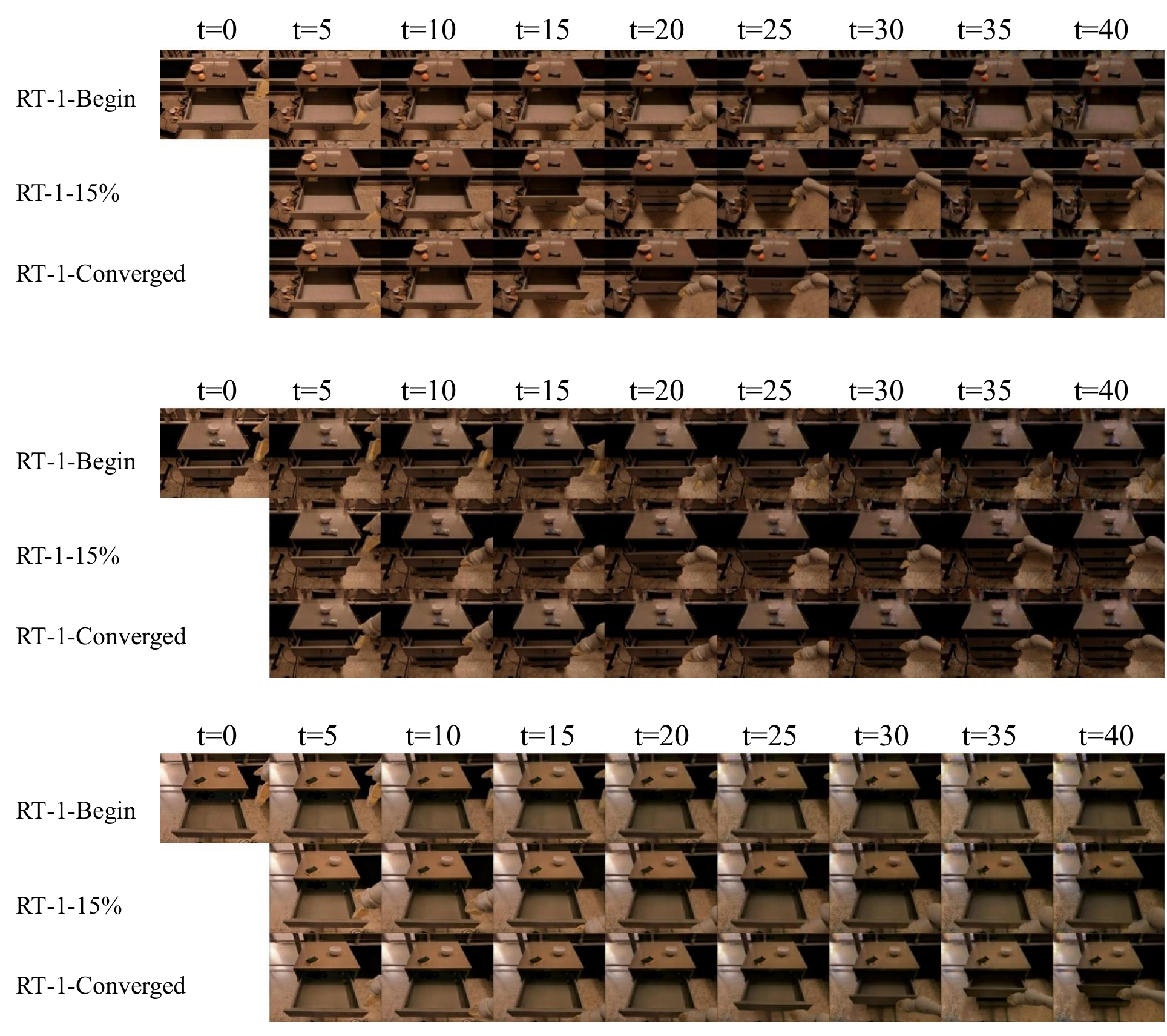}
    \caption{Generation Results for Vid2World in Real2Sim Policy Evaluation experiments.}
    \label{fig:policyeval}
\end{figure}
\color{black}
\clearpage
\subsection{\rev{Impact of Causal Action Guidance Scale on Generation Quality}}
\label{subsec:cfg_search}
\rev{To further validate the efficacy of our proposed \textit{Causal Action Guidance} mechanism, we analyze the impact of the guidance scale $\lambda$ on generation quality using 3D Game Simulation.}
\rev{\paragraph{Setup.} We randomly subsample a subset of 50 trajectories from the validation set in the CS:GO environment, and evaluate Vid2World's performance across a range of scales $\lambda \in [1.0, 1.5, 2.0, 2.5, 3.0, 3.5, 4.0]$, reporting the following four metrics: PSNR, SSIM, LPIPS, and DreamSim. Following the setup in Section \ref{subsec:csgo}, the video prediction metrics are calculated on the predicted frames, conditioned on four initial history frames given as ground truth.}
\rev{\paragraph{Results.} As shown in Figure \ref{fig:cfg_search}, we observe a consistent trend of improvement followed by degradation with respect to increasing guidance scale $\lambda$ in the generation quality across all evaluated metrics. This is aligned with our intuition of probability steering: While absence or insufficiency of guidance scale results in suboptimal metric scores due to poor action adherence, overshoot of guidance scale leads to over-sharpened distributions and visual artifacts. Through varying causal action guidance scale $\lambda$, our mechanism  offer the test-time flexibility of trading off \textit{action alignment} with \textit{visual fidelity}, leading to improved world modeling capabilities.} \rev{These results, taken together with ablation studies in Section \ref{subsec:ablation} (which confirm the necessity of guidance) and theoretical justifications in Appendix \ref{app:steering}, provide a holistic validation of our method, spanning theoretical arguments, intuitive explanations and empirical validations.}

\subsection{Long-Horizon Rollout and Zero-Shot Generalization}
To validate Vid2World's robustness to \textit{temporal extrapolation}, we conduct a long-horizon rollout experiment in the CS:GO environment. Initialized with 9 ground truth frames, the model performs autoregressive generation conditioned on a sliding window of the 9 most recent observations. The action sequence is sampled uniformly from $\{\texttt{W,A,S,D}\}$, with each action held constant for 10 consecutive frames to induce significant movement. We extend the rollout to a total of 100 generated frames, significantly exceeding the training horizon of 16. As illustrated in Figure \ref{fig:long_rollout_full}, while artifacts such as softened textures and drift in spatial layout occur as the rollout horizon increases, the generation maintains relatively high fidelity and physical realism, demonstrating strong robustness against error accumulation.

In addition to in-domain temporal extrapolation, we further evaluate the model's \textit{zero-shot generalization} on a completely out-of-distribution (OOD) dataset: the tactical shooter game \textit{Valorant} \citep{riot2020valorant}. Crucially, during the Vid2World training process, the model was exposed \textit{exclusively to CS:GO data}. Therefore, any cross-domain generalization observed in this setting stems entirely from the robust \textit{visual priors} preserved from the pre-trained video diffusion backbone during our transformation process. We perform autoregressive rollouts up to 50 frames. As shown in Figure \ref{fig:ood}, although the visual quality degrades much faster compared to the in-domain setting, the model surprisingly retains rudimentary capabilities of \textit{temporal consistency} and \textit{action responsiveness}. This zero-shot generalization experiment further validates that  Vid2World enables interactivity while preserving and channeling the video diffusion model's inherent generalization capabilities.
\begin{figure}[p]
    \centering
    \begin{subfigure}[b]{\textwidth}
        \includegraphics[width=\textwidth]{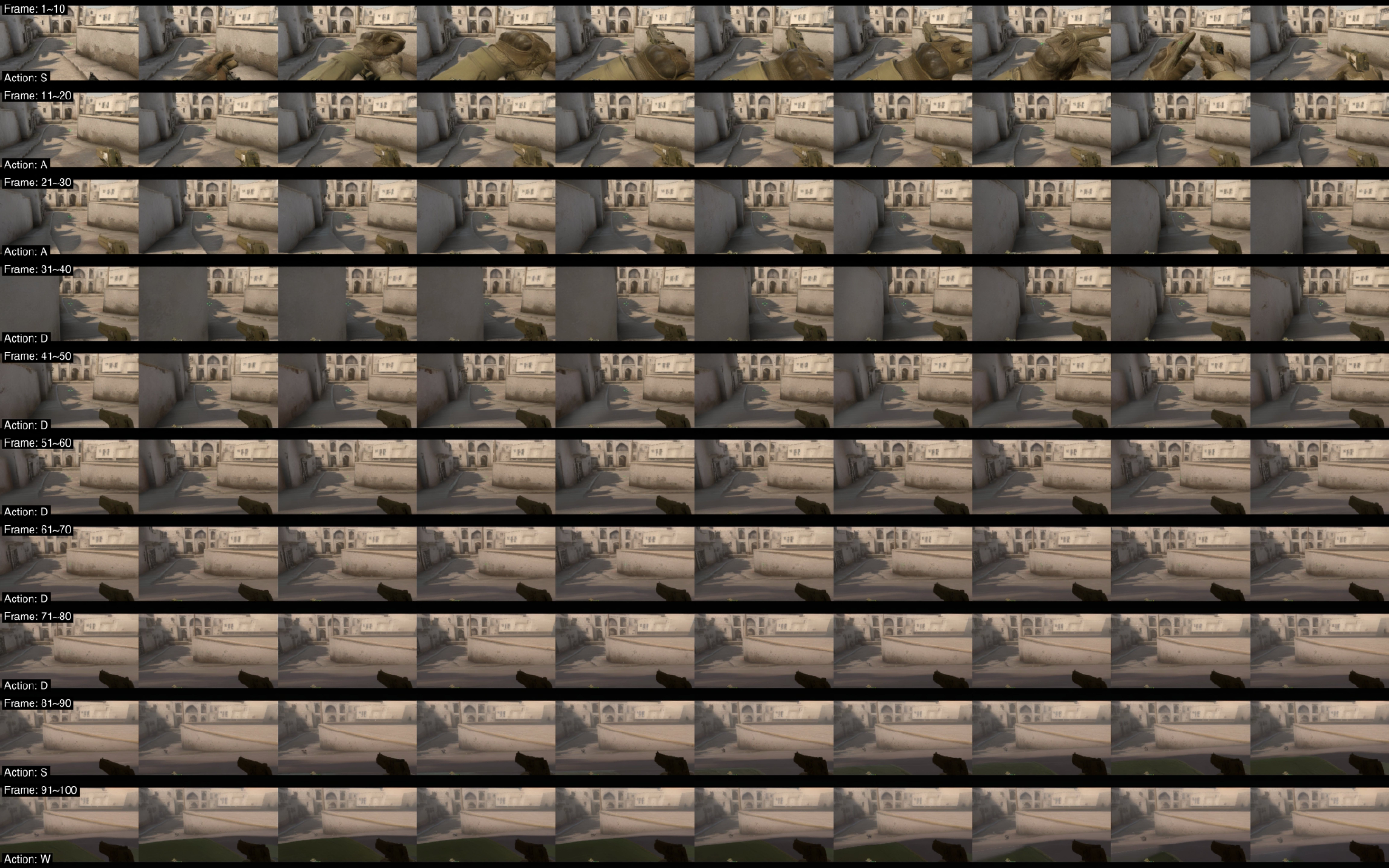}
        \caption{\rev{Rollout Sample 1}}
    \end{subfigure}
    
    \vspace{0.5cm}
    
    \begin{subfigure}[b]{\textwidth}
        \includegraphics[width=\textwidth]{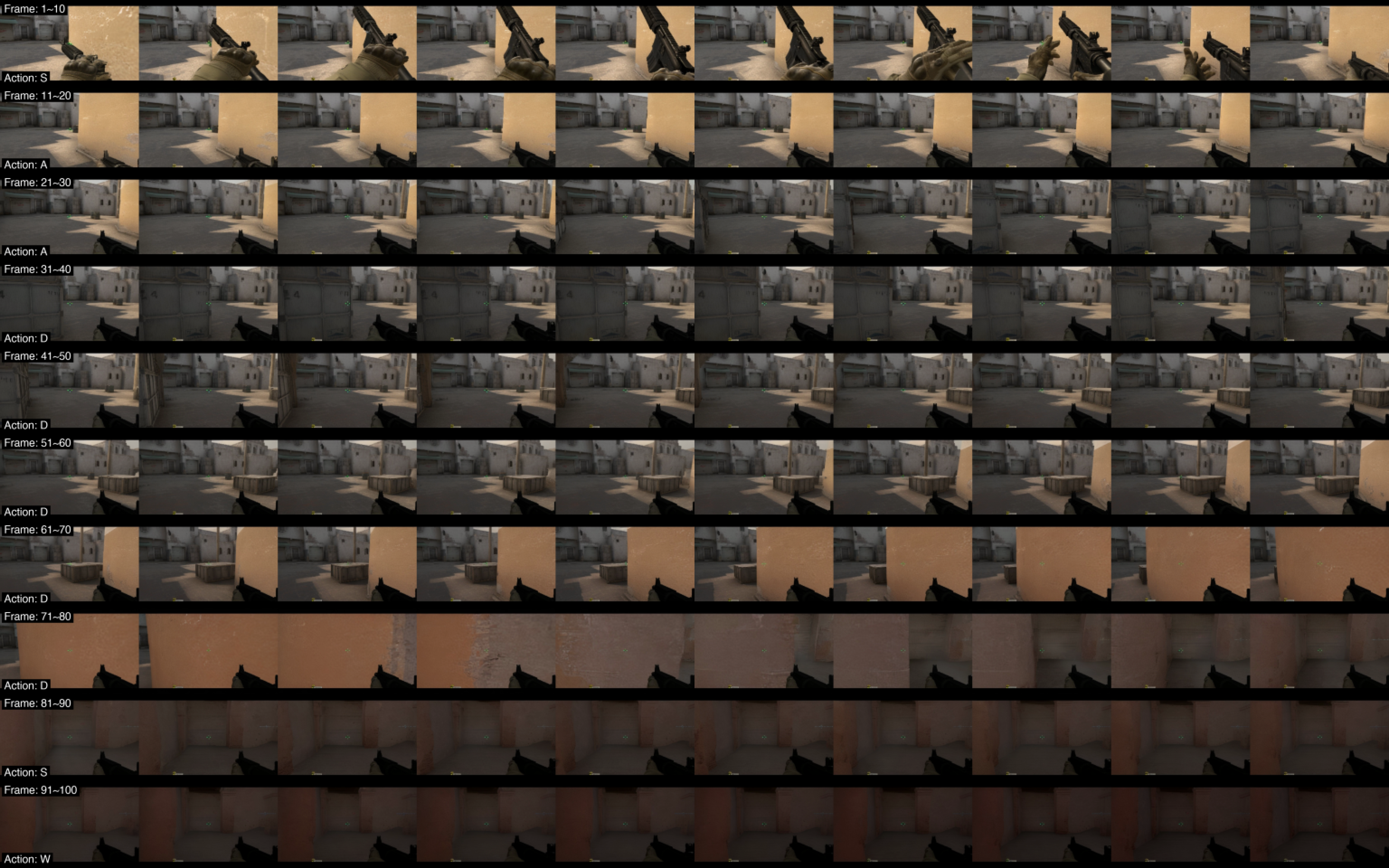}
        \caption{\rev{Rollout Sample 2}}
    \end{subfigure}
    \caption{\rev{\textbf{Temporal Extrapolation via Long-Horizon Rollout (Part I)}. Initialized with 9 history frames, Vid2World autoregressively generates 100 frames conditioned on random action sequences in the CS:GO environment (over 6x the training horizon).  Frame indices and corresponding actions are annotated at the start of each row. \textit{Zoom in for details. Continued on next page.}}}
\end{figure}
\clearpage
\begin{figure}[p]
    \centering
    \ContinuedFloat 
    \begin{subfigure}[b]{\textwidth}
        \includegraphics[width=\textwidth]{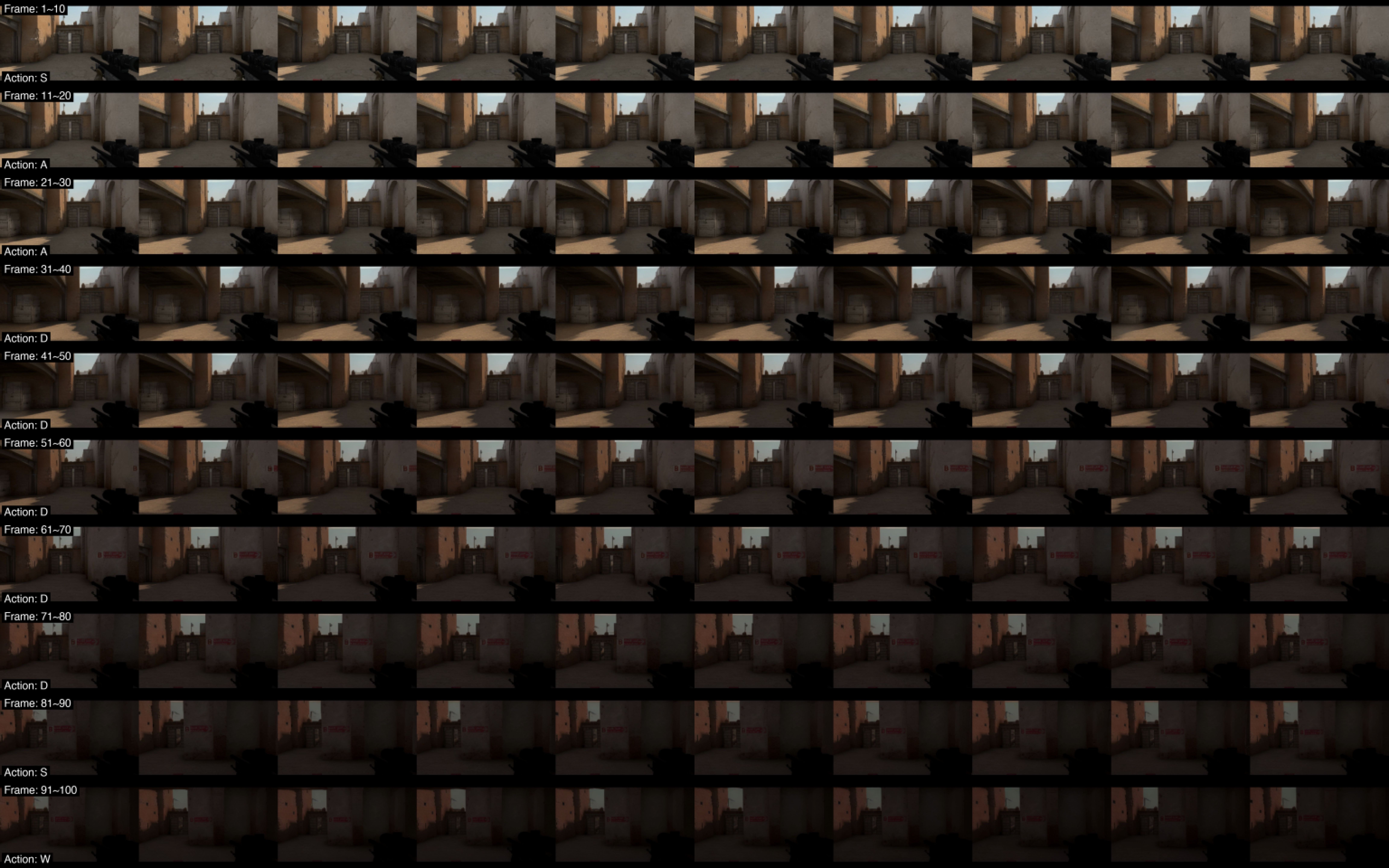}
        \caption{\rev{Rollout Sample 3}}
    \end{subfigure}
    
    \vspace{0.5cm}
    
    \begin{subfigure}[b]{\textwidth}
        \includegraphics[width=\textwidth]{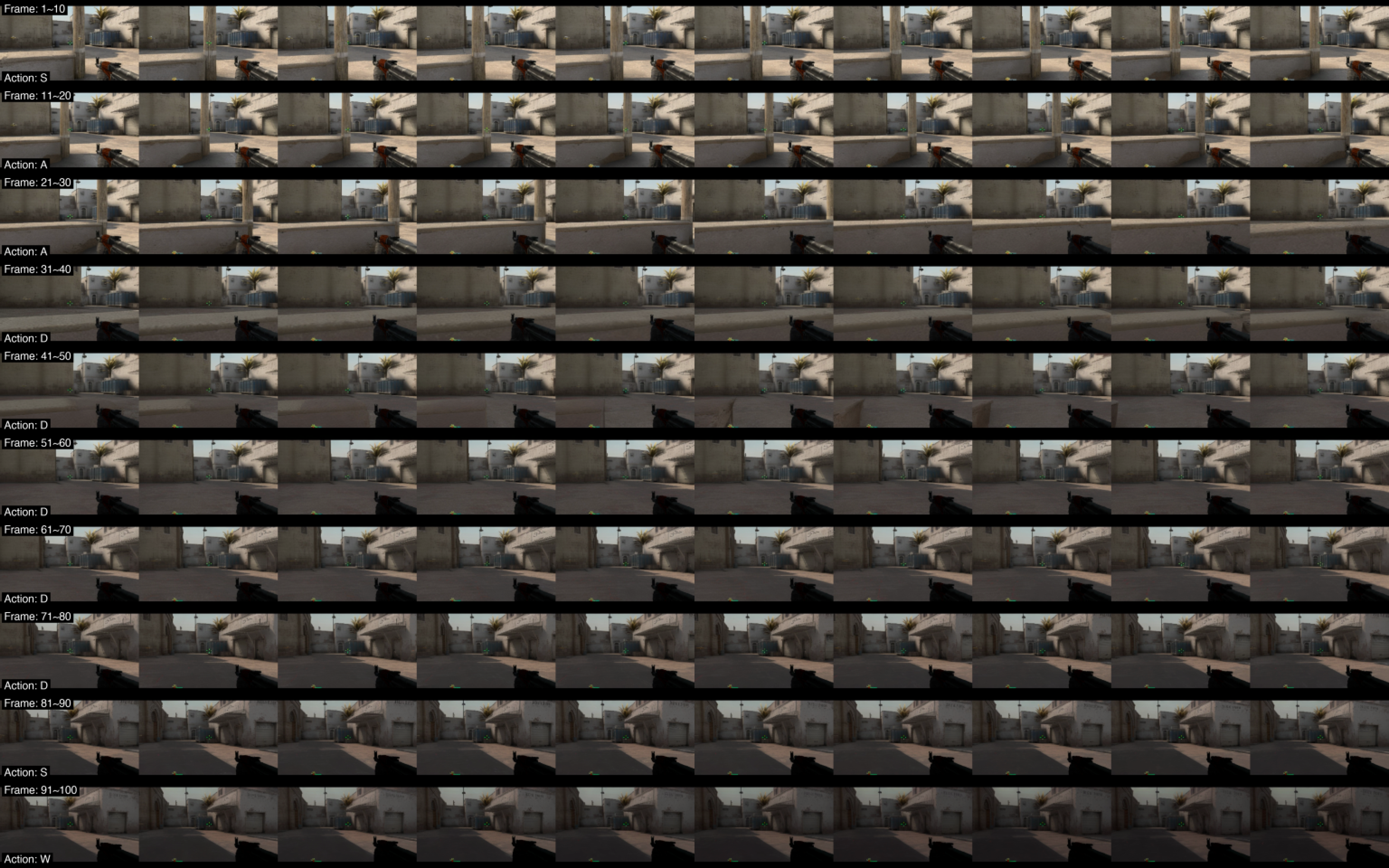}
        \caption{\rev{Rollout Sample 4}}
    \end{subfigure}
    \caption{\rev{\textbf{Temporal Extrapolation via Long-Horizon Rollout (Part II)}. Initialized with 9 history frames, Vid2World autoregressively generates 100 frames conditioned on random action sequences in the CS:GO environment (over 6x the training horizon).  Frame indices and corresponding actions are annotated at the start of each row. \textit{Zoom in for details. Continued on next page.}}}
\end{figure}
\clearpage

\begin{figure}[p]
    \centering
    \ContinuedFloat
    \begin{subfigure}[b]{\textwidth}
        \includegraphics[width=\textwidth]{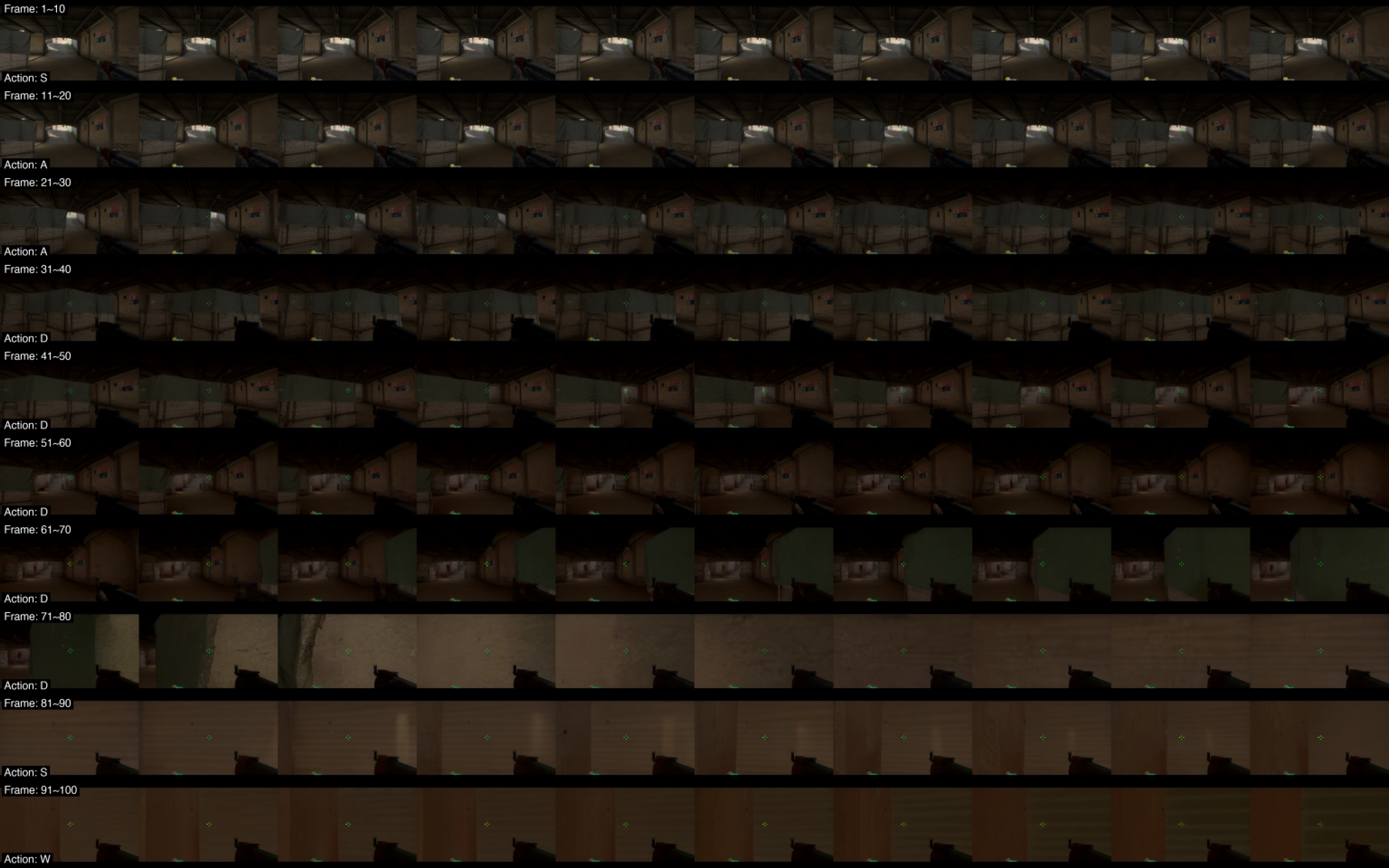}
        \caption{\rev{Rollout Sample 5}}
    \end{subfigure}
    
    \vspace{0.5cm}
    
    \begin{subfigure}[b]{\textwidth}
        \includegraphics[width=\textwidth]{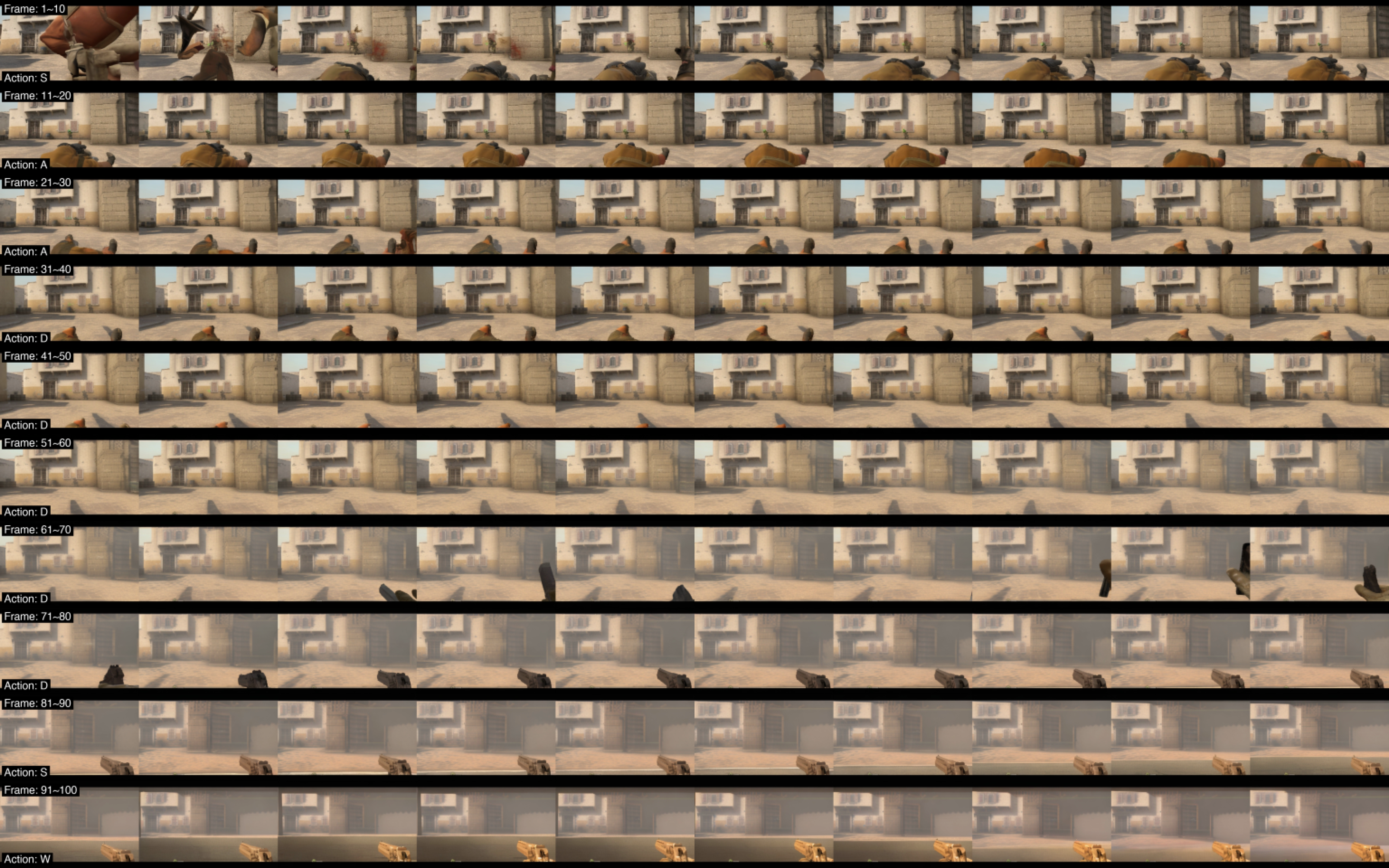}
        \caption{\rev{Rollout Sample 6}}
    \end{subfigure}
    \caption{\rev{\textbf{Temporal Extrapolation via Long-Horizon Rollout (Part III)}. Initialized with 9 history frames, Vid2World autoregressively generates 100 frames conditioned on random action sequences in the CS:GO environment (over 6x the training horizon).  Frame indices and corresponding actions are annotated at the start of each row. \textit{Zoom in for details.}}}
    \label{fig:long_rollout_full}
\end{figure}
\clearpage
\begin{figure}[t]
    \centering
    \begin{subfigure}[b]{\textwidth}
        \centering
        \includegraphics[width=\textwidth]{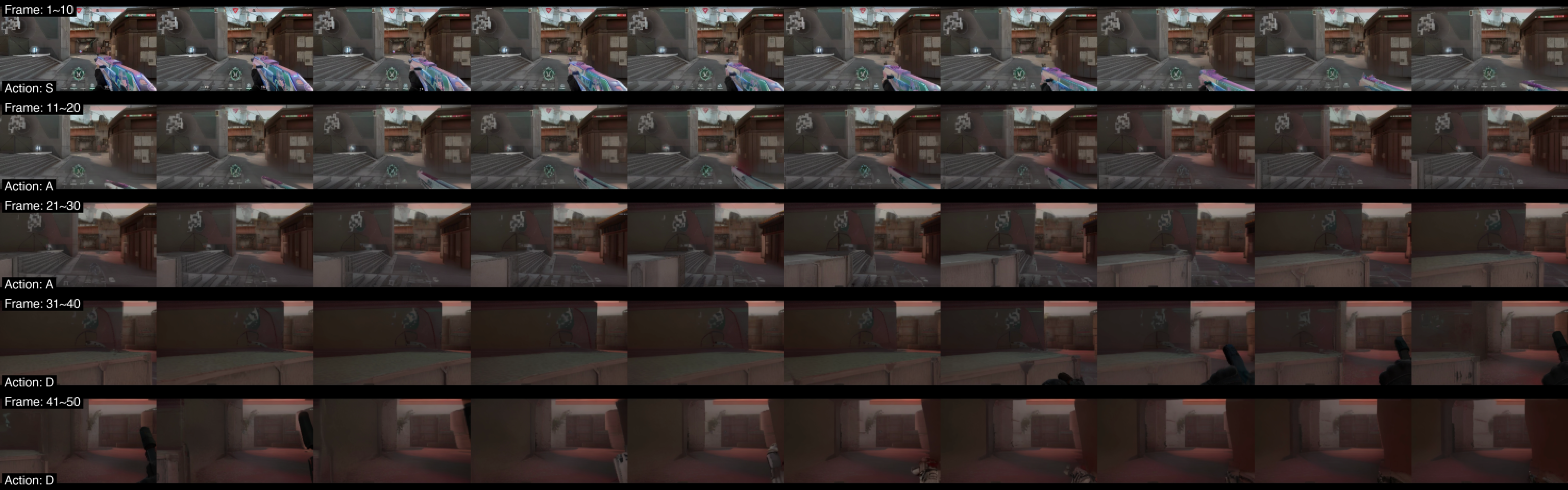}
        \caption{\rev{Zero-Shot Rollout Sample 1}}
    \end{subfigure}
    
    \vspace{0.5cm} 
    
    \begin{subfigure}[b]{\textwidth}
        \centering
        \includegraphics[width=\textwidth]{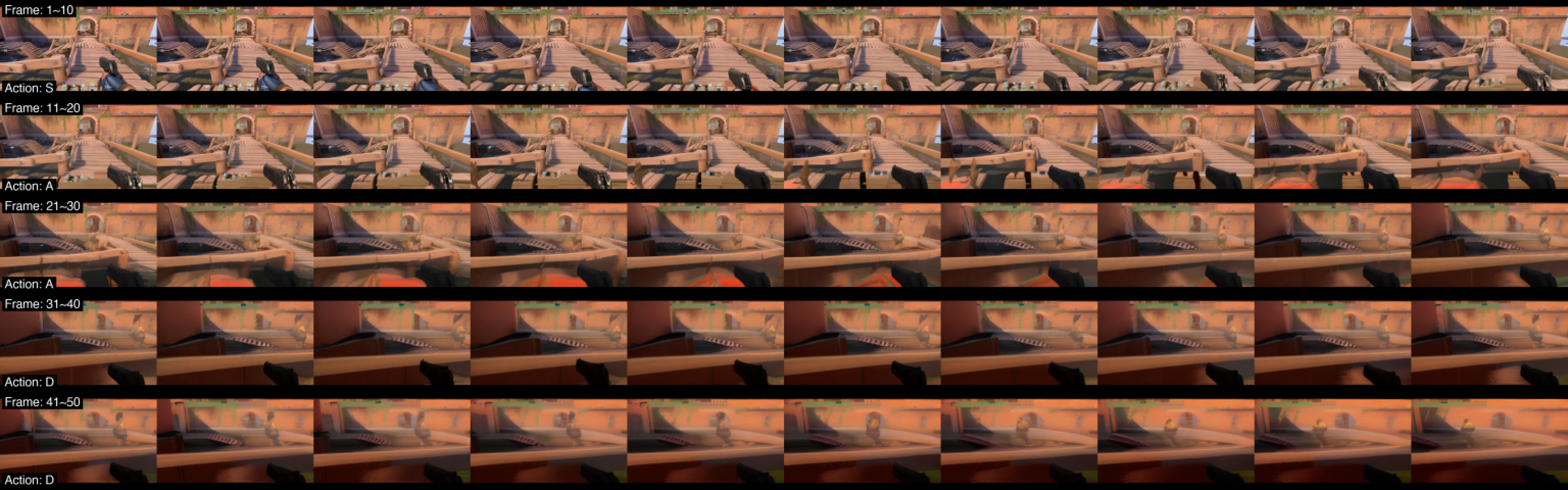} 
        \caption{\rev{Zero-Shot Rollout Sample 2. \textit{Notice the movement of the railing from Frame 21 to 30, moving right due to the character moving left, showcasing action responsiveness.}}}
    \end{subfigure}
    
    \caption{\rev{\textbf{Zero-Shot Generalization on Valorant (OOD)}. Despite being trained exclusively on CS:GO, Vid2World autoregressively generates 50-frame rollouts in the unseen game \textit{Valorant} while suprisingly retaining rudimentary temporal consistency and action responsiveness, demonstrating the robust visual priors preserved from the pre-trained backbone. \textit{Zoom in for details.}}}
    \label{fig:ood}
\end{figure}
\begin{figure}[h]
    \centering
    \includegraphics[width=\linewidth]{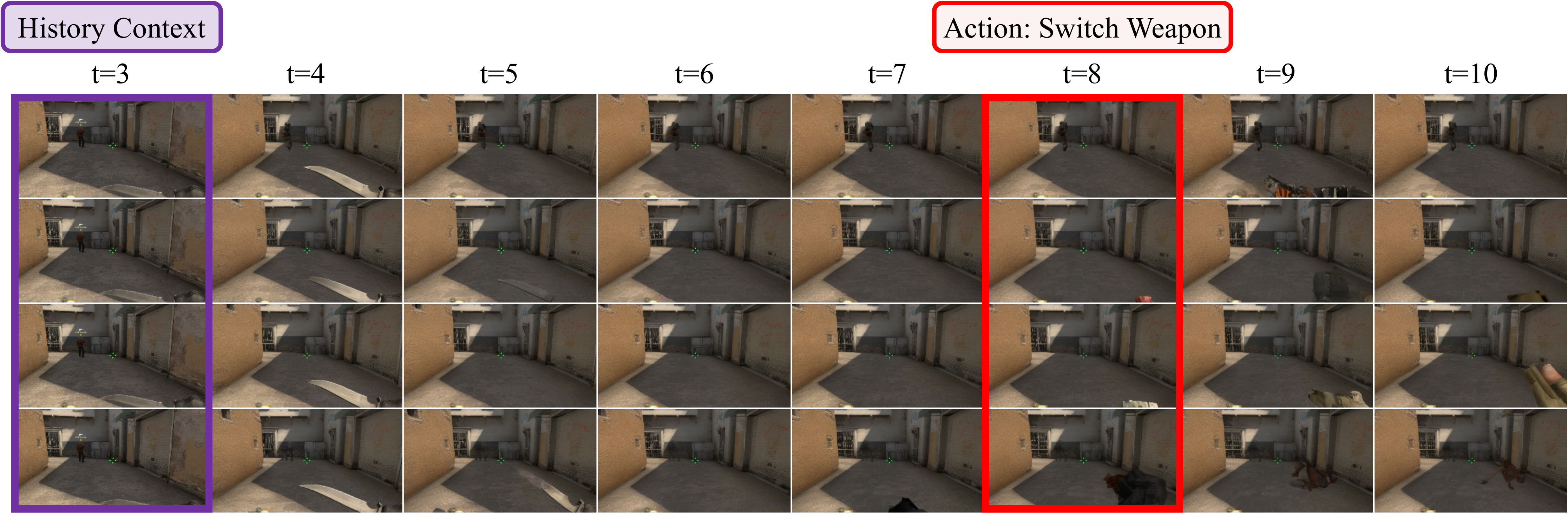}
    \caption{\rev{\textbf{Capturing Environment Stochasticity.} Conditioned on 4 history frames, Vid2World predict the future in auto-regressive rollout. On timestep t=8, given the ``switch weapon" action, the model generates the previously unseen weapon. Since the specific weapon is unobserved in the history, the model produces \textbf{diverse plausible outcomes} with various shapes and colors across samples, effectively capturing the stochastic nature of the environment. \textit{Zoom in for details.}}}
    \label{fig:stochastic_weapon}
\end{figure}
\subsection{Capturing Environment Stochasticity}

To further investigate whether Vid2World captures the complex stochastic nature of the underlying dynamics, rather than merely predicting a single-modal distribution, we conduct a qualitative analysis in the domain of CS:GO. Specifically, 
we fix the provided history ($4$ frames) and the future action sequence (triggering ``switch weapon'' at $t=8$), while sampling multiple trajectories with our Vid2World model. As illustrated in Figure \ref{fig:stochastic_weapon}, the model produces diverse plausible outcomes of the potential weapon, varying in shapes and colors. Since the secondary weapon is not visible in the conditioning context, the model is destined to draw the weapon from a multi-modal distribution. This diversity confirms that our model effectively learns the joint distribution of the data, capturing the inherent stochasticity of the environment rather than collapsing to a single mode.

\subsection{Following Action Semantics}
To validate Vid2World genuinely follows action semantics, rather than predicting observatory trends based solely on history frames, we start from the same initial frame, sampling different action sequences for video prediction. In addition to validating this capability in the CS:GO environment (as shown in Figure \ref{fig:csgo-fixed-actions}), we conduct the experiment with similar spirits using the RT-1 environment.
\paragraph{Setup.}
We randomly sample an initial frame from the validation set, using it as the history context. Since in the RT-1 environment, the action space represents the end effector position of the robot arm, we therefore condition generation on six canonical action sequences: \textit{Move Forward}, \textit{Move Backward}, \textit{Move Left}, \textit{Move Right}, \textit{Lift Up}, and \textit{Press Down}, corresponding to translations of the end effector along the \texttt{x}, \texttt{y} and \texttt{z} axes. Using the same configurations as Section \ref{subsec:rt-1}, we autoregressively rollout trajectories until a total number of 16 frames is reached. 
\paragraph{Results.} As shown in Figure \ref{fig:action_following_rt1}, Vid2World's predictions exhibit clear, directionally coherent end-effector motion that aligns with the semantics of the conditioned action sequence. Notably, the model correctly anticipates challenging phenomena such as \textit{end effector exiting the camera view} (for Move Backward and Move Right) and \textit{physical contact with the table} (for Press Down). Taken together with the action influence on generated trajectory experiment in Figure~\ref{fig:csgo-fixed-actions} and the Real2Sim Policy Evaluation Experiment in Section~\ref{subsec:rt-1}, these results validate Vid2World’s capability of generating future predictions that faithfully reflect the semantics of actions.
\begin{figure}
    \centering
    \includegraphics[width=\linewidth]{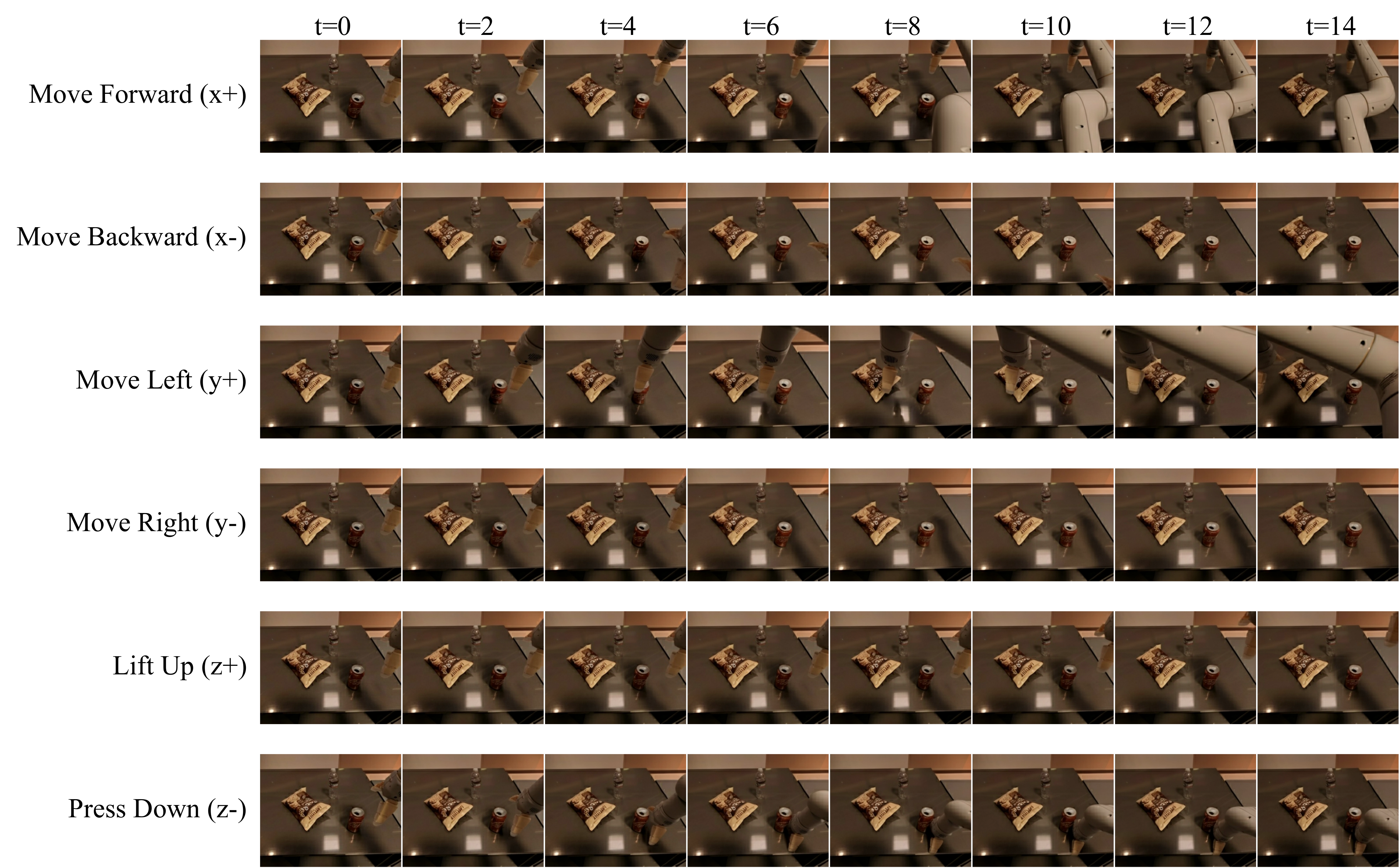}
    \caption{\rev{\textbf{Following Action Semantics.} Conditioned on the same initial frame, Vid2World predicts diverging futures under different action conditions, faithfully following action semantics.}}
    \label{fig:action_following_rt1}
\end{figure}
\subsection{Quantitative Results on Interactive Metrics}

To quantitatively validate Vid2World's capability of performing counterfactural reasoning based on the action sequence, following \citet{bruce2024genie}, we evaluate this ``action semantics following" capability by comparing the generation quality conditioned on ground truth and conditioned on random action input.

\paragraph{Metrics.} We generalize the $\Delta_t\text{-PSNR}$ proposed by \citet{bruce2024genie} to video generation metrics, introducing \textit{normalized delta metrics} ($\Delta_N\text{-}\mathcal{M}$) for corresponding video generation metric $\mathcal{M}$. Specifically, given the ground truth video sequence $\mathbf{x}_{0:T}$, generated sequence conditioned on ground truth action $\tilde{\mathbf{x}}_{0:T}$ and generated sequence conditioned on random actions $\tilde{\mathbf{x}}'_{0:T}$, as well as a video generation metric $\mathcal{M}: \mathbb{R}^{T \times d} \times \mathbb{R}^{T \times d} \rightarrow \mathbb{R}$, the metric is defined as: 
\begin{align*}
    \Delta_N\text{-}\mathcal{M}(\mathbf{x}_{0:T}, \tilde{\mathbf{x}}_{0:T}, \tilde{\mathbf{x}}'_{0:T}):= \frac{\mathcal{M}(\mathbf{x}_{0:T}, \tilde{\mathbf{x}}_{0:T})-\mathcal{M}(\mathbf{x}_{0:T}, \tilde{\mathbf{x}}'_{0:T})}{\mathcal{M}(\mathbf{x}_{0:T}, \tilde{\mathbf{x}}'_{0:T})}.
\end{align*}
Intuitively, this metric measures how much the video generations differ when conditioned on ground truth action sequences compared to actions sampled from a random distribution, normalized by the scale of the metric $\mathcal{M}$. For metric $\mathcal{M}$ such that higher is better, $\Delta_N\text{-}\mathcal{M}$ also satisfies higher is better, as vice versa. 

\begin{table}[t]
\centering
\caption{\rev{\textbf{Quantitative Results of Interactive Metrics on CS:GO environment.} After generating video predictions based on ground truth actions as well as random actions, we calculate the normalized delta metrics $\Delta_N-\mathcal{M}$. Best results are shown in \textbf{bold}.}}
\resizebox{\linewidth}{!}{
\begin{tabular}{lcccccc}
\toprule
\rev{\textbf{Model}} & \rev{$\Delta_N$\textbf{-FVD} $\downarrow$} & \rev{$\Delta_N$\textbf{-FID} $\downarrow$} & \rev{$\Delta_N$\textbf{-SSIM} $\uparrow$} & \rev{$\Delta_N$\textbf{-LPIPS} $\downarrow$} & \rev{$\Delta_N$\textbf{-PSNR} $\uparrow$} & \rev{$\Delta_N$\textbf{-Dreamsim} $\downarrow$} \\
\midrule
\rev{Diamond-Fast} & \rev{-48.43} & \rev{-5.01}  & \rev{44.84} & \rev{-20.95} & \rev{\textbf{52.94}} & \rev{-44.16} \\
\rev{Diamond-HQ}   & \rev{-55.85} & \rev{-14.76} & \rev{\textbf{47.04}} & \rev{\textbf{-25.55}} & \rev{48.78} & \rev{\textbf{-53.35}} \\
\rev{Vid2World}    & \rev{\textbf{-77.06}} & \rev{\textbf{-29.44}} & \rev{22.40} & \rev{-20.32} & \rev{17.60} & \rev{-41.81} \\
\bottomrule
\end{tabular}}
\label{tab:csgo_interactive_metrics}
\end{table}

\paragraph{Setup.} We evaluate our model's action following capability under the CS:GO environment. Following Section \ref{subsec:csgo}, we initialize with 4 history frames and corresponding ground truth actions, and predict on the validation set (500 trajectories) autoregressively until a total frame number of 16 is reached. For randomly sampled actions, we uniformly sample from all valid keyboard inputs. We report the calculated Delta Normalized Metrics $\Delta_N$-$\mathcal{M}$, are shown in Table \ref{tab:csgo_interactive_metrics}.
\paragraph{Results and Discussion.} As shown in Table \ref{tab:csgo_interactive_metrics}, Vid2World outperforms baseline methods in $\Delta_N\text{-}\text{FVD}$ and $\Delta_N\text{-}\text{FID}$, showing Vid2World's capability of following action semantics. However, we notice clear trends of generation quality degradation when sampling DIAMOND with random actions, whereas our model does not. Hence, we raise the concern of ``hacking" the metric, where a model can acquire high quantitative performance on these delta normalized metrics by generating low-quality videos when conditioned on random action distribution. This inability to distinguish \textit{action following capabilities} with \textit{generation quality degrade under ood action distributions}, contributes at least partially to Vid2World's limited performance on these metrics. We anticipate futher work to come up with better evaluation metrics for measuring interactivity capabilities.

\subsection{The Role of Large-Scale Pretraining}

In this section, we focus on a key question, highlighting the role of large-scale pretraining on this transfer method:
\begin{itemize}
    \item[\textbf{RQ}]: Is the model's success attributed to channeling visual priors from the pre-trained video diffusion model or merely the novel architectural changes?
\end{itemize}

\begin{table}[t]
\caption{\rev{\textbf{Ablation study on the role of video pretraining}: To validate Vid2World truly transfers priors from the pretrained video diffusion model, we train an additional model from scratch in the RT-1 environment, maintaining the exact architecture as Vid2World but randomly initializing the parameters. Best results in \textbf{bold}, worst in \textit{italics}.}}
\centering
\begin{tabular}{lccccccc}
\toprule
\small
\rev{\textbf{Model}} & \rev{\textbf{WT}} & \rev{\textbf{AG}} &
  \rev{\textbf{FVD $\downarrow$}} &
  \rev{\textbf{FID $\downarrow$}} &
  \rev{\textbf{SSIM $\uparrow$}} &
  \rev{\textbf{LPIPS $\downarrow$}} &
  \rev{\textbf{PSNR $\uparrow$}} 
  \\ \midrule
\rev{Vid2World} & \rev{Shift} & &
\rev{29.9} &
\rev{7.85} &
\rev{0.799} & 
\rev{0.185} &
\rev{21.5} \\
\rev{Vid2World} & \rev{Masked} & &
\rev{29.4} &
\rev{7.07} &
\rev{0.824} & 
\rev{0.169} &
\rev{22.9} \\
\rev{Vid2World} & \rev{Extrapolative} & &
\rev{28.6} &
\rev{7.52} &
\rev{0.832} & 
\rev{0.162} &
\rev{23.4} \\
\rev{Vid2World} & \rev{Masked} & \rev{\checkmark} &
\rev{25.8} &
\rev{6.84} &
\rev{\textbf{0.840}} & 
\rev{\textbf{0.159}} &
\rev{\textbf{23.9}} \\
\rev{Vid2World} & \rev{Extrapolative} & \rev{\checkmark} &
\rev{\textbf{22.4}} &
\rev{\textbf{6.16}} &
\rev{0.839} & 
\rev{\textbf{0.159}} &
\rev{\textbf{23.9}} \\
\rev{From Scratch} & \rev{Does not Apply} & \rev{\checkmark} &
\rev{\textit{1768.8}} &
\rev{\textit{469.4}} &
\rev{\textit{0.065}} & 
\rev{\textit{0.8177}} &
\rev{\textit{8.3}} \\
\bottomrule
\end{tabular}
\label{tab:ablation_results_with_from_scratch}
\end{table}

To ablate the role of large-scale video pretraining, we  randomly initialize the model parameters, following the exact architectural choice as Vid2World, and train the entire model from scratch based solely on the RT-1 dataset, using 30k gradient steps for fair comparison with other ablated models. 

As shown in Table \ref{tab:ablation_results_with_from_scratch}, even with the identical architectural and algorithmic designs, training the world model from scratch leads to significant performance drop compared to all other Vid2World models that utilize the pre-existing video diffusion model across all metrics. This indicates that during Vid2World, the generation priors encapsulated in video diffusion model due to large-scale video pretraining truly transfers  to the transformed interactive world model.
\subsection{Empirical validation of Causal Weight Transfer}

To validate the functioning our causal weight transfer mechanism, we conduct additional experiment, investigating whether representation similarity is perserved during transformation. Specifically, we focus on the following research question:
\begin{itemize}
    \item[\textbf{RQ}:] Does Weight Transfer preserve representations close to the original video diffusion model? 
\end{itemize}

\paragraph{Setup.} To measure the similarity of the transformed feature of our transformed world model and the original pretrained video diffusion model, we utilize the metric of \textbf{Cosine Similarity}. Cosine Similarity measures the similarity between two non-zero vectors in a multi-dimensional space by calculating the cosine of the angle between them, resulting in values in $[-1,1]$. The higher the score, the stronger the correlation. We randomly sample 50 trajectories in the validation set of RT-1, passing the clean video sequence as input. Actions are dropped out for world models, and noise scale is set to zero across of frames. We measure the mean cosine similarity between latent representations extracted after the first convolutional layer of our transformed world model and the original pretrained video diffusion model. For all models, we employ training on 30k gradient steps, following ablation setup (Section \ref{subsec:ablation}). The results are shown in Table \ref{tab:weight_transfer_similarity}.

\begin{table}[h]
\centering
\caption{\rev{\textbf{Cosine similarity between features of transformed world models and original video diffusion model.} Taken the same video sequence as input, we measure  the mean cosine similar of features taken after the first convolution layer of both models. Values are scaled by 100 for better visualization. Best results are shown in \textbf{bold}.}}
\label{tab:weight_transfer_similarity}
\begin{tabular}{lccc}
\hline
\rev{\textbf{Model}} & \rev{\textbf{WT}} & \rev{\textbf{AG}} & \rev{\textbf{Cosine Similarity $\uparrow$}} \\
\hline
\rev{Vid2World} & \rev{Shift} & \rev{$\checkmark$} &\rev{71.04} \\
\rev{Vid2World} & \rev{Masked} & \rev{$\checkmark$} & \rev{71.09} \\
\rev{Vid2World} & \rev{Extrapolative} & \rev{$\checkmark$} & \rev{\textbf{71.13}} \\
\hline
\end{tabular}
\end{table}
\paragraph{Results.} As shown in Table \ref{tab:weight_transfer_similarity}, all three weight transfer methods of Vid2World achieve high cosine similarity scores (>0.7), demonstrating that Vid2World preserves the representational structure of the original video diffusion model. The extrapolative variant achieves the highest similarity, providing empirical evidence of the functioning of linear extrapolation, effectively repurposing generative video priors for interactive world modeling.
\clearpage
\section{Computation Costs}
\label{app:compute}

In this section, we provide a detailed breakdown of the computational costs required for Vid2World. As a first step toward repurposing large-scale video diffusion models into interactive world models, our primary focus has been on establishing visual fidelity and physical realism rather than optimizing for inference latency.

\paragraph{Training Computation Costs.} We conduct training on 4 Nvidia 40GB A100 GPUs, using the hyperparameters provided in Table \ref{tab:hyperparams}. The configured training completes 100k gradient steps in 7 days, utilizing a memory footprint of 37GB of GPU VRAM.
\paragraph{Inference Computation Costs.}
Since our model is capable of autoregressive generation conditioned on a flexible number of history frames and ddim sample steps, our inference speed depends on the configuration of such hyperparameters. We provide a realistic setup below: On a single NVIDIA A100 (40GB) GPU, the model is configured with a context window of 10 history frames and utilizes 50-step DDIM sampling with causal action guidance scale greater than 1. Under these settings, with a batch size of 2, the model's inference latency is approximately 20.0 seconds per generated frame, with memory footprint of approximately 16 GB of GPU VRAM.

\paragraph{Potential Speedup Methods.}
While the current inference latency reflects the heavy computational cost of high-fidelity diffusion sampling, we identify several promising avenues of speeding up inference: (a.) Decreasing the number of sampling steps; (b.) Conditioning on shorter history; (c.) Software optimizations for instance JIT compiling the model; (d.) Implementing (rolling) KV-Cache for autoregressive generation \citep{Yin2024FromSB, Huang2025SelfFB}. We anticipate further work that builds upon these insights, achieving real-time action-conditioned video generation.
\color{black}

\section{The Use of Large Language Models (LLMs)}

Large Language Models (LLMs) were employed with the single purpose of polishing the writing of this manuscript. Their use was strictly limited to enhancing language quality, improving readability, and ensuring clarity throughout the paper. In particular, LLM assisted in rephrasing sentences,  checking grammar, and identifying typos.

It is important to note that LLMs were not involved in the conception and realization of research ideas, the design of methodologies, or the execution of experiments. All research contributions—including conceptual development, methodological choices, and data analysis—were carried out independently by the authors. LLMs' role was strictly confined to linguistic refinement, without influencing the scientific substance of the work.

The authors take full responsibility for the entire content of the manuscript, including text that was modified using LLM-based assistance. We have taken great care to ensure our use of LLMs adheres to ethical guidelines and does not give rise to plagiarism or scientific misconduct.

\begin{thebibliography}{76}
\providecommand{\natexlab}[1]{#1}
\providecommand{\url}[1]{\texttt{#1}}
\expandafter\ifx\csname urlstyle\endcsname\relax
  \providecommand{\doi}[1]{doi: #1}\else
  \providecommand{\doi}{doi: \begingroup \urlstyle{rm}\Url}\fi

\bibitem[Alonso et~al.(2024)Alonso, Jelley, Micheli, Kanervisto, Storkey, Pearce, and Fleuret]{alonso2024diffusion}
Eloi Alonso, Adam Jelley, Vincent Micheli, Anssi Kanervisto, Amos~J Storkey, Tim Pearce, and Fran{\c{c}}ois Fleuret.
\newblock Diffusion for world modeling: Visual details matter in atari.
\newblock \emph{Advances in Neural Information Processing Systems}, 37:\penalty0 58757--58791, 2024.

\bibitem[Bamford \& Lucas(2020)Bamford and Lucas]{bamford2020neural}
Chris Bamford and Simon~M Lucas.
\newblock Neural game engine: Accurate learning of generalizable forward models from pixels.
\newblock In \emph{2020 IEEE Conference on Games (CoG)}, pp.\  81--88. IEEE, 2020.

\bibitem[Bar et~al.(2025)Bar, Zhou, Tran, Darrell, and LeCun]{bar2025navigation}
Amir Bar, Gaoyue Zhou, Danny Tran, Trevor Darrell, and Yann LeCun.
\newblock Navigation world models.
\newblock In \emph{Proceedings of the Computer Vision and Pattern Recognition Conference}, pp.\  15791--15801, 2025.

\bibitem[Betker et~al.(2023)Betker, Goh, Jing, Brooks, Wang, Li, Ouyang, Zhuang, Lee, Guo, et~al.]{betker2023improving}
James Betker, Gabriel Goh, Li~Jing, Tim Brooks, Jianfeng Wang, Linjie Li, Long Ouyang, Juntang Zhuang, Joyce Lee, Yufei Guo, et~al.
\newblock Improving image generation with better captions.
\newblock \emph{Computer Science. https://cdn. openai. com/papers/dall-e-3. pdf}, 2\penalty0 (3):\penalty0 8, 2023.

\bibitem[Blattmann et~al.(2023)Blattmann, Dockhorn, Kulal, Mendelevitch, Kilian, Lorenz, Levi, English, Voleti, Letts, Jampani, and Rombach]{svd}
Andreas Blattmann, Tim Dockhorn, Sumith Kulal, Daniel Mendelevitch, Maciej Kilian, Dominik Lorenz, Yam Levi, Zion English, Vikram Voleti, Adam Letts, Varun Jampani, and Robin Rombach.
\newblock Stable video diffusion: Scaling latent video diffusion models to large datasets.
\newblock \emph{arXiv preprint arXiv: 2311.15127}, 2023.

\bibitem[Boffi et~al.(2025)Boffi, Albergo, and Vanden-Eijnden]{boffi2025build}
Nicholas~M. Boffi, Michael~S. Albergo, and Eric Vanden-Eijnden.
\newblock How to build a consistency model: Learning flow maps via self-distillation.
\newblock \emph{arXiv preprint arXiv: 2505.18825}, 2025.

\bibitem[Bommasani et~al.(2021)]{bommasani2021opportunities}
Rishi Bommasani et~al.
\newblock On the opportunities and risks of foundation models.
\newblock \emph{arXiv preprint arXiv: 2108.07258}, 2021.

\bibitem[Brohan et~al.(2023)Brohan, Brown, Carbajal, Chebotar, Dabis, Finn, Gopalakrishnan, Hausman, Herzog, Hsu, et~al.]{brohan2023rt}
Anthony Brohan, Noah Brown, Justice Carbajal, Yevgen Chebotar, Joseph Dabis, Chelsea Finn, Keerthana Gopalakrishnan, Karol Hausman, Alexander Herzog, Jasmine Hsu, et~al.
\newblock Rt-1: Robotics transformer for real-world control at scale.
\newblock \emph{Robotics: Science and Systems XIX}, 2023.

\bibitem[Bruce et~al.(2024)Bruce, Dennis, Edwards, Parker-Holder, Shi, Hughes, Lai, Mavalankar, Steigerwald, Apps, et~al.]{bruce2024genie}
Jake Bruce, Michael~D Dennis, Ashley Edwards, Jack Parker-Holder, Yuge Shi, Edward Hughes, Matthew Lai, Aditi Mavalankar, Richie Steigerwald, Chris Apps, et~al.
\newblock Genie: Generative interactive environments.
\newblock In \emph{Forty-first International Conference on Machine Learning}, 2024.

\bibitem[Chen et~al.(2024)Chen, Mart{\'\i}~Mons{\'o}, Du, Simchowitz, Tedrake, and Sitzmann]{chen2024diffusion}
Boyuan Chen, Diego Mart{\'\i}~Mons{\'o}, Yilun Du, Max Simchowitz, Russ Tedrake, and Vincent Sitzmann.
\newblock Diffusion forcing: Next-token prediction meets full-sequence diffusion.
\newblock \emph{Advances in Neural Information Processing Systems}, 37:\penalty0 24081--24125, 2024.

\bibitem[Chen et~al.(2023)Chen, Xia, He, Zhang, Cun, Yang, Xing, Liu, Chen, Wang, et~al.]{chen2023videocrafter1}
Haoxin Chen, Menghan Xia, Yingqing He, Yong Zhang, Xiaodong Cun, Shaoshu Yang, Jinbo Xing, Yaofang Liu, Qifeng Chen, Xintao Wang, et~al.
\newblock Videocrafter1: Open diffusion models for high-quality video generation.
\newblock \emph{arXiv preprint arXiv:2310.19512}, 2023.

\bibitem[Dawid \& LeCun(2023)Dawid and LeCun]{dawid2023introduction}
Anna Dawid and Yann LeCun.
\newblock Introduction to latent variable energy-based models: A path towards autonomous machine intelligence.
\newblock \emph{Journal of Statistical Mechanics: Theory and Experiment}, 2023.
\newblock \doi{10.1088/1742-5468/ad292b}.

\bibitem[Decart et~al.(2024)Decart, Quevedo, McIntyre, Campbell, Chen, and Wachen]{oasis2024}
Decart, Julian Quevedo, Quinn McIntyre, Spruce Campbell, Xinlei Chen, and Robert Wachen.
\newblock Oasis: A universe in a transformer, 2024.
\newblock URL \url{https://oasis-model.github.io/}.

\bibitem[DeepMind(2024)]{veo}
Google DeepMind.
\newblock Veo 2: Our state-of-the-art video generation model.
\newblock \url{https://deepmind.google/technologies/veo/veo-2/}, 2024.

\bibitem[Dhariwal \& Nichol(2021)Dhariwal and Nichol]{dhariwal2021diffusion}
Prafulla Dhariwal and Alexander Nichol.
\newblock Diffusion models beat gans on image synthesis.
\newblock \emph{Advances in neural information processing systems}, 34:\penalty0 8780--8794, 2021.

\bibitem[Fu et~al.(2023)Fu, Tamir, Sundaram, Chai, Zhang, Dekel, and Isola]{fu2023dreamsim}
Stephanie Fu, Netanel Tamir, Shobhita Sundaram, Lucy Chai, Richard Zhang, Tali Dekel, and Phillip Isola.
\newblock Dreamsim: Learning new dimensions of human visual similarity using synthetic data.
\newblock \emph{Advances in Neural Information Processing Systems}, 36:\penalty0 50742--50768, 2023.

\bibitem[Gao et~al.(2025)Gao, Zhou, Du, Zhang, and Gan]{gaoadaworld}
Shenyuan Gao, Siyuan Zhou, Yilun Du, Jun Zhang, and Chuang Gan.
\newblock Adaworld: Learning adaptable world models with latent actions.
\newblock In \emph{Forty-second International Conference on Machine Learning}, 2025.

\bibitem[Geng et~al.(2025)Geng, Deng, Bai, Kolter, and He]{geng2025mean}
Zhengyang Geng, Mingyang Deng, Xingjian Bai, J~Zico Kolter, and Kaiming He.
\newblock Mean flows for one-step generative modeling.
\newblock \emph{arXiv preprint arXiv:2505.13447}, 2025.

\bibitem[Gkountouras et~al.(2025)Gkountouras, Lindemann, Lippe, Gavves, and Titov]{gkountouras2024language}
John Gkountouras, Matthias Lindemann, Phillip Lippe, Efstratios Gavves, and Ivan Titov.
\newblock Language agents meet causality - bridging llms and causal world models.
\newblock In \emph{International Conference on Learning Representations}, 2025.

\bibitem[Google(2025)]{genie3}
Google.
\newblock Genie 3: A new frontier for world models, 2025.
\newblock URL \url{https://deepmind.google/blog/genie-3-a-new-frontier-for-world-models/}.

\bibitem[Guo et~al.(2025)Guo, Yang, et~al.]{guo_deepseek-r1_2025}
Daya Guo, Yang, et~al.
\newblock {DeepSeek}-r1 incentivizes reasoning in {LLMs} through reinforcement learning.
\newblock \emph{Nature}, 645\penalty0 (8081):\penalty0 633--638, 2025.
\newblock ISSN 1476-4687.
\newblock \doi{10.1038/s41586-025-09422-z}.
\newblock URL \url{https://doi.org/10.1038/s41586-025-09422-z}.

\bibitem[Guo et~al.(2024)Guo, Yang, Rao, Liang, Wang, Qiao, Agrawala, Lin, and Dai]{guo2024animatediff}
Yuwei Guo, Ceyuan Yang, Anyi Rao, Zhengyang Liang, Yaohui Wang, Yu~Qiao, Maneesh Agrawala, Dahua Lin, and Bo~Dai.
\newblock Animatediff: Animate your personalized text-to-image diffusion models without specific tuning.
\newblock In \emph{International Conference on Learning Representations}, 2024.

\bibitem[Ha \& Schmidhuber(2018)Ha and Schmidhuber]{ha2018recurrent}
David Ha and J{\"u}rgen Schmidhuber.
\newblock Recurrent world models facilitate policy evolution.
\newblock \emph{Advances in neural information processing systems}, 31, 2018.

\bibitem[HaCohen et~al.(2024)HaCohen, Chiprut, Brazowski, Shalem, Moshe, Richardson, Levin, Shiran, Zabari, Gordon, Panet, Weissbuch, Kulikov, Bitterman, Melumian, and Bibi]{hacohen2024ltx0video0}
Yoav HaCohen, Nisan Chiprut, Benny Brazowski, Daniel Shalem, Dudu Moshe, Eitan Richardson, Eran Levin, Guy Shiran, Nir Zabari, Ori Gordon, Poriya Panet, Sapir Weissbuch, Victor Kulikov, Yaki Bitterman, Zeev Melumian, and Ofir Bibi.
\newblock Ltx-video: Realtime video latent diffusion.
\newblock \emph{arXiv preprint arXiv: 2501.00103}, 2024.

\bibitem[Hafner et~al.(2020)Hafner, Lillicrap, Ba, and Norouzi]{dreamer}
Danijar Hafner, Timothy~P. Lillicrap, Jimmy Ba, and Mohammad Norouzi.
\newblock Dream to control: Learning behaviors by latent imagination.
\newblock In \emph{International Conference on Learning Representations}. OpenReview.net, 2020.
\newblock URL \url{https://openreview.net/forum?id=S1lOTC4tDS}.

\bibitem[Hao et~al.(2023)Hao, Gu, Ma, Hong, Wang, Wang, and Hu]{hao2023reasoning}
Shibo Hao, Yi~Gu, Haodi Ma, Joshua~Jiahua Hong, Zhen Wang, Daisy~Zhe Wang, and Zhiting Hu.
\newblock Reasoning with language model is planning with world model.
\newblock In \emph{Empirical Methods in Natural Language Processing}, 2023.

\bibitem[Haykin(1994)]{haykin1994neural}
Simon Haykin.
\newblock \emph{Neural networks: a comprehensive foundation}.
\newblock Prentice Hall PTR, 1994.

\bibitem[He et~al.(2025)He, Zhang, Lin, Xu, and Pan]{he2025pre0trained}
Haoran He, Yang Zhang, Liang Lin, Zhongwen Xu, and Ling Pan.
\newblock Pre-trained video generative models as world simulators.
\newblock \emph{arXiv preprint arXiv: 2502.07825}, 2025.

\bibitem[Heusel et~al.(2017)Heusel, Ramsauer, Unterthiner, Nessler, and Hochreiter]{heusel2017gans}
Martin Heusel, Hubert Ramsauer, Thomas Unterthiner, Bernhard Nessler, and Sepp Hochreiter.
\newblock Gans trained by a two time-scale update rule converge to a local nash equilibrium.
\newblock \emph{Advances in neural information processing systems}, 30, 2017.

\bibitem[Ho \& Salimans(2021)Ho and Salimans]{ho2021classifier}
Jonathan Ho and Tim Salimans.
\newblock Classifier-free diffusion guidance.
\newblock In \emph{NeurIPS 2021 Workshop on Deep Generative Models and Downstream Applications}, 2021.

\bibitem[Ho et~al.(2020)Ho, Jain, and Abbeel]{ho2020denoising}
Jonathan Ho, Ajay Jain, and Pieter Abbeel.
\newblock Denoising diffusion probabilistic models.
\newblock \emph{Advances in neural information processing systems}, 33:\penalty0 6840--6851, 2020.

\bibitem[Ho et~al.(2022)Ho, Salimans, Gritsenko, Chan, Norouzi, and Fleet]{ho2022video}
Jonathan Ho, Tim Salimans, Alexey Gritsenko, William Chan, Mohammad Norouzi, and David~J Fleet.
\newblock Video diffusion models.
\newblock \emph{Advances in neural information processing systems}, 35:\penalty0 8633--8646, 2022.

\bibitem[Hore \& Ziou(2010)Hore and Ziou]{hore2010image}
Alain Hore and Djemel Ziou.
\newblock Image quality metrics: Psnr vs. ssim.
\newblock In \emph{2010 20th international conference on pattern recognition}, pp.\  2366--2369. IEEE, 2010.

\bibitem[Hu et~al.(2025)Hu, Chen, Zou, Lei, Chen, Li, Liang, Mu, Zhang, Shao, et~al.]{hu2025text2world}
Mengkang Hu, Tianxing Chen, Yude Zou, Yuheng Lei, Qiguang Chen, Ming Li, Qiwei Liang, Yao Mu, Hongyuan Zhang, Wenqi Shao, et~al.
\newblock Text2world: Benchmarking large language models for symbolic world model generation.
\newblock In \emph{ICLR 2025 Workshop on World Models: Understanding, Modelling and Scaling}, 2025.

\bibitem[Huang et~al.(2025)Huang, Li, He, Zhou, and Shechtman]{Huang2025SelfFB}
Xun Huang, Zhengqi Li, Guande He, Mingyuan Zhou, and Eli Shechtman.
\newblock Self forcing: Bridging the train-test gap in autoregressive video diffusion.
\newblock \emph{ArXiv}, abs/2506.08009, 2025.

\bibitem[Huang et~al.(2024)Huang, He, Yu, Zhang, Si, Jiang, Zhang, Wu, Jin, Chanpaisit, et~al.]{huang2024vbench}
Ziqi Huang, Yinan He, Jiashuo Yu, Fan Zhang, Chenyang Si, Yuming Jiang, Yuanhan Zhang, Tianxing Wu, Qingyang Jin, Nattapol Chanpaisit, et~al.
\newblock Vbench: Comprehensive benchmark suite for video generative models.
\newblock In \emph{Proceedings of the IEEE/CVF Conference on Computer Vision and Pattern Recognition}, pp.\  21807--21818, 2024.

\bibitem[Kaelbling et~al.(1998)Kaelbling, Littman, and Cassandra]{pomdp}
Leslie~Pack Kaelbling, Michael~L. Littman, and Anthony~R. Cassandra.
\newblock Planning and acting in partially observable stochastic domains.
\newblock \emph{Artif. Intell.}, 101\penalty0 (1–2):\penalty0 99–134, May 1998.
\newblock ISSN 0004-3702.

\bibitem[Krizhevsky et~al.(2012)Krizhevsky, Sutskever, and Hinton]{alexnet}
Alex Krizhevsky, Ilya Sutskever, and Geoffrey~E Hinton.
\newblock Imagenet classification with deep convolutional neural networks.
\newblock In \emph{Advances in Neural Information Processing Systems}, 2012.

\bibitem[Li et~al.(2025)Li, Hsu, Gu, Mees, Pertsch, Walke, Fu, Lunawat, Sieh, Kirmani, et~al.]{li2025evaluating}
Xuanlin Li, Kyle Hsu, Jiayuan Gu, Oier Mees, Karl Pertsch, Homer~Rich Walke, Chuyuan Fu, Ishikaa Lunawat, Isabel Sieh, Sean Kirmani, et~al.
\newblock Evaluating real-world robot manipulation policies in simulation.
\newblock In \emph{Conference on Robot Learning}, pp.\  3705--3728. PMLR, 2025.

\bibitem[Lin et~al.(2024)Lin, Liu, Li, and Yang]{lin2024common}
Shanchuan Lin, Bingchen Liu, Jiashi Li, and Xiao Yang.
\newblock Common diffusion noise schedules and sample steps are flawed.
\newblock In \emph{2024 IEEE/CVF Winter Conference on Applications of Computer Vision (WACV)}, pp.\  5392--5399. IEEE Computer Society, 2024.

\bibitem[Meta(2025)]{vjepa2}
Meta.
\newblock V-jepa~2: Self-supervised video models enable understanding, prediction and planning.
\newblock \emph{arXiv preprint arXiv:2506.09985}, 2025.

\bibitem[NVIDIA et~al.(2025)]{nvidia2025cosmos}
NVIDIA et~al.
\newblock Cosmos world foundation model platform for physical ai.
\newblock \emph{arXiv preprint arXiv: 2501.03575}, 2025.

\bibitem[OpenAI(2024)]{sora}
OpenAI.
\newblock Video generation models as world simulators.
\newblock \url{https://openai.com/index/video-generation-models-as-world-simulators}, 2024.

\bibitem[OpenAI et~al.(2023)]{openai2023gpt04}
OpenAI et~al.
\newblock Gpt-4 technical report.
\newblock \emph{arXiv preprint arXiv: 2303.08774}, 2023.

\bibitem[PAN-Team(2025)]{pan}
PAN-Team.
\newblock Pan: A world model for general, interactable, and long-horizon world simulation.
\newblock \emph{arXiv preprint arXiv: 2511.09057}, 2025.

\bibitem[Pearce \& Zhu(2022)Pearce and Zhu]{pearce2022counter}
Tim Pearce and Jun Zhu.
\newblock Counter-strike deathmatch with large-scale behavioural cloning.
\newblock In \emph{2022 IEEE Conference on Games (CoG)}, pp.\  104--111. IEEE, 2022.

\bibitem[Pearl(2009)]{pearl2009causality}
Judea Pearl.
\newblock \emph{Causality}.
\newblock Cambridge university press, 2009.

\bibitem[Peebles \& Xie(2023)Peebles and Xie]{peebles2023scalable}
William Peebles and Saining Xie.
\newblock Scalable diffusion models with transformers.
\newblock In \emph{2023 IEEE/CVF International Conference on Computer Vision (ICCV)}, pp.\  4172--4182. IEEE, 2023.

\bibitem[Peng et~al.(2025)Peng, Zheng, Shen, Young, Guo, Wang, Xu, Liu, Jiang, Li, Wang, Ye, Ren, Ma, Liang, Lian, Wu, Zhong, Li, Gong, Lei, Cheng, Zhang, Li, Zhang, Hu, Huang, Wang, Zhao, Wang, Wei, and You]{peng2025open0sora}
Xiangyu Peng, Zangwei Zheng, Chenhui Shen, Tom Young, Xinying Guo, Binluo Wang, Hang Xu, Hongxin Liu, Mingyan Jiang, Wenjun Li, Yuhui Wang, Anbang Ye, Gang Ren, Qianran Ma, Wanying Liang, Xiang Lian, Xiwen Wu, Yuting Zhong, Zhuangyan Li, Chaoyu Gong, Guojun Lei, Leijun Cheng, Limin Zhang, Minghao Li, Ruijie Zhang, Silan Hu, Shijie Huang, Xiaokang Wang, Yuanheng Zhao, Yuqi Wang, Ziang Wei, and Yang You.
\newblock Open-sora 2.0: Training a commercial-level video generation model in \$ 200k.
\newblock \emph{arXiv preprint arXiv: 2503.09642}, 2025.

\bibitem[Radford et~al.(2021)Radford, Kim, Hallacy, Ramesh, Goh, Agarwal, Sastry, Askell, Mishkin, Clark, et~al.]{radford2021learning}
Alec Radford, Jong~Wook Kim, Chris Hallacy, Aditya Ramesh, Gabriel Goh, Sandhini Agarwal, Girish Sastry, Amanda Askell, Pamela Mishkin, Jack Clark, et~al.
\newblock Learning transferable visual models from natural language supervision.
\newblock In \emph{International conference on machine learning}, pp.\  8748--8763. PmLR, 2021.

\bibitem[Rigter et~al.(2024)Rigter, Gupta, Hilmkil, and Ma]{rigteravid}
Marc Rigter, Tarun Gupta, Agrin Hilmkil, and Chao Ma.
\newblock Avid: Adapting video diffusion models to world models.
\newblock In \emph{Reinforcement Learning Conference}, 2024.

\bibitem[{Riot Games}(2020)]{riot2020valorant}
{Riot Games}.
\newblock Valorant.
\newblock \url{https://playvalorant.com/}, 2020.
\newblock Video game.

\bibitem[Rombach et~al.(2022)Rombach, Blattmann, Lorenz, Esser, and Ommer]{rombach2022high}
Robin Rombach, Andreas Blattmann, Dominik Lorenz, Patrick Esser, and Bj{\"o}rn Ommer.
\newblock High-resolution image synthesis with latent diffusion models.
\newblock In \emph{Proceedings of the IEEE/CVF conference on computer vision and pattern recognition}, pp.\  10684--10695, 2022.

\bibitem[Shah et~al.(2022)Shah, Eysenbach, Rhinehart, and Levine]{shah2022rapid}
Dhruv Shah, Benjamin Eysenbach, Nicholas Rhinehart, and Sergey Levine.
\newblock Rapid exploration for open-world navigation with latent goal models.
\newblock In \emph{Conference on Robot Learning}, pp.\  674--684. PMLR, 2022.

\bibitem[Simonyan \& Zisserman(2015)Simonyan and Zisserman]{vgg}
K.~Simonyan and A.~Zisserman.
\newblock Very deep convolutional networks for large-scale image recognition.
\newblock In \emph{International Conference on Learning Representations}, 2015.

\bibitem[Song et~al.(2021{\natexlab{a}})Song, Meng, and Ermon]{songdenoising}
Jiaming Song, Chenlin Meng, and Stefano Ermon.
\newblock Denoising diffusion implicit models.
\newblock In \emph{International Conference on Learning Representations}, 2021{\natexlab{a}}.

\bibitem[Song et~al.(2021{\natexlab{b}})Song, Sohl-Dickstein, Kingma, Kumar, Ermon, and Poole]{song2021scorebased}
Yang Song, Jascha Sohl-Dickstein, Diederik~P Kingma, Abhishek Kumar, Stefano Ermon, and Ben Poole.
\newblock Score-based generative modeling through stochastic differential equations.
\newblock In \emph{International Conference on Learning Representations}, 2021{\natexlab{b}}.

\bibitem[Song et~al.(2023)Song, Dhariwal, Chen, and Sutskever]{song2023consistency}
Yang Song, Prafulla Dhariwal, Mark Chen, and Ilya Sutskever.
\newblock Consistency models.
\newblock In \emph{International Conference on Machine Learning}, pp.\  32211--32252. PMLR, 2023.

\bibitem[Unterthiner et~al.(2018)Unterthiner, Van~Steenkiste, Kurach, Marinier, Michalski, and Gelly]{unterthiner2018towards}
Thomas Unterthiner, Sjoerd Van~Steenkiste, Karol Kurach, Raphael Marinier, Marcin Michalski, and Sylvain Gelly.
\newblock Towards accurate generative models of video: A new metric \& challenges.
\newblock \emph{arXiv preprint arXiv:1812.01717}, 2018.

\bibitem[Wang et~al.(2024{\natexlab{a}})Wang, Todd, Xiao, Yuan, C{\^o}t{\'e}, Clark, and Jansen]{wang2024can}
Ruoyao Wang, Graham Todd, Ziang Xiao, Xingdi Yuan, Marc-Alexandre C{\^o}t{\'e}, Peter Clark, and Peter Jansen.
\newblock Can language models serve as text-based world simulators?
\newblock In \emph{Proceedings of the 62nd Annual Meeting of the Association for Computational Linguistics (Volume 2: Short Papers)}, pp.\  1--17, 2024{\natexlab{a}}.

\bibitem[Wang et~al.(2024{\natexlab{b}})Wang, He, Fan, Li, Chen, and Zhang]{wang2024driving}
Yuqi Wang, Jiawei He, Lue Fan, Hongxin Li, Yuntao Chen, and Zhaoxiang Zhang.
\newblock Driving into the future: Multiview visual forecasting and planning with world model for autonomous driving.
\newblock In \emph{CVPR}, 2024{\natexlab{b}}.

\bibitem[Wang et~al.(2004)Wang, Bovik, Sheikh, and Simoncelli]{wang2004image}
Zhou Wang, Alan~C Bovik, Hamid~R Sheikh, and Eero~P Simoncelli.
\newblock Image quality assessment: from error visibility to structural similarity.
\newblock \emph{IEEE transactions on image processing}, 13\penalty0 (4):\penalty0 600--612, 2004.

\bibitem[Wu et~al.(2024)Wu, Yin, Feng, He, Li, Hao, and Long]{wu2024ivideogpt}
Jialong Wu, Shaofeng Yin, Ningya Feng, Xu~He, Dong Li, Jianye Hao, and Mingsheng Long.
\newblock ivideogpt: Interactive videogpts are scalable world models.
\newblock In \emph{Advances in Neural Information Processing Systems}, 2024.

\bibitem[Wu et~al.(2025)Wu, Yin, Feng, and Long]{wu2025rlvr}
Jialong Wu, Shaofeng Yin, Ningya Feng, and Mingsheng Long.
\newblock Rlvr-world: Training world models with reinforcement learning.
\newblock In \emph{Advances in Neural Information Processing Systems}, 2025.

\bibitem[Xing et~al.(2024)Xing, Xia, Zhang, Chen, Yu, Liu, Liu, Wang, Shan, and Wong]{xing2024dynamicrafter}
Jinbo Xing, Menghan Xia, Yong Zhang, Hao Chen, Wangbo Yu, Hanyuan Liu, Gongye Liu, Xintao Wang, Ying Shan, and Tien-Tsin Wong.
\newblock Dynamicrafter: Animating open-domain images with video diffusion priors.
\newblock In \emph{ECCV (46)}, 2024.

\bibitem[Yang et~al.(2024)Yang, Du, Ghasemipour, Tompson, Kaelbling, Schuurmans, and Abbeel]{yang2023learning}
Sherry Yang, Yilun Du, Kamyar Ghasemipour, Jonathan Tompson, Leslie Kaelbling, Dale Schuurmans, and Pieter Abbeel.
\newblock Learning interactive real-world simulators.
\newblock In \emph{International Conference on Learning Representations}, 2024.

\bibitem[Yang et~al.(2025)Yang, Huang, Chu, Xiao, Zhao, Wang, Li, Xie, Chen, Lu, and Chen]{yang2025longlive}
Shuai Yang, Wei Huang, Ruihang Chu, Yicheng Xiao, Yuyang Zhao, Xianbang Wang, Muyang Li, Enze Xie, Yingcong Chen, Yao Lu, and Song Hanand~Yukang Chen.
\newblock Longlive: Real-time interactive long video generation.
\newblock \emph{arXiv preprint arXiv:2509.22622}, 2025.

\bibitem[Yin et~al.(2025)Yin, Wu, Huang, Su, He, Hao, and Long]{yin2025trajectory}
Shaofeng Yin, Jialong Wu, Siqiao Huang, Xingjian Su, Xu~He, Jianye Hao, and Mingsheng Long.
\newblock Trajectory world models for heterogeneous environments.
\newblock In \emph{International Conference on Machine Learning}, 2025.

\bibitem[Yin et~al.(2024)Yin, Zhang, Zhang, Freeman, Durand, Shechtman, and Huang]{Yin2024FromSB}
Tianwei Yin, Qiang Zhang, Richard Zhang, William~T. Freeman, Fr{\'e}do Durand, Eli Shechtman, and Xun Huang.
\newblock From slow bidirectional to fast autoregressive video diffusion models.
\newblock \emph{2025 IEEE/CVF Conference on Computer Vision and Pattern Recognition (CVPR)}, pp.\  22963--22974, 2024.
\newblock URL \url{https://api.semanticscholar.org/CorpusID:274610175}.

\bibitem[Yu et~al.(2025)Yu, Qin, Wang, Wan, Zhang, and Liu]{yu2025gamefactory0}
Jiwen Yu, Yiran Qin, Xintao Wang, Pengfei Wan, Di~Zhang, and Xihui Liu.
\newblock Gamefactory: Creating new games with generative interactive videos.
\newblock \emph{arXiv preprint arXiv: 2501.08325}, 2025.

\bibitem[Yuan et~al.(2025)Yuan, Wang, and Waslander]{yuan2025opennav0}
Mingfeng Yuan, Letian Wang, and Steven~L. Waslander.
\newblock Opennav: Open-world navigation with multimodal large language models.
\newblock \emph{arXiv preprint arXiv: 2507.18033}, 2025.

\bibitem[Zhang et~al.(2024)Zhang, Xiong, Yang, Casas, Hu, and Urtasun]{DBLP:conf/iclr/ZhangX000U24}
Lunjun Zhang, Yuwen Xiong, Ze~Yang, Sergio Casas, Rui Hu, and Raquel Urtasun.
\newblock Copilot4d: Learning unsupervised world models for autonomous driving via discrete diffusion.
\newblock In \emph{The Twelfth International Conference on Learning Representations, {International Conference on Learning Representations} 2024, Vienna, Austria, May 7-11, 2024}. OpenReview.net, 2024.
\newblock URL \url{https://openreview.net/forum?id=Psl75UCoZM}.

\bibitem[Zhang et~al.(2023)Zhang, Rao, and Agrawala]{zhang2023adding}
Lvmin Zhang, Anyi Rao, and Maneesh Agrawala.
\newblock Adding conditional control to text-to-image diffusion models.
\newblock In \emph{Proceedings of the IEEE/CVF international conference on computer vision}, pp.\  3836--3847, 2023.

\bibitem[Zhang et~al.(2018)Zhang, Isola, Efros, Shechtman, and Wang]{zhang2018unreasonable}
Richard Zhang, Phillip Isola, Alexei~A Efros, Eli Shechtman, and Oliver Wang.
\newblock The unreasonable effectiveness of deep features as a perceptual metric.
\newblock In \emph{Proceedings of the IEEE conference on computer vision and pattern recognition}, pp.\  586--595, 2018.

\bibitem[Zhou et~al.(2024)Zhou, Du, Chen, Li, Yeung, and Gan]{zhou2024robodreamer}
Siyuan Zhou, Yilun Du, Jiaben Chen, Yandong Li, Dit-Yan Yeung, and Chuang Gan.
\newblock Robodreamer: Learning compositional world models for robot imagination.
\newblock In \emph{International Conference on Machine Learning}, pp.\  61885--61896. PMLR, 2024.

\bibitem[Zhu et~al.(2025)Zhu, Yu, Feng, Burchfiel, Shah, and Gupta]{zhu2025uwm}
Chuning Zhu, Raymond Yu, Siyuan Feng, Benjamin Burchfiel, Paarth Shah, and Abhishek Gupta.
\newblock Unified world models: Coupling video and action diffusion for pretraining on large robotic datasets.
\newblock In \emph{Proceedings of Robotics: Science and Systems (RSS)}, 2025.

\end{thebibliography}
\end{document}